\theoremstyle{definition}
\newtheorem{theorem}{\textbf{Theorem}}
\newtheorem{proposition}{\textbf{Proposition}}
\newtheorem{lemma}{\textit{Lemma}}
\newtheorem{definition}{\textbf{Definition}}
\theoremstyle{definition}
\newtheorem{remark}{\textbf{Remark}}
\begin{document}

	\title{SlimCaching: Edge Caching of Mixture-of-Experts for Distributed Inference}
	
	\author{Qian Chen, \IEEEmembership{Member, IEEE}, Xianhao Chen, \IEEEmembership{Member, IEEE}, and Kaibin Huang, \IEEEmembership{Fellow,~IEEE}
    \thanks{
    Q. Chen, X. Chen, and K. Huang are with the Department of Electrical and
Computer Engineering, The University of Hong Kong, Hong Kong (Email: qchen@eee.hku.hk, xcheneee@hku.hk, and huangkb@hku.hk). Corresponding authors: X. Chen and K. Huang.}
		%\thanks{The authors are with XXXXXXX.}
	}

	\markboth{Journal of \LaTeX\ Class Files}%
	{Shell \MakeLowercase{\textit{et al.}}: Bare Advanced Demo of IEEEtran.cls for IEEE Computer Society Journals}

	% make the title area
	
	% As a general rule, do not put math, special symbols or citations
	% in the abstract or keywords.
	\IEEEtitleabstractindextext{
		\begin{abstract}

Mixture-of-Experts (MoE) models improve the scalability of large language models (LLMs) by activating only a small subset of relevant experts per input. However, the sheer number of expert networks in an MoE model introduces a significant storage/memory burden for an edge device. To address this challenge, we consider a scenario where experts are dispersed across an edge network for distributed inference. Based on the popular Top-$K$ expert selection strategy, we formulate a latency minimization problem by optimizing expert caching on edge servers under storage constraints. When $K=1$, the problem reduces to a monotone submodular maximization problem with knapsack constraints, for which we design a greedy-based algorithm with a $(1 - 1/e)$-approximation guarantee. For the general case where $K\geq1$, expert co-activation within the same MoE layer introduces non-submodularity, which renders greedy methods ineffective. To tackle this issue, we propose a successive greedy decomposition method to decompose the original problem into a series of subproblems, with each being solved by a dynamic programming approach. Furthermore, we design an accelerated algorithm based on the max-convolution technique to obtain the approximate solution with a provable guarantee in polynomial time. Simulation results on various MoE models demonstrate that our method significantly reduces inference latency compared to existing baselines.
		\end{abstract}
		
		% Note that keywords are not normally used for peerreview papers.
		\begin{IEEEkeywords}
			Edge AI, large language models, mixture-of-experts, expert caching, edge inference.
	\end{IEEEkeywords}}
	
	% make the title area
	\maketitle

	\IEEEdisplaynontitleabstractindextext
	\IEEEpeerreviewmaketitle
	
	% 1.5 pages
	\section{Introduction}\label{sec:intro}
Large language models (LLMs), such as GPT~\cite{openai2022chatgpt} and LLaMA~\cite{touvron2023llama}, have delivered remarkable performance across various tasks, including many privacy-sensitive and real-time applications~\cite{10976336,10764961,10845862}. Driven by the growing demand for privacy preservation and low-latency responses, there is a pivotal trend toward deploying LLMs at the network edge~\cite{xu2024device,lin2024splitlora}. Reflecting this trend, leading technology companies such as Qualcomm, Huawei, and Apple have already integrated on-device LLMs into consumer-grade mobile devices~\cite{10835069,11152695}. This movement toward edge deployment is expected to shape the future of LLMs and fundamentally transform the landscape of AI-powered mobile applications.

Despite these advances, LLMs at the network edge face inherent performance limitations compared with the cloud-based counterparts~\cite{10.1145/3719664,lin2025hsplitlora}. This is because the capabilities of LLMs generally scale with model size and computing resources~\cite{kaplan2020scaling}, making it challenging to achieve high performance under edge resource constraints. To overcome the scaling challenges, the Mixture-of-Experts (MoE) architecture has emerged as a predominant solution~\cite{jiang2024mixtral}, which has been adopted in numerous state-of-the-art LLMs such as Phi-3.5-MoE, DeepSeek-V3, and Hunyuan-Large~\cite{10937907}. Specifically, MoE models replace each \textit{dense} transformer block with a massive set of expert networks (e.g., feedforward networks (FFNs)), among which only a subset is activated for each input token, as visualized in Fig. \ref{fig:visualization}. This paradigm allows LLMs to scale in size without a proportional increase in computation for each inference, thereby enabling an effective balance between model performance and computational efficiency -- an essential requirement for deployment at the network edge.

However, while MoE architectures substantially reduce inference workload, they pose significant \textit{storage/memory} challenges for edge devices. For instance, the Switch Transformer architecture allows each MoE layer to host up to hundreds of experts, leading to an MoE model up to 65 times larger than a FLOPs-equivalent dense T5 model, from 446 MB to 29.4 GB~\cite{fedus2022switch}. Such an explosion in parameters makes them ill-suited for edge devices with limited storage capacity~\cite{10630945,10746555}, such as the iPhone 16 with only 128 GB basic storage. In practice, only a small fraction of parameters can be cached locally on mobile devices, thereby constraining the scale of MoE models for on-device deployment.

To mitigate device-side burdens, split inference (SI) frameworks have emerged as a potential solution.
A representative design is the U-shaped SI framework, which stores the head and tail layers at the user and executes the body layers at the edge server or the cloud~\cite{10579544,10107635}. Although this framework keeps raw inputs and final outputs local to realize privacy benefits, each token incurs a fixed communication cost equivalent to twice the size of its hidden-state vector due to one upload to and one download from the server, which is a substantial overhead for long-context tasks. Moreover, since LLMs typically employ Transformer blocks that share the same architecture and hidden dimension, this per-token communication cost remains essentially constant under this U-shaped SI scheme, regardless of how the model layers are allocated between the user and server. When the user and edge server cannot store all model parameters, each token incurs an additional and excessive communication latency. Under such conditions, the resulting communication pattern is equivalent to that of cloud MoE inference, where all MoE layers are executed at the cloud and each hidden state must traverse both the user–edge link and the edge–cloud backhaul.

Considering that the majority of parameters in MoE models are contributed by expert networks\footnote{Considering the Switch Transformer model family, when the number of expert networks per MoE layer increases from 8 to 32, the proportion of the MoE model’s total data size occupied by the expert networks rises from 82.3\% to 94.4\%.}, a more effective strategy is to distribute experts, rather than layers, across users and edge networks. Empirical studies show that deeper MoE layers exhibit highly skewed activation patterns, where only a small subset of experts is consistently selected while most remain rarely activated during inference. This implies that each user can satisfy most computing needs by caching a small set of preferred experts locally.
To this end, we propose a framework called di\underline{s}tributed \underline{l}ow-latency \underline{i}nference in \underline{M}oE with expert caching (SlimCaching), where each device stores the user’s preferred experts alongside non-expert components, while the wireless edge network caches the remaining experts of various MoE models.
When the required experts of an MoE layer are locally available, the layer can be executed entirely on the user device without any communication.
When desired experts are not locally available, users route the hidden states of their tokens to nearby wireless edge servers with the corresponding experts for inference. This distributed inference framework for MoE models offers several salient advantages: \textbf{1) Privacy:} local data and final predictions remain on the user side with privacy benefits, akin to on-device inference, with only intermediate hidden states uploaded to edge servers; \textbf{2) Storage/memory efficiency:} each user only stores minimal, ``slim'' version of the MoE models, consisting of frequently activated experts and all non-expert components, significantly reducing local storage and memory costs; \textbf{3) Communication efficiency:} compared with U-shaped SI schemes, SlimCaching reduces communication traffic, as tokens can be processed entirely locally when all activated experts are available on edge devices.
As validated in Fig. \ref{fig:commcost_compare}, SlimCaching outperforms U-shaped SI in terms of communication efficiency, with the performance gap widening as user storage capacity increases.
By contrast, the communication latency of U-shaped SI remains constant, since each token incurs an excessive communication latency once cloud access is triggered, which is also the case for cloud MoE inference.

	\begin{figure}[t]
		\centering
		\includegraphics[width = 0.3\textwidth]{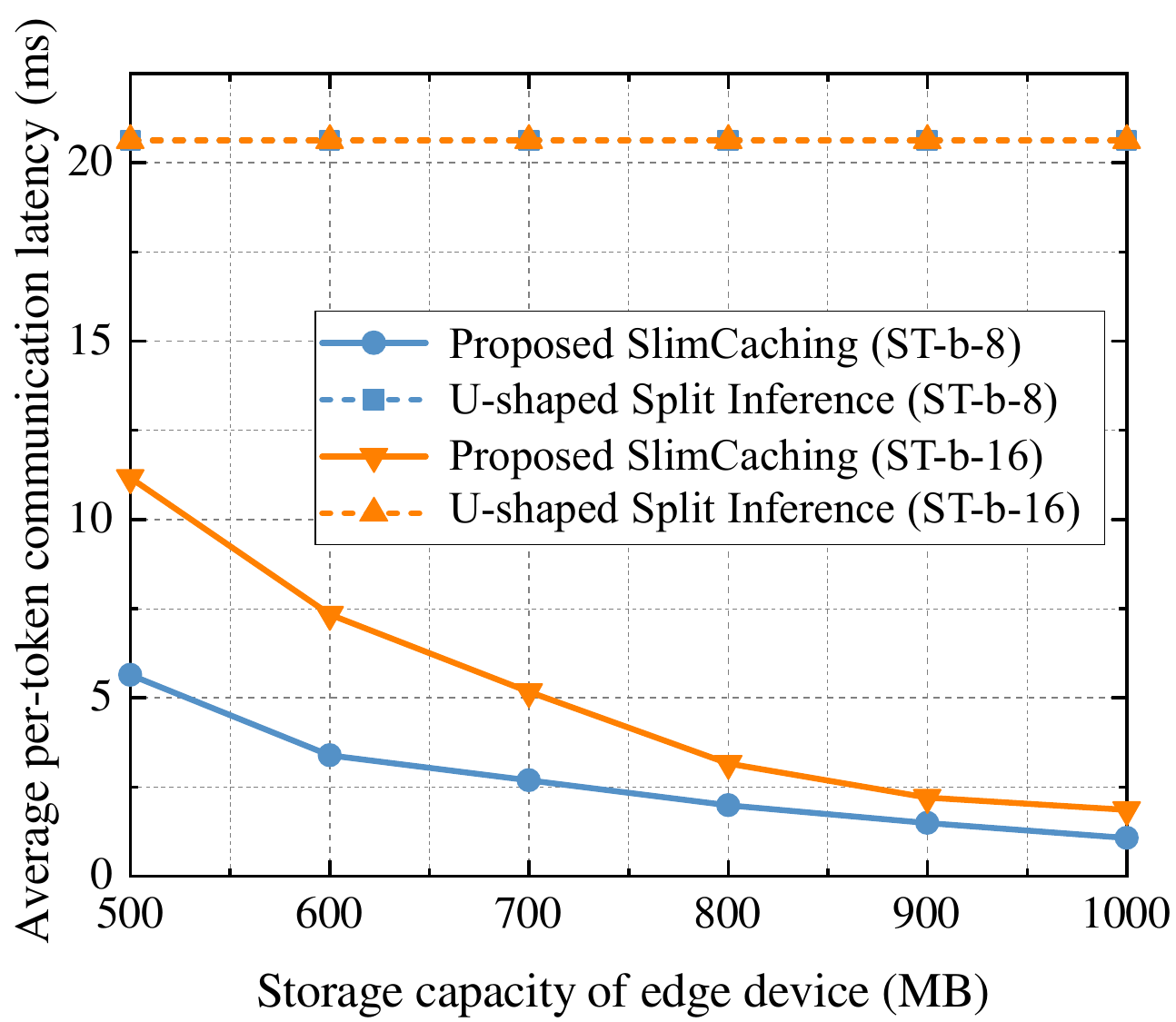}
		\caption{Comparison of the average per-token communication latency of the proposed SlimCaching framework and the U-shaped SI scheme across different device storage capacities in a scenario consisting of a single user, a single edge server, and the cloud. The expert activation statistics are computed from the prompts of the Visual Question Answering (VQA) v2 dataset, and “ST-b-X” denotes a Switch Transformer–based MoE model with X experts per MoE layer. The storage capacity of the edge server is set to 1.5 GB and other simulation parameters follow the settings described in Section \ref{sec:experiment}.\label{fig:commcost_compare}}
	\end{figure}

\begin{figure}[t]
    \centering
    \includegraphics[width = 0.4\textwidth]{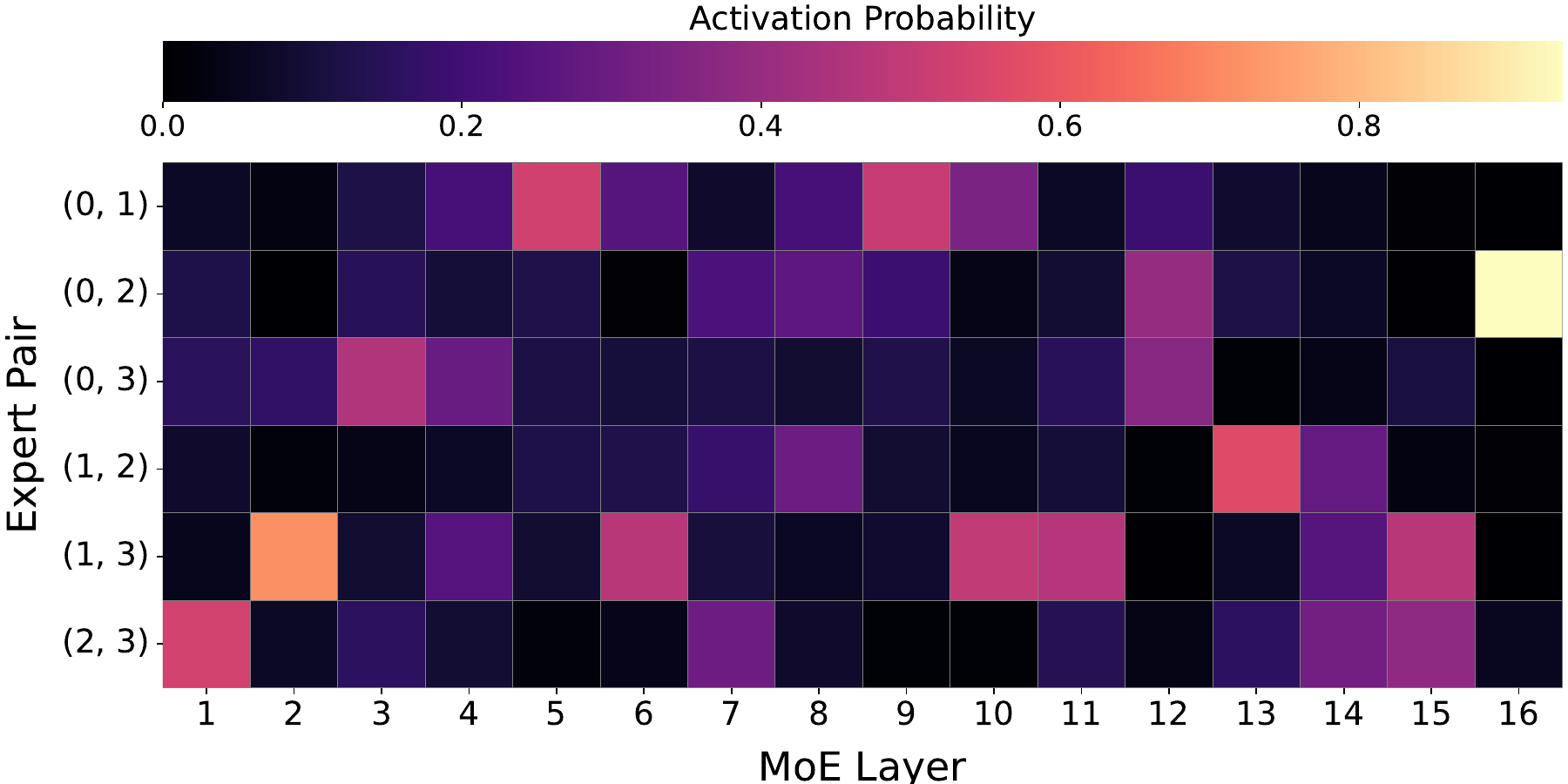}
    \caption{Visualization of activated experts in different MoE layers of the text part in MoE-LLaVA-Phi2-2.7B-4e with Top-2 strategy~\cite{11353922} under the Science Question Answering (SQA) dataset~\cite{lu2022learn}.}
    \label{fig:visualization}
\end{figure}

The proposed paradigm above raises a fundamental research question: \textit{Given the popularity of experts, how can they be optimally placed across distributed storage-constrained edge servers to enable latency-efficient MoE inference?} At the first glance, the problem falls into traditional edge caching problems, which determine the placement of content on edge servers based on their content popularity. Nonetheless, key differences exist. In classical content caching or dense model placement, cached items are independently retrievable~\cite{10433726,10402004,10219010}. In contrast, the activated experts within the same MoE layer exhibit a strong correlation since the model activates $K$ experts per input (typically based on the Top-$K$ strategy). In particular, when multiple selected experts at the same layer are colocated on a single server, they share a single uploaded input hidden state, but each activated expert generates a distinct output that must be returned. As a result, the overall layer latency is no longer a simple linear summation of individual expert latencies. The coupling effect among the selected experts introduces non-submodularity into the optimization problem, making it significantly more challenging. Consequently, existing delay-optimal content placement schemes, which are often based on submodular optimization and greedy procedures~\cite{6600983}, are no longer effective for MoE placement in terms of theoretical approximation guarantees when $K > 1$.

To address the challenging problem, in this paper, we design a SlimCaching framework under Top-$K$ strategy.
Specifically, given the limited storage capacity of edge servers, we aim to determine optimal expert placement strategies that minimize the average inference latency across all users. The main contributions of this work are summarized as follows:
\begin{itemize}
    \item  
    We define a novel expert caching problem tailored for distributed MoE inference. Considering a multi-edge system with cooperative caching between edge devices and edge servers, we formulate a combinatorial optimization problem to minimize the average inference latency subject to storage capacity constraints. 
    We identify that the resulting problem is a monotone submodular maximization with multiple knapsack constraints when $K = 1$, and becomes a monotone but non-submodular and non-supermodular maximization problem when $K > 1$ due to expert dependencies.

    \item 
For the case of $K = 1$, we develop a greedy-based algorithm that achieves a $(1 - 1/e)$-approximation guarantee. For the more general setting with $K \geq 1$, we propose a successive greedy decomposition approach that reformulates the problem into a sequence of subproblems, each solvable via a dynamic programming (DP)-based algorithm. The proposed algorithm attains a $(1 - \kappa_g)/2$-approximate global solution in polynomial time, where $\kappa_g$ is the supermodular curvature. Furthermore, we propose an accelerated algorithm considering the uniform expert sizes within each MoE model.

    \item We conduct extensive experiments on SQA and Visual Question Answering (VQA)-v2 datasets using diverse MoE models with varying Top-$K$ configurations. The results demonstrate that the proposed method consistently outperforms traditional greedy baselines in terms of both latency and computational efficiency.

\end{itemize}

The rest of this paper is organized as follows. We review the related works in Section \ref{sec:related_works}. Then, we provide the system models in Section \ref{sec:system_model}, including the expert activation procedure during MoE inference, expert caching model, and per-token inference latency. Based on the activation probability of experts, we formulate a latency-minimization optimization problem and characterize its structural property in Section \ref{sec:formulation}. In Section \ref{sec:proposed_alg}, we develop the algorithms for the special case $K=1$ and for the general case $K \geq 1$. Experimental results are presented in Section \ref{sec:experiment}, followed by concluding remarks in Section \ref{sec:conclusion}.

\section{Related Works}\label{sec:related_works}

The relevant research works on MoE and its edge deployment can be summarized as follows.

1) \textit{Expert selection}: 
For MoE-based models, expert selection plays a central role in determining both inference latency and accuracy. The most popular method is Top-$K$ expert selection, where, for each input, the $K$ experts with the highest routing probabilities are chosen to process the data~\cite{out_sparse_moe,gshard_iclr}. There are also dynamic expert selection strategies, where a threshold is defined such that experts are selected if the cumulative routing probabilities exceed the threshold~\cite{huang-etal-2024-harder}. In this work, we adopt the widely used Top-$K$ expert selection to study the expert placement problem in edge networks.

\textit{2) In-memory expert caching:} One of the main bottlenecks in MoE inference arises from the high memory overhead associated with storing a large number of expert networks. One usual practice is on-demand expert fetching, where inactive experts are kept in low-tier memory and only loaded into GPU memory when selected. However, since the router must first determine the Top-$K$ experts before initiating expert loading, the fetching process is fully serialized with computation, causing substantial latency. To address this issue, an expert prefetching mechanism is developed in \cite{adapmoe_iccad,10906629}, where a transformer-based routing path predictor is trained offline to predict expert activation for each input token in a single pass. The predicted experts are then prefetched during the computation of previous layers. Furthermore, in \cite{he2024expertflow,cai2024textitreadme}, by decoupling the gating router from the MoE architecture, tokens with similar activation patterns can be grouped into micro-batches, further reducing fetching latency.

3) \textit{In-network expert caching:} Due to the huge sizes and computation burden, experts sometimes can be distributed over edge networks. By considering task relevance and wireless channel conditions, an energy-aware expert selection framework is developed, where experts are dispersed in wireless edge networks for distributed inference~\cite{qin2025optimal}. In this work, however, we focus on expert placement in edge networks based on the widely used Top-$K$ expert selection mechanism. To our best knowledge, the expert placement problem has not been investigated in the literature.

Expert placement is a key problem for in-network expert caching, i.e., which experts should be placed on which edge server. While caching has been extensively investigated in areas such as content delivery and model caching, expert caching in MoE models exhibits unique characteristics that render existing solutions inapplicable. In content caching, most prior work assumes independence among content items~\cite{10077798}, and the objective is typically to maximize a submodular utility function subject to matroid constraints~\cite{6600983}. This setting enables the application of classical greedy algorithms with a well-known $(1-1/e)$-approximation guarantee. In cases where dependencies exist, such as in caching interdependent tasks modeled by directed acyclic graphs, content placement decisions are often made sequentially, with each task requiring a single function to be cached and executed~\cite{10007839,10496897,9511632,10227271}.
In model caching, a recent approach, TrimCaching~\cite{10630945,qu2024trimcaching}, exploits parameter sharing across AI models to improve storage efficiency. By treating model layers as cacheable units, the caching problem can be formulated as a supermodular maximization problem under multiple knapsack constraints.
However, it assumes that a cached model is stored in its entirety, which is infeasible for large-scale MoE models due to their considerable parameter sizes. Expert caching in MoE models poses fundamentally different challenges due to the Top-$K$ expert activation mechanism, which introduces two key difficulties: 
1) \textit{Non-submodularity due to expert co-activation:} 
When $K>1$, multiple experts must be activated simultaneously for each token, leading to strong co-activation dependencies among experts. 
The dependency makes the expert caching problem violate both the submodularity and the supermodularity properties that are commonly exploited in existing caching formulations. As a result, existing methods cannot provide theoretical approximation guarantees for our expert placement problem.
2) \textit{Knapsack-type storage constraints:} 
In distributed caching problems, many prior studies assume that all items have identical sizes, which leads to simple cardinality or matroid-type constraints. In contrast, experts in practical MoE systems have heterogeneous parameter sizes across different models, and this heterogeneity induces multiple knapsack constraints on expert caching. Existing algorithms for nonsubmodular maximization do not provide approximation guarantees under multiple knapsack constraints. 
The above properties necessitate new problem formulations and optimization strategies for expert caching for distributed MoE inference.

	\begin{table}[ht]
		\caption{Summary of main notations} \label{tab:notations}
		\vspace{-10pt}
		\begin{center}
			{\footnotesize	\begin{tabular}{|c|p{4cm}|}\hline  
		\textbf{Notation} & \textbf{Definition} \\ \hline
		$ \mathcal{U}, \mathcal{N}, \mathcal{M} $&  Set of users, edge servers, and MoE models. \\ \hline
        $u, n, m$ & Index of user, edge server, and MoE model. \\ \hline
        $ \mathcal{L}_m, \mathcal{E}_m  $ & Set of MoE layers of model $m$, and the experts within each layer $\ell \in \mathcal{L}_m$. \\ \hline
        $K_m$   &  Top-$K_m$ expert activation strategy for model $m$.    \\ \hline
        $S_m^{(\ell)}, i_m^{(\ell)}$ & Expert(s) $S_m^{(\ell)}$ and a specific expert $i_m^{(\ell)} \in S_m^{(\ell)}$. \\ \hline
        $p_{u,m}, p_{u, S_m^{(\ell)}}$ &  Probability that user $u$ selects model $m$, and the probability that its token activates expert(s) $S_m^{(\ell)}$ in layer $\ell$ of model $m$.   \\ \hline
        $T_{u, S_m^{\left( \ell \right)  }}^{\mathrm{token}}, {\bar{T}}_{u,m}$ & Per-token inference latency when user $u$ requests expert(s) $S_m^{\left( \ell \right)} $. Average per-token inference latency when user $u$ requires model $m$. \\ \hline
        $r_{u,S_{m}^{(\ell)}}\left (\mathbf{X}   \right )$ & Latency reduction when user $u$ requires expert(s) $S_m^{\left( \ell \right)} $ under caching strategy $\mathbf{X}  $. \\ \hline
        $F\left( \mathbf{X} \right), {\tilde F}_n\left(\tilde{\mathbf{X}}_n\right), {\bar F}_n\left(\tilde{\mathbf{X}}_n\right)$  &  Original objective function of $\mathcal{P}1$. Objective function of edge server $n$ in subproblem $\mathcal{P}2_n$ given the caching decisions of edge servers 1 to $n-1$. 
        Transformed objective function of edge server $n$ in subproblem  $\mathcal{P}3_n$. \\ \hline
        $\left( \kappa_g\right)_n$ & Curvature of the supermodular term ${\tilde F}_{n}^{\mathrm{super} }$ in function $\tilde{F}_n\left ( \tilde{\mathbf{X}}_n  \right )$. \\ \hline
        $\ddot{\mathbf{X}}, \tilde{\mathbf{X}} ^{\ast }$ & Solution obtained by the proposed algorithm for $K \geq 1$ and the globally optimal solution of $\mathcal{P}1$. \\ \hline
			\end{tabular}}
		\end{center}
	\end{table}

	\section{System Model}\label{sec:system_model}
In this paper, we consider an MoE-based edge caching system with a set $\mathcal{N} = \left\lbrace 1, \ldots, N \right\rbrace $ of edge servers and a set $\mathcal{U} =  \left\lbrace 1, \ldots, U \right\rbrace $ of users.
The storage capacity of edge server $n \in \mathcal{N} $ is denoted by $Q_n$. 
Edge servers can directly communicate with each other via backhaul links and are also connected to the central cloud (expert model library) via a wired backhaul link. 
Each user connects to its nearest edge server and generates a task that requires an MoE architecture for inference. There are $M$ requested MoE models, and the set of models is denoted by $\mathcal{M} = \left\lbrace 1, \ldots, M \right\rbrace $. For model $m$, we consider the Top-$K_m$ expert selection mechanism, where $K_m$ is the number of activated experts within an MoE layer.
The main symbols and parameters used in this paper are summarized in Table~\ref{tab:notations} for clarity.

\subsection{Preliminaries: Expert Activation During MoE Inference}
We first briefly introduce the preliminaries of MoE inference. Fig.  \ref{fig:fig_MoE_procedure} shows the architecture of an MoE model.
	For MoE model $m \in \mathcal{M}$, the number of MoE layers is $L_m$, indexed by a set $\mathcal{L}_m = \left\{1, \ldots, L_m\right\}$. 
    Each MoE layer consists of a \textit{gate network} and a set of $E_m$ \textit{expert networks} (i.e., FFNs), and the data size of a single expert is denoted by $b_m $.  Thus, the total number of experts is given by $E = \sum_{m \in \mathcal{M}} L_m E_m$. We use $\mathcal{E}_m^{(\ell)}$ to denote the set of experts within model $m$'s MoE layer $\ell \in \mathcal{L}_m$.

	\begin{figure}[!t]
		\centering
		\includegraphics[width = 0.3\textwidth]{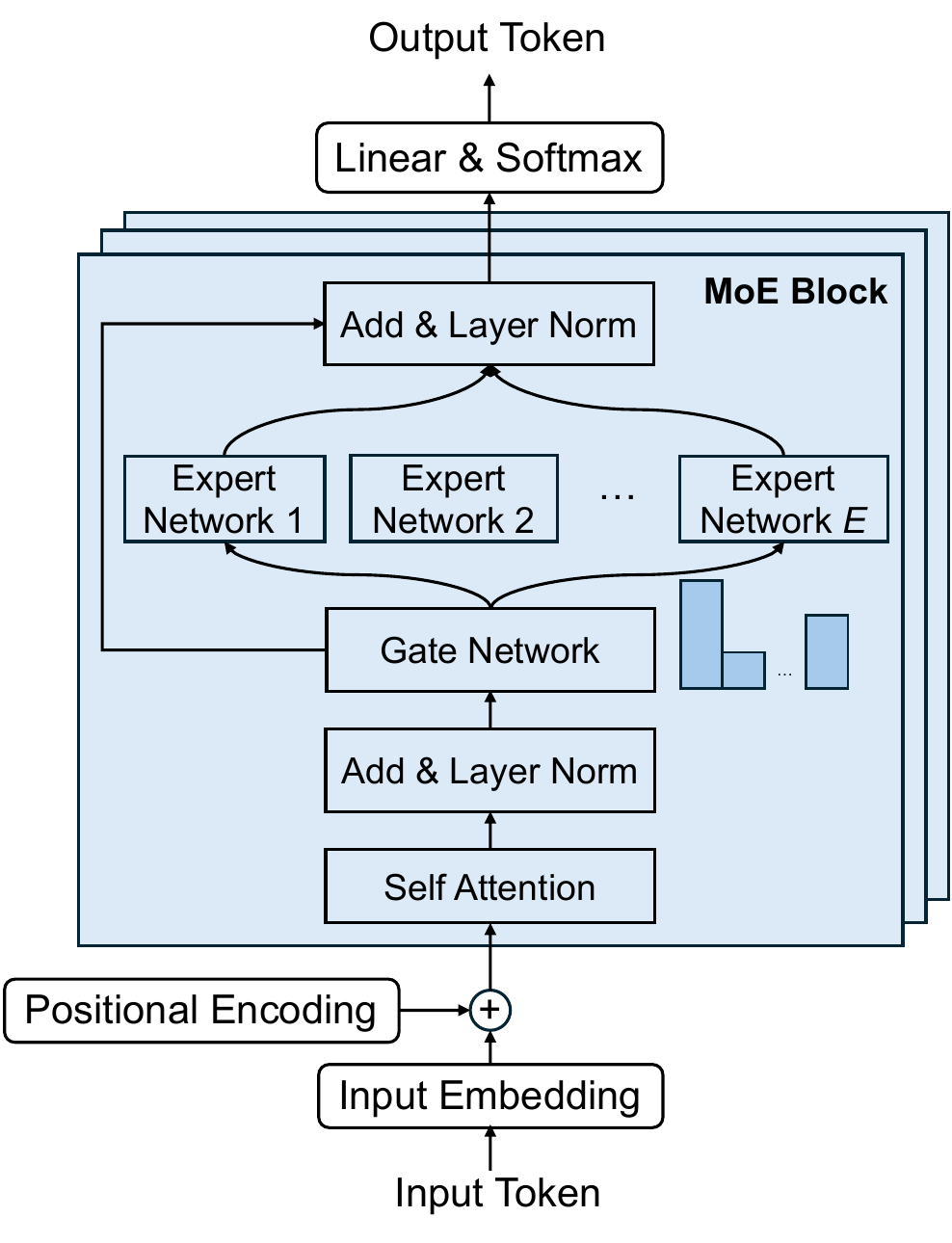}
		\caption{Illustration of an MoE architecture with $E$ expert networks in each MoE layer. \label{fig:fig_MoE_procedure}}
	\end{figure}

We take model $m$ as an example to illustrate how expert(s) are activated at the MoE layer $\ell$.
Let ${\mathbf{g}}\left ( \mathbf{s} \right )$ denote the token-to-expert affinity scores of an input $\mathbf{s}$, which represents the probability of the token selecting an expert for processing. Then, ${\mathbf{g}}\left ( \mathbf{s} \right )$ can be expressed as
\begin{equation}
    {\mathbf{g}}\left ( \mathbf{s} \right )   = \mathrm{Softmax}\left ( \mathbf{W}_r \mathbf{s}^{\mathrm{T} } \right ), 
\end{equation}
where $\mathbf{W}_r$ is the router parameter. With Top-$K_m$ routing strategy, the experts with $K_m$ highest scores will be activated. Then, the probability of expert $i$ can be normalized as follows:
	\begin{equation}
		\bar{{\mathbf{g}}} \left ( \mathbf{s}  \right )_{i} = \begin{cases}
			\frac{{\mathbf{g}}\left ( \mathbf{s} \right )_{i}}{\sum_{i' \in \mathrm{Top}K_m\left ( {\mathbf{g}}\left ( \mathbf{s}  \right ) \right )  }{\mathbf{g}}\left ( \mathbf{s} \right )_{i'}},   & \text{ if } i \in {\mathrm{Top}}K_m\left ( {\mathbf{g}}\left ( \mathbf{s}  \right ) \right ), \\
			0, & \text{ otherwise. }
		\end{cases}
	\end{equation}

Subsequently, the final output of the expert networks can be obtained by	
	${\mathbf{y}}\left ( \mathbf{s}  \right )   = \sum_{i \in {\mathcal{E}}_m^{(\ell)}} {\bar{\mathbf{g}} \left ( \mathbf{s}  \right )_{i} y_{i}\left ( \mathbf{s}  \right ) }$, where $y_{i}\left ( \mathbf{s}  \right ) $ is the output result of the selected expert $i$.

Let $\mathcal{A}_m^{(\ell)}$ denote the collection of all possible expert index sets of size $K_m$ that can be activated in MoE layer $\ell$ of model $m$, i.e., $\mathcal{A}_m^{\left( \ell\right) } = \left \{ S_m^{\left( \ell\right) } \subseteq \mathcal{E}_m^{\left( \ell\right) } \mid  \left | S_m \right | = K_m \right \} $.
For example, when $K_m=2$ and there are 3 experts in MoE layer $\ell$ of model $m$,  we have $\mathcal{A}_m^{\left( \ell\right) } = \left\{  \left\{1_m^{\left( \ell\right) },2_m^{\left( \ell\right) }   \right\} , \left\{1_m^{\left( \ell\right) },3_m^{\left( \ell\right) }   \right\}, \left\{2_m^{\left( \ell\right) },3_m^{\left( \ell\right) }   \right\} \right\}$ and one possible value of $S_m^{\left( \ell\right) }$ is $\left\{1_m^{\left( \ell\right) },2_m^{\left( \ell\right) }\right\}$.

	\subsection{Expert Caching Model}

	Given that expert networks account for the majority of MoE model's parameters, we assume that all components of MoE models except the experts are deployed on users, as shown in Fig. \ref{fig:system_model}. This strategy ensures raw input data and final predictions remain on the user side, thereby enhancing user privacy. Due to storage limitations, users can only store a limited number of experts in an MoE model. When the requested experts are not locally available, the user should send the hidden state of its token to edge servers with cached experts for inference. When the requested experts are not cached at any edge server, the user should send the hidden state to the cloud via its associated edge server, which is assumed to cache all experts. After the requested experts complete processing, the nodes return the output results to the user. Then, the processing of the next MoE layer begins. 
    
\begin{figure}[!t]
	\centering
	\includegraphics[width = 0.48\textwidth]{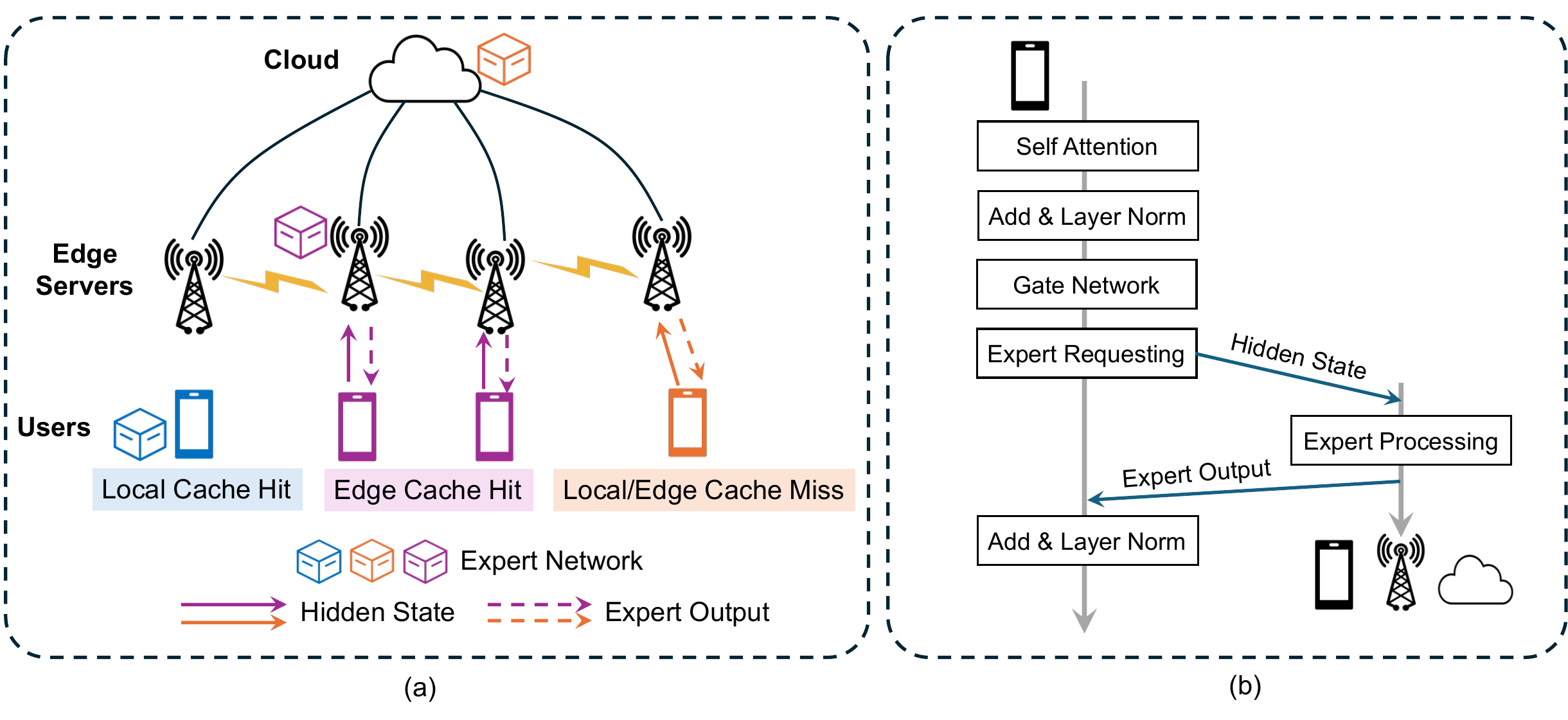}
	\caption{SlimCaching in distributed wireless systems. (a) Illustration of local cache hit, edge cache hit, and local/edge cache miss. (b) Operations of SlimCaching within an MoE layer.
    \label{fig:system_model}}
\end{figure}
    
    In this paper, we assume that the experts stored on users are predetermined based on user preferences (e.g., their most frequently used experts), whereas the placement of experts in edge networks should be optimized according to user requests and network conditions.
    Let $\rho_{u,i_m^{\left( \ell \right) }}$ denote the expert caching state of the users. If the expert with index $i$ of MoE model $m$'s layer $\ell $ is pre-cached on user $u$, $\rho_{u,i_m^{\left( \ell \right) }} = 1$ holds; otherwise, $\rho_{u,i_m^{\left( \ell \right) }} = 0$. Let $x_{n,i_m^{\left( \ell \right) }}$ denote the caching decision variable for the edge servers, with $x_{n,i_m^{\left( \ell \right) }} = 1$ indicating the expert with index $i$ of MoE model $m$'s layer $\ell $ is placed at edge server $n$ and $x_{n,i_m^{\left( \ell \right) }} = 0$ otherwise. 
    
	\subsection{Per-token Inference Latency}

To keep the problem tractable and to focus on the core structure of expert caching, this paper adopts a per-token formulation corresponding to the decoding phase, where one token is generated at each autoregressive step and expert routing occurs at this granularity.\footnote{Prefilling involves parallel processing of multiple tokens from the same user and may induce expert-level resource contention, such that per-token optimality does not directly translate to batch-level optimality. However, reducing per-token inference latency still reduces overall latency in the batch setting.} Next, we characterize the communication and computing latency needed to compute the per-token inference latency.

Let $R_{u,n}^{\rm{UL}}$ denote the expected data rate from user $u$ to edge server $n$, which can be expressed as
\begin{equation}\label{equ:datarate_UEserver_up}
		R_{u,n}^{\rm{UL}} = B_{u} \log_2 \left( 1 + \frac{P_{u,n} G_{u,n} d_{u,n}^{-\alpha}}{N_0 B_u} \right),
	\end{equation}
where $B_{u}$ is the bandwidth occupied by user $u$, $P_{u,n}$ is the transmit power from user $u$ to edge server $n$, $G_{u,n}$ is the uplink antenna-related factor, $d_{u,n}$ is the distance between user $u$ to edge server $n$, $\alpha$ is the path-loss coefficient, and $N_0$ is the noise power spectral density. Similarly, the downlink achievable rate from edge server $n$ to user $u$ is given by	
\begin{equation}
    R_{u,n}^{\rm{DL}} = B_{u} \log_2 \left( 1 + \frac{P_{n,u} G_{n,u}d_{u,n}^{-\alpha}}{N_0 B_u} \right),
\end{equation}
where $P_{n,u}$ is the transmit power from edge server $n$ to user $u$ and $G_{n,u}$ is the downlink antenna-related factor.
For the backhaul transmissions, let $R_{n,n'}$, $R_{n,\mathrm{C}}$, and $R_{\mathrm{C},n}$ denote the transmission rates from edge server $n$ to $n'$, from edge server $n$ to the cloud, and from the cloud to edge server $n$, respectively.

In this paper, we assume that each user is associated with the edge server that minimizes the sum of uplink and downlink transmission latency for transmitting one bit. Specifically, for user $u \in \mathcal{U}$, its associated edge server $n_u$ is determined as
\[n_u = \mathop{\arg\min}\limits_{n \in \mathcal{N}} \left \{ \frac{1}{R_{u,n}^{\rm{UL}}} + \frac{1}{R_{u,n}^{\rm{DL}}}  \right \}.\]

When performing MoE inference, the output tensor preserves the same dimensions as the input tensor. Thus, when using MoE model $m$ for task inference, the uplink and downlink have identical data transmission sizes, i.e., the size of hidden states of model $m$ denoted by $D_m$. When using model $m$ for inference, given the uplink data rate in (\ref{equ:datarate_UEserver_up}), the communication latency of transmitting a hidden state from user $u$ to its associated edge server $n_u$ is given by
\begin{equation}
    T_{u,n_u,m}^{\rm{UL}} = \frac{D_m}{R_{u,n_u}^{\mathrm{UL}}}.
\end{equation}

Similarly, the communication latency of transmitting a hidden state through the downlink between user $u$ and edge server $n_u$, and through the links from edge server $n$ to $n'$, from edge server $n$ to the cloud, and from the cloud to edge server $n$, can be expressed respectively as $T_{u,n_u,m}^{\rm{DL}} = \frac{D_m}{R_{u,n_u}^{\mathrm{DL}}}$, $ T_{n,n',m} = \frac{D_m}{R_{n,n'}}$, $T_{n,{\mathrm{C}},m} = \frac{D_m}{R_{n,{\mathrm{C}}}}$, and $T_{{\mathrm{C}},n,m} = \frac{D_m}{R_{{\mathrm{C}},n}}$, respectively.

Let $ \mu_{m}^{\mathrm{CP}} $ denote the computation workload (in FLOPs) of an expert in model $m$, which can be calculated according to \cite{abnar2025parameters}. Let 
$C_u$ be the computing capability (in FLOP/s) of user $u$, and the allocated computing capability for each expert of model $m$ denoted by ${\phi}_{u,m}$ is given by ${\phi}_{u,m} = C_u/E_m$.
Then, the local computing latency of user $u$ utilizing the expert network of model $m$ can be expressed as
	\begin{equation}
T_{u,m}^{{\mathrm{CP}}} = \frac{\mu_m^{\mathrm{CP}}}{{\phi}_{u,m}}.
	\end{equation}

Let ${\phi}_{\mathrm{E},m}$ and ${\phi}_{\mathrm{C},m}$ denote the allocated computing capability for each expert of model $m$ at the edge server and at the cloud, respectively. 
In the considered scenario, the users’ request traffic load is assumed to be lower than the maximum capacity of each edge server. This assumption allows each user’s token to be processed immediately upon arrival at the associated edge server, and inference access is thus modeled as parallel, without queueing delay.\footnote{A prior study has discussed the interplay between wireless communication resource allocation and learning or inference operations~\cite{10024766}. Our work focuses on expert placement under fixed communication parameters, and extending the framework to jointly incorporate communication resource allocation represents an important but orthogonal direction for future research.}
Thus, the computing latency of the edge server and the cloud utilizing the expert network of model $m$ can be given by  $T_{\mathrm{E},m}^{{\mathrm{CP}}} = \frac{\mu_m^{\mathrm{CP}} }{{\phi}_{\mathrm{E},m}}$ and $T_{\mathrm{C},m}^{{\mathrm{CP}}} = \frac{\mu_m^{\mathrm{CP}} }{{\phi}_{\mathrm{C},m}} $, respectively.

Consider that the bottleneck of MoE inference in distributed wireless networks is the communication overhead of inter-nodes, and the delay of token routing to the cloud is much longer than that to edge networks~\cite{10579139}. Thus, the order in which user $u$'s token searches for expert processing is as follows: local user, associated (nearest) edge server $n_u$, other non-associated edge servers $\mathcal{N} \setminus n_u$, and finally the cloud. Fig. \ref{fig:fig_4case} shows four different cases of token routing when a user requests the expert pair $(i_m^{\ell}, j_m^{\ell})$ under the Top-2 strategy.
\begin{figure}[t]
    \centering
\includegraphics[width=0.45\textwidth]{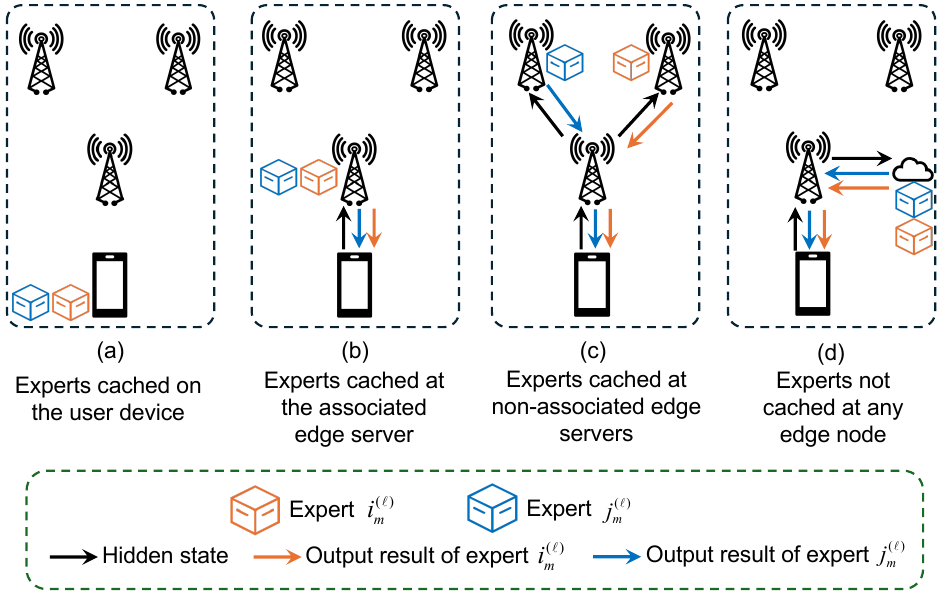}
    \caption{{Different cases of hidden-state routing when the user token’s hidden state activates the expert group $ \left\{ i_m^{\left( \ell \right)},j_m^{\left( \ell \right)}  \right\}$
    under the Top-2 strategy.}}
    \label{fig:fig_4case}
\end{figure}

Consider an arbitrary user $u$ who requires the expert (group) $ S_m^{\left( \ell \right)  }  $ in MoE layer $l$ of model $m$ for inference. Next, we will discuss the per-token inference latency when traversing this layer.
The number of activated experts at different nodes can be calculated as follows.
\begin{enumerate}
    \item At the local user $u$, there are $ \beta_{u,S_m^{\left( \ell\right) }} =  \mathop{\sum}\limits_{i_m^{\left( \ell\right) } \in S_m^{\left( \ell\right) }} \rho_{u,i_m^{\left( \ell\right) }} $ activated experts.

    \item At the associated edge server $n_u$, there are $\beta_{u,n_u,S_m^{\left( \ell\right) }} = \mathop{\sum}\limits_{i_m^{\left( \ell\right) } \in S_m^{\left( \ell\right) }} \left ( 1-\rho_{u,i_m^{\left( \ell\right) }}  \right ) x_{n_u,i_m^{\left( \ell\right) }}$ activated experts.

\item At the cloud, there are $\beta_{u,\mathrm{C} ,S_m^{\left( \ell\right) }} = \sum_{i_m^{\left( \ell\right) } \in S_m^{\left( \ell\right) }}\left ( 1-\rho_{u,i_m^{\left( \ell\right) }} \right ) \prod_{n \in \mathcal{N} } \left ( 1-x_{n,i_m^{\left( \ell\right) }} \right )$ activated experts.

    \item At other edge servers excluding the associated edge server,  there are $\beta_{u,S_m^{\left( \ell\right) }}^{\mathrm{OE}} = K_m - \beta_{u,S_m^{\left( \ell\right) }} -\beta_{u,n_u,S_m^{\left( \ell\right) }} - \beta_{u,\mathrm{C} ,S_m^{\left( \ell\right) }}$ activated experts in total.
\end{enumerate}

For the non-associated edge servers $n' \in \mathcal{N} \setminus n_u$, let $\beta_{u,n', i_m^{\left( \ell\right) } } \in \{0,1\}$ denote whether it provides processing with its cached expert $i_m^{\left( \ell\right) }$ for user $u$. 
Here, $\beta_{u,n', i_m^{\left( \ell\right) } } \leq x_{n',i_m^{\left( \ell\right) }}$ holds since edge server $n'$ can provide inference only if it caches the requested expert. Then, the total number of utilized experts at the non-associated edge server $n'$ is given by $\sum_{i_m^{\left( \ell\right) } \in S_m^{\left( \ell\right) }}\beta_{u,n', i_m^{\left( \ell\right) } }$, and $\beta_{u,S_m^{\left( \ell\right)}}^{\mathrm{OE}} = \sum_{n' \in \mathcal{N}\setminus n_u } \sum_{i_m^{\left( \ell\right) } \in S_m^{\left( \ell\right) }}\beta_{u,n', i_m^{\left( \ell\right) } } $ holds. Here, $\beta_{u,n', i_m^{\left( \ell\right) } } $ is determined in a way that leads to the lowest per-token inference latency, which can be easily obtained given expert placement decisions.

Considering bandwidth constraints, we assume that tokens are transmitted sequentially between the associated edge server $n_u$ and other edge servers. 
Accordingly, when user $u$ requests the expert (group) $S_m^{\left( \ell \right)}$, the total round-trip latency between $n_u$ and all other edge servers, denoted by $T_{n_u,S_m^{\left( \ell \right)}}^{\mathrm{OE} }$, can be expressed as
\begin{equation}\label{equ:latency_OE}
\begin{split}
        T_{n_u,S_m^{\left( \ell \right)}}^{\mathrm{OE} }  = \sum_{n' \in \mathcal{N}\setminus n_u } & \left[ {\mathbb{I}\left( {{\beta _{u,n',S_m^{\left( \ell  \right)}}} > 0} \right)\left( {{T_{{n_u},n',m}} + T_{{\mathrm{E}},m}^{{\mathrm{CP}}}} \right)} \right. \\
     & \left. + {\beta _{u,n',S_m^{\left( \ell  \right)}}}{T_{n',{n_u},m}} \right].
\end{split}
\end{equation}

In a special case where $K_m =1$, the token requires only one expert per MoE layer for processing. In this case, $S_m^{(\ell)}$ reduces to $i_m^{(\ell)}$ and $\beta_{u,n', S_m^{(\ell)}}
$ is given by
\begin{equation}\label{equ:closedform_beta_K1}
\begin{split}
        \beta_{u,n', S_m^{(\ell)}}
& =\left ( 1-\rho_{u,S_m^{\left( \ell\right) }}  \right )\left ( 1-x_{n_u,S_m^{\left( \ell\right) }} 
 \right ) \\
& \cdot \mathop{\prod}\limits_{\substack{n'' \in \mathcal{N} \\ T_{n_u,n'',m}+T_{n'',n_u,m} \\<T_{n_u,n',m}+T_{n',n_u,m}}} \left ( 1-x_{n'',S_m^{\left( \ell\right) }} \right ) x_{n',S_m^{\left( \ell\right) }}.
\end{split}
\end{equation}
That is, expert $S_m^{\left( \ell\right) }$ is cached at edge server $n'$, and all other edge servers $n'' \in \mathcal{N}$ that provide a lower round-trip transmission latency between $n_u$ and $n''$ (i.e., $T_{n_u,n'',m}+T_{n'',n_u,m}<T_{n_u,n',m}+T_{n',n_u,m}$) do not cache this expert. Therefore, the token of user $u$ is transmitted to edge server $n'$ for processing.

Let us get back to the general case. Based on (\ref{equ:latency_OE}), the latency experienced by a token at the edge servers, denoted by $T_{n_u, S_m^{\left( \ell \right)  }}^{\mathrm{E}}$, can be expressed as 
\begin{equation}
    T_{n_u, S_m^{\left( \ell \right)  }}^{\mathrm{E}} =  \mathbb{I} \left ( \beta_{u,n_u,S_m^{\left( \ell\right) }}>0 \right ) T_{\mathrm{E},m}^{\mathrm{CP} }+ T_{n_u,S_m^{\left( \ell \right)}}^{\mathrm{OE} }.
\end{equation}

In addition, the latency from the associated edge server $n_u$ to the cloud denoted by $T_{n_u, S_m^{\left( \ell \right)}}^{\mathrm{C} }$ can be expressed as
\begin{equation}
\begin{split}
    T_{n_u,S_m^{\left( \ell \right)}}^{\mathrm{C} } & = \mathbb{I}\left( {{\beta _{u,\mathrm{C} ,S_m^{\left( \ell  \right)}}} > 0} \right)\left( {{T_{{n_u},\mathrm{C},m}} + T_{{\mathrm{C}},m}^{{\mathrm{CP}}}} \right) \\
    & + {\beta _{u,\mathrm{C},S_m^{\left( \ell  \right)}}}T_{\mathrm{C},{n_u},m}.
\end{split}
\end{equation}

As a result, when the local experts cannot satisfy the inference requests, we can obtain the inference latency requiring outer nodes denoted by $T_{u, S_m^{\left( \ell \right)  }}^{\mathrm{out}}$ as follows,
\begin{equation}
\begin{split}
    T_{u, S_m^{\left( \ell \right)  }}^{\mathrm{out}} &= T_{u,n_u,m}^{\mathrm{UL} } + \left ( K_m-\sum_{i_m^{\left( \ell\right) } \in S_m^{\left( \ell\right) }}\rho_{u,i_m^{\left( \ell\right) }} \right )  T_{u,n_u,m}^{\mathrm{DL} }  \\
&+  T_{n_u, S_m^{\left( \ell \right)  }}^{\mathrm{E}}+ T_{n_u, S_m^{\left( \ell \right)  }}^{\mathrm{C}}.
\end{split}
\end{equation}

Finally, the per-token inference latency when user $u$ requests expert(s) $S_m^{\left( \ell \right) }$ for inference can be expressed as
\begin{equation}
	T_{u, S_m^{\left( \ell \right)  }}^{\mathrm{token}}    =   \prod _{i_m^{\left(\ell\right)} \in S_m^{\left( \ell\right) }}\rho_{u,i_m^{\left( \ell\right) }} T_u^{\rm{CP}}
+\left ( 1- \prod _{i_m^{\left(\ell\right)} \in S_m^{\left( \ell\right) }}\rho_{u,i_m^{\left( \ell\right) }} \right ) T_{u, S_m^{\left( \ell \right)  }}^{\mathrm{out}}.
\end{equation}

\subsection{Activation Probability and Average Latency}
As analyzed in Section \ref{sec:intro}, when using different MoE models, each token has a unique distribution of activation expert paths.
Let $p_{u,m}$ denote the probability that user $u$ selects model $m$ for inference.
Let $p_{u,S_{m}^{(\ell)}}$ denote the probability that user $u$ selects expert(s) $S_{m}^{(\ell)}$ in model $m$'s MoE layer $\ell$ for processing, which can be measured by historical statistics.
In practice, these probabilities are obtained from system-level observations and offline evaluation of the model’s decoding behavior. 
The model-request probability $p_{u,m}$ typically follows a Zipf-type popularity distribution and evolves slowly over time, while the expert-activation probability $p_{u,S_{m}^{(\ell)}}$ is estimated by running the MoE models on representative prompt traffic and recording empirical activation frequencies at each layer during the decoding phase. Both probabilities are generally stable in realistic systems. If drift occurs, they can be re-estimated using lightweight online statistics, allowing the caching policy to adapt over time.

Let ${\mathcal{S}}_{m,j} = \left \{ S_{m,j}^{(1)}, \dots, S_{m,j}^{(L_m)} \right \} \in \mathcal{S}_m $ denote the $j$-th path when selecting model $m$ for inference, where $S_{m,j}^{(\ell)} \in {\mathcal{S}}_{m,j}$ is the activated expert(s) in the $\ell$-th MoE layer. The probability of each individual path ${\mathcal{S}}_{m, j}$ for user $u$, denoted by $p_{u, {\mathcal{S}}_{m, j}}$, can be expressed as
\begin{equation}
\begin{split}
      p_{u, {\mathcal{S}}_{m, j}} &=p_u\left ( S_{m, j}^{(1)}, \dots, S_{m, j}^{(L_m)} \right )\\
      &= p_u\left ( S_{m, j}^{(1)} \right ) \prod_{\ell=2}^{L_m} p_u\left ( S_{m, j}^{(\ell)} \mid S_{m, j}^{(1)} , \ldots, S_{m, j}^{(\ell-1)}\right ),
\end{split}
\end{equation}
which is derived from the transition probability.

Let ${\bar{T}}_{u,m}$ denote the average per-token inference latency when user $u$ requires model $m$, which can be expressed as
\begin{equation}
    \begin{split}
        {\bar{T}}_{u,m}  &=    \sum_{{\mathcal{S}}_{m, j} \in \mathcal{S}_m} p_{u, {\mathcal{S}}_{m, j}} \sum_{S_{m,j}^{(\ell)} \in {\mathcal{S}}_{m,j}} T_{u,S_{m,j}^{(\ell)}}^{\mathrm{token}}  \\
& =  \sum_{\ell \in \mathcal{L}_m} \sum_{S_{m}^{(\ell)}\in {\mathcal{A} }_{m}^{(\ell)}} p_{u,S_{m}^{(\ell)}}T_{u,S_{m}^{(\ell)}}^{\mathrm{token}},
    \end{split}
\end{equation}
where the second equation is derived based on the characteristics of joint probability.

\section{Problem Formulation and Characterization}\label{sec:formulation}
In this section, we first formulate an optimization problem to minimize the average inference latency subject to the storage capacity constraints. Then, we identify and characterize the structural properties of the optimization problem under different $K$ values.

\subsection{Problem Formulation}
Let $T_{u, S_m^{\left( \ell \right)  }}^{\max} $ denote the maximal per-token inference latency when user $u$ requests the expert(s) $S_m^{\left( \ell \right)  }$. It is given by
\begin{equation}
    T_{u, S_m^{\left( \ell \right)  }}^{\max}    =   \prod _{i_m^{\left(\ell\right)} \in S_m^{\left( \ell\right) }}\rho_{u,i_m^{\left( \ell\right) }} T_u^{\rm{CP}}
+\left ( 1- \prod _{i_m^{\left(\ell\right)} \in S_m^{\left( \ell\right) }}\rho_{u,i_m^{\left( \ell\right) }} \right ) T_{u, S_m^{\left( \ell \right)  }}^{\mathrm{out},\max}.
\end{equation}

The term $ T_{u, S_m^{\left( \ell \right)  }}^{\mathrm{out,max}}$ represents the worst-case latency when the required experts that are not cached locally are also unavailable at any edge server, and thus the token must be processed at the cloud. $ T_{u, S_m^{\left( \ell \right)  }}^{\mathrm{out,max}}$ can be expressed as
\begin{equation}
\begin{split}
        T_{u, S_m^{\left( \ell \right)  }}^{\mathrm{out,max}} & = T_{u,n_u,m}^{\mathrm{UL} } + T_{n_u,{\mathrm{C}},m} + T_{\mathrm{C},m}^{\mathrm{CP}}   \\
        & +\left ( K_m-\sum_{i_m^{\left( \ell\right) } \in S_m^{\left( \ell\right) }}\rho_{u,i_m^{\left( \ell\right) }} \right ) 
\left ( T_{u,n_u,m}^{\mathrm{DL} } + T_{{\mathrm{C}},n_u,m} \right ).
\end{split}
\end{equation}

Then, the maximal average per-token inference latency when user $u$ requires model $m$ is given by $ T_{u,m}^{\max}  =  \sum_{\ell \in \mathcal{L}_m} \sum_{S_{m}^{(\ell)}\in {\mathcal{A} }_{m}^{(\ell)}} p_{u,S_{m}^{(\ell)}}T_{u, S_m^{\left( \ell \right)  }}^{\max}$.

To optimize expert placement for efficient MoE inference, we aim to minimize the average latency across all generated tokens. 
This objective is equivalently expressed as maximizing the reduction in the average per-token inference latency, leading to the following objective function:
\begin{equation}\label{equ:function_reduction}
\begin{split}
       F\left ( \mathbf{X}  \right ) & = \frac{1}{U} \sum_{u \in \mathcal{U}} \sum_{m \in \mathcal{M}} p_{u,m} \left ( T_{u,m}^{\max}-{\bar{T}}_{u,m}\left( \mathbf{X} \right)  \right )  \\
       &=\frac{1}{U} \sum_{u \in \mathcal{U}} \sum_{m \in \mathcal{M}} p_{u,m} \sum_{\ell \in \mathcal{L}_m} \sum_{S_{m}^{(\ell)}\in {\mathcal{A} }_{m}^{(\ell)}} p_{u,S_{m}^{(\ell)}} r_{u,S_{m}^{(\ell)}}\left ( \mathbf{X}  \right ),
\end{split}
\end{equation}
where the latency reduction when user $u$ requires expert(s) $S_{m}^{(\ell)}$ under the caching strategy $\mathbf{X}$ is defined as
$r_{u,S_{m}^{(\ell)}}\left (\mathbf{X}   \right ) \triangleq T_{u, S_m^{\left( \ell \right)  }}^{\max} - T_{u,S_{m}^{(\ell)}}^{\mathrm{token}}\left({\mathbf{X}}  \right)$.

Then, the optimization problem can be formulated as follows
\begin{subequations}
	\begin{equation}
\mathcal{P}1: \;		\max_{\mathbf{X}} \; F\left(\mathbf{X}\right) 
	\end{equation}
	\begin{equation} \label{cst:P1_knapsack}
		{\mathrm{s.t.}} \; \sum_{m \in \mathcal{M}}
		\sum_{\ell \in \mathcal{L}_m} \sum_{i_m^{\left(\ell\right)} \in \mathcal{E}_m^{\left(\ell\right)}} x_{n,i_m^{\left(\ell\right)}}b_m \leq Q_n, \; \forall n \in \mathcal{N},
	\end{equation}
   \begin{equation}\label{cst:P1_binary}
x_{n,i_m^{\left(\ell\right)}} \in \left\{0,1\right\}, \forall n \in  \mathcal{N}, m \in \mathcal{M}, 
        \ell \in \mathcal{L}_m,i_m^{(\ell)} \in \mathcal{E}_m^{(\ell)},
	\end{equation}
\end{subequations}
where Constraint (\ref{cst:P1_knapsack}) is a knapsack constraint ensuring that the cached experts within any edge server cannot exceed its storage capacity, and Constraint (\ref{cst:P1_binary}) denotes that the variable $\mathbf{X} = \left \{x_{n,i_m^{\left(\ell\right)}} \mid \forall n \in  \mathcal{N}, i_m^{(\ell)} \in \mathcal{E}_m^{(\ell)} \right \} $ is a set of binary variables.

\subsection{Problem Characterization}
Due to the cooperation among multiple edge servers and the correlation among activated experts, there is a complex coupling relationship within the decision variable, making $\mathcal{P}1$ challenging to solve.
In this subsection, we examine the properties of $\mathcal{P}1$ and perform problem mappings, providing a basis for the algorithm design in the next section.

We define the following ground set $V$:
$V = \{ \underbrace{v_1^1, \ldots, v_E^1}_{V_1}, \ldots, \underbrace{v_1^N, \ldots, v_E^N}_{V_N} \}$, where $v_i^n$ is an abstract element denoting the caching state of expert $i$ at the edge server $n$. 

\begin{definition}[Submodular Function]
    Defined over a finite set $V$, a set function $f: 2^V \rightarrow \mathbb{R}$ is said to be submodular when for every subset $X, Y \subseteq V$, we have $f \left( X\right) + f \left( Y\right) \geq f\left(X \cup Y\right) + f\left(X \cap Y\right)$. 
Alternatively, defining  $f \left( x | X \right) = f \left( X \cup x \right) -  f \left( X\right)$ as the marginal gain obtained by incorporating item $x$ into the set $X \subseteq V$, submodularity can equivalently be characterized by the \textit{diminishing returns} property: for any $X \subseteq Y \subset V$ and $v \in V \setminus Y$, it holds that $f(v | X) \geq f(v | Y)$. 
\end{definition}

\begin{definition}[Supermodular Function]
    A set function $f: 2^V \rightarrow \mathbb{R}$ is said to be supermodular if the function $-f$ is submodular. That is, for every $X, Y \subseteq V$, it holds that: $f(X) + f(Y) \leq f(X \cup Y) + f(X \cap Y)$.
    Equivalently, $f$ is supermodular if it satisfies the \textit{increasing returns} property: for all $X \subseteq Y \subseteq V$ and $v \in V \setminus Y$, $f(v \mid X) \leq f(v \mid Y)$ holds.
\end{definition}

\begin{proposition}\label{proposition:submodular_K1}
For Top-$K$ strategy, when $ K_m =1$ for any $m \in \mathcal{M}$, $\mathcal{P}1$ is a monotone non-decreasing submodular maximization problem with $N$ knapsack constraints.
\end{proposition}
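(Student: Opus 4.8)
The plan is to reduce the claim about $F$ to an elementary claim about a single request term and then invoke closure properties. First I would pass to the set-function viewpoint: identify each binary vector $\mathbf{X}$ with the subset $\{v_e^n : x_{n,e}=1\}$ of the ground set $V$, so that $F$ becomes a function on $2^V$. Since $F(\mathbf{X}) = \frac{1}{U}\sum_{u}\sum_{m} p_{u,m}\sum_{\ell}\sum_{S_m^{(\ell)}} p_{u,S_{m}^{(\ell)}} r_{u,S_{m}^{(\ell)}}(\mathbf{X})$ is a non-negative linear combination of the per-request reduction functions $r_{u,S_{m}^{(\ell)}}$ (all weights $\frac{1}{U} p_{u,m} p_{u,S_{m}^{(\ell)}} \ge 0$), and both monotonicity and submodularity are preserved under non-negative sums, it suffices to prove that each $r_{u,S_{m}^{(\ell)}}$ is monotone non-decreasing and submodular when $K_m=1$.

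The crux is to obtain a closed form for a single reduction term. When $K_m=1$ the activated set $S_m^{(\ell)}$ collapses to one expert, which I denote $e$. If $e$ is cached locally ($\rho_{u,e}=1$) then $T_{u,e}^{\mathrm{token}}=T_u^{\mathrm{CP}}=T_{u,e}^{\max}$, so $r_{u,e}\equiv 0$, a constant that is trivially monotone and submodular. Otherwise ($\rho_{u,e}=0$), I would use the closed-form routing indicator in (\ref{equ:closedform_beta_K1}): the token is served at the single cached location of minimum round-trip (equivalently, minimum total) cost, with the cloud as the always-available fallback. Writing $c_n$ for the fixed serving cost at edge server $n$ (backhaul round-trip plus edge computation, and just computation for $n=n_u$) and $c_{\mathrm{C}}$ for the cloud cost, the common uplink and single downlink terms appearing in both $T_{u,e}^{\max}$ and $T_{u,e}^{\mathrm{token}}$ cancel, leaving $r_{u,e}(\mathbf{X}) = c_{\mathrm{C}} - \min\{c_{\mathrm{C}},\, \min_{n:\,x_{n,e}=1} c_n\}$. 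Setting $w_{u,e,n} \triangleq (c_{\mathrm{C}} - c_n)^+ \ge 0$, this is exactly the maximum-of-weights form $r_{u,e}(\mathbf{X}) = \max_{n:\,x_{n,e}=1} w_{u,e,n}$, with the convention that the maximum over the empty set is $0$.

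With this form in hand, monotonicity and submodularity are standard. Monotonicity is immediate since enlarging the set of servers caching $e$ can only enlarge the maximum. For submodularity I would compute the marginal gain of adding $v_e^n$ to a set $X$ as $\bigl(w_{u,e,n} - \max_{n':\,v_e^{n'}\in X} w_{u,e,n'}\bigr)^+$; because the running maximum is non-decreasing in $X$, this marginal gain is non-increasing as $X$ grows, which is precisely the diminishing-returns characterization of submodularity. Since $r_{u,e}$ depends only on the coordinates $\{v_e^1,\dots,v_e^N\}$ and is constant in all others, it lifts to a monotone submodular function on all of $2^V$; summing with non-negative weights yields the same properties for $F$. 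Finally, each constraint (\ref{cst:P1_knapsack}) bounds the $b_m$-weighted load of the chosen elements within a single block $V_n$ by $Q_n$, and the blocks $V_1,\dots,V_N$ are disjoint, so these are exactly $N$ independent knapsack constraints.

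I expect the main obstacle to be the careful derivation of the maximum-of-weights closed form, in particular verifying via (\ref{equ:closedform_beta_K1}) that the optimal routing always selects the minimum-cost cached location and that the uplink/downlink contributions cancel between $T_{u,e}^{\max}$ and $T_{u,e}^{\mathrm{token}}$; once the reduction term is cast as $\max_{n:\,x_{n,e}=1} w_{u,e,n}$, the remaining submodularity and knapsack arguments are routine.
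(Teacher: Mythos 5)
Your proposal is correct and follows the same overall strategy as the paper: both reduce the claim to a single per-request term $r_{u,S_m^{(\ell)}}$ via closure of monotonicity and submodularity under non-negative linear combinations, and both ultimately rest on the observation that, for $K_m=1$, the token is served at the cheapest cached copy of the requested expert. Where you differ is in the execution of the key step. The paper verifies the diminishing-returns inequality directly, by comparing the marginal gain of adding an expert to a server $n$ under two nested caching sets $\dot X \subset \hat X$ and splitting into cases according to whether $n$ beats the current best server $\hat n$; your proof instead derives the explicit closed form $r_{u,e}(\mathbf{X}) = \max_{n:\,x_{n,e}=1} w_{u,e,n}$ (a weighted-coverage function) and invokes the standard fact that a maximum of non-negative weights over the selected set is monotone submodular. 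Your packaging is arguably cleaner and more robust: the $(c_{\mathrm{C}}-c_n)^+$ truncation and the $\min\{c_{\mathrm{C}},\cdot\}$ fallback handle the (degenerate) possibility of an edge server being slower than the cloud, which the paper sidesteps by assumption, and the closed form makes the later non-submodularity for $K>1$ easy to see by contrast (the reduction is no longer a max over singleton locations). The paper's case analysis, on the other hand, is self-contained and requires no appeal to an external lemma. Both arguments are sound; the only detail worth making fully explicit in your write-up is the verification via (\ref{equ:closedform_beta_K1}) that the uplink/downlink terms indeed cancel between $T^{\max}_{u,e}$ and $T^{\mathrm{token}}_{u,e}$, which you correctly flag as the main bookkeeping step.
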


\begin{proof}
Monotonicity is evident because adding an expert placement cannot reduce the objective value $F \left( \mathbf{X} \right)$. Moreover, as the caching set grows, the latency reduction brought by an additional expert decreases, yielding diminishing marginal returns. The complete proof is provided in Appendix \ref{proof:prop_submodular_K1}.

\end{proof}

\begin{proposition}\label{proposition:K2_nonsub_nonsuper}
    When $K_m >1$ for any $m \in \mathcal{M}$, $\mathcal{P}1$ is a monotone non-decreasing non-submodular non-supermodular
    maximization problem with $N$ knapsack constraints.
\end{proposition}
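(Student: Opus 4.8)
The plan is to dispose of monotonicity first and then establish the two failures --- non-submodularity and non-supermodularity --- by exhibiting two small counterexamples, one violating the diminishing-returns property and one violating the increasing-returns property of the objective $F(\mathbf{X})$. Monotonicity follows exactly as in Proposition~\ref{proposition:submodular_K1}: adding any expert placement can only move a requested expert from a slower serving node to a faster one (or leave it unchanged), so each $r_{u,S_m^{(\ell)}}$, and hence $F$, is non-decreasing. The crux is to show that when $K_m>1$ two competing structural effects coexist in the per-token latency of~(\ref{equ:latency_OE}): a \emph{shared-transmission} effect, whereby co-locating several co-activated experts on one server lets their common token embedding and edge computation be paid for only once through the indicator $\mathbb{I}(\beta_{u,n',S_m^{(\ell)}}>0)$, and a \emph{best-server-routing} effect inherited from the $K_m=1$ analysis, whereby an already well-served expert gains nothing from being cached again. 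The former produces increasing returns and the latter diminishing returns; since both occur, $F$ can be neither submodular nor supermodular.

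For non-submodularity I would construct a single-user, single-layer instance with $K_m=2$ in which the requested pair $S_m^{(\ell)}=\{i,j\}$ is cached neither locally ($\rho_{u,i_m^{(\ell)}}=\rho_{u,j_m^{(\ell)}}=0$) nor at the associated server $n_u$, with one fast non-associated server $n'$ and the cloud as fallback. Taking the nested pair $\dot{X}=\emptyset\subset\hat{X}=\{v_i^{n'}\}$ and the added element $v=v_j^{n'}$, I would evaluate both marginal gains directly from $T_{n_u,S_m^{(\ell)}}^{\mathrm{OE}}$ and $T_{n_u,S_m^{(\ell)}}^{\mathrm{C}}$. Since the constant uplink and downlink terms cancel, the gain on the empty set reduces to $p_{u,S_m^{(\ell)}}\bigl(T_{\mathrm{C},n_u,m}-T_{n_u,n',m}-T_{\mathrm{E},m}^{\mathrm{CP}}-T_{n',n_u,m}\bigr)$, whereas the gain on $\hat{X}$, where the uplink to $n'$ and the edge computation are already paid for expert $i$, equals $p_{u,S_m^{(\ell)}}\bigl(T_{n_u,\mathrm{C},m}+T_{\mathrm{C},m}^{\mathrm{CP}}+T_{\mathrm{C},n_u,m}-T_{n',n_u,m}\bigr)$. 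Their difference is $p_{u,S_m^{(\ell)}}\bigl(T_{n_u,\mathrm{C},m}+T_{\mathrm{C},m}^{\mathrm{CP}}+T_{n_u,n',m}+T_{\mathrm{E},m}^{\mathrm{CP}}\bigr)>0$, so the marginal gain is strictly larger on the larger set, contradicting diminishing returns and ruling out submodularity.

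For non-supermodularity I would isolate the routing effect on a single uncached expert by fixing the companion expert $j$ so that it is served identically in every configuration (e.g., pre-cached on the user so it never reaches the edge), leaving $i$ as the only expert routed through the edge servers. Introducing two non-associated servers $n'$ and $n''$ with $T_{n_u,n',m}+T_{n',n_u,m}<T_{n_u,n'',m}+T_{n'',n_u,m}$ and a nested pair $\dot{X}\subset\hat{X}$ with $v_i^{n'}\in\hat{X}\setminus\dot{X}$, I would add $v=v_i^{n''}$: on $\dot{X}$, where $i$ is otherwise only at the cloud, caching it at $n''$ yields a strictly positive reduction, while on $\hat{X}$, where the faster server $n'$ already serves $i$, the routing keeps $i$ at $n'$ and the marginal gain is exactly zero. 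Hence the gain strictly decreases from $\dot{X}$ to $\hat{X}$, violating increasing returns and ruling out supermodularity. Together with monotonicity, the two counterexamples establish that $\mathcal{P}1$ is a monotone, non-submodular, non-supermodular maximization problem under $N$ knapsack constraints.

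I expect the main obstacle to be the careful bookkeeping of the marginal gains through the nested indicator functions in $T_{n_u,S_m^{(\ell)}}^{\mathrm{OE}}$ and $T_{n_u,S_m^{(\ell)}}^{\mathrm{C}}$ --- in particular, correctly tracking which portion of the latency (the one-time $T_{n_u,n',m}+T_{\mathrm{E},m}^{\mathrm{CP}}$ versus the per-expert return $T_{n',n_u,m}$) is already absorbed in the base set --- and in selecting latency parameters so that the two inequalities point in opposite directions simultaneously while remaining in the realistic regime $T_{n_u,n',m}+T_{\mathrm{E},m}^{\mathrm{CP}}+T_{n',n_u,m}<T_{n_u,\mathrm{C},m}+T_{\mathrm{C},m}^{\mathrm{CP}}+T_{\mathrm{C},n_u,m}$. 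Verifying that the constructed instances are admissible under the $\beta$-accounting --- so that, for example, $\beta_{u,\mathrm{C},S_m^{(\ell)}}$ drops to zero exactly when the last cloud-served expert migrates to $n'$, and the lowest-latency routing indeed assigns each expert to the serving node assumed above --- is the remaining technical point.
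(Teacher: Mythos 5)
Your proposal is correct and follows essentially the same strategy as the paper: establish monotonicity as in Proposition~\ref{proposition:submodular_K1}, then exhibit two nested-set configurations in which the difference of marginal gains takes opposite signs, driven by exactly the two mechanisms the paper relies on (the shared one-time transmission/computation cost when co-activated experts are co-located, versus the best-server routing effect). Your concrete counterexamples differ in detail from the paper's four-server construction (you use the cloud fallback with co-location at a single non-associated server for non-submodularity, and a locally pre-cached companion expert for non-supermodularity), and you leave the extension from $K_m=2$ to general $K_m>1$ implicit, but both constructions are valid and the difference of marginal gains you compute for the non-submodularity case is strictly positive unconditionally, so the argument goes through.
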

\begin{proof}
The monotonicity of $F \left( \mathbf{X} \right)$ is immediate and follows the same reasoning as the case $K_m = 1$.
To show that $F \left( \mathbf{X} \right)$ is neither submodular nor supermodular, we first present the key marginal behaviors under $K_m=2$ and subsequently generalize them to any $K_m >1$. The complete proof is given in Appendix \ref{proof:prop_K2_nonsub_nonsuper}.

\end{proof}

\section{Proposed Algorithms}\label{sec:proposed_alg}
Based on the structural analysis of $\mathcal{P}1$ under different values of $K$ in the previous section, we divide the algorithmic discussion into two parts: one for the special case of $K=1$, and the other for the general case of $K\geq 1$.
The special case $K = 1$ is suitable for the Switch Transformer (ST) family, such as ST-b-X with $X=\{8, 16, 32, 64, 128\}$ \cite{fedus2022switch} and Hash
Layer \cite{roller2021hash}. 
The general case $K \geq 1$ applies to 
GShard and MoE-LLaVA family \cite{11353922} with the Top-2 strategy and LLaMA-MoE-3.5B \cite{llama-moe} with the Top-4 strategy. In both cases, we will develop approximate algorithms with provable guarantees. Note that the algorithms for the general case $K \geq 1$ can obviously achieve an approximation guarantee to the special case $K = 1$, albeit with a slightly weaker approximation ratio compared to the algorithm designed for $K=1$.

\subsection{Special Case: $K=1$}
We denote the objective function of $\mathcal{P}1$ as $F_{1}\left ( \mathbf{X} \right )$ in the special case.
Let ${\mathbf{X}}^{\ast}$ denote the optimal solution of $F_{1}\left ( \mathbf{X} \right )$.

\subsubsection{Algorithm design}
Since the optimization problem $\mathcal{P}1$ is a submodular maximization problem with knapsack constraints, we can utilize the greedy-based algorithm to obtain the solutions, as shown in Algorithm \ref{alg:traditional_greedy}. Here, $F_{1}\left ( i \mid S \right )$ denotes the marginal value in the objective function $F_1$ when adding element $i$ to the set $S$.

\begin{algorithm}[t]
    \caption{\textsc{Greedy}-based Algorithm When $K=1$}\label{alg:traditional_greedy}
    \begin{algorithmic}[1]
        \State \textbf{Input:} Ground set $V=\left \{ V_1, \ldots,V_N \right \} $, objective function $F_1$, model $m$'s expert data size $b_m$, number of edge servers $N$, storage capacities $Q_n$ for $n \in \mathcal{N}$.
        \State \textbf{Output:} Expert caching results $S$ (and the way it is cached).
        \State 
        Initialize $S = \emptyset$.
        Let $S= \left \{ S_1, \ldots, S_N \right \}  $ denote its partitions for $N$ edge servers.
        \While{$V \setminus S \neq \emptyset$  and there exists $n \in \mathcal{N}$ such that $\mathop{\sum}\limits_{i \in S_n} b_i \leq Q_n$}
            \State $\left ( n^{*}, i^{*} \right )   =
\mathop{\arg\max}\limits_{\substack{n \in \mathcal{N}, i \in V_{n} \setminus S_{n}}}
\frac{F_{1}\left ( i \mid S \right ) }{b_i} $.
            \State $S = S \cup \left \{  i^*\right \} $ and $S_{n^{*}} = S_{n^{*}} \cup \left \{  i^*\right \} $.
        \EndWhile
        \State \Return $S= \left \{ S_1, \ldots, S_N \right \} $.
    \end{algorithmic}
\end{algorithm}

\begin{proposition}\label{prop:guarantee_traditional_greedy}
Algorithm \ref{alg:traditional_greedy} returns an expert caching set $S$ that satisfies $F_1\left ( S \right ) \geq \left ( 1- 1/e \right ) F_1\left ( {\mathbf{X}}^{\ast} \right )$.
\end{proposition}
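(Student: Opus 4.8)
The plan is to adapt the classical Nemhauser--Wolsey--Fisher greedy analysis to the cost-scaled (knapsack) setting, leveraging the fact that $F_1$ is monotone non-decreasing and submodular as established in Proposition~\ref{proposition:submodular_K1}. First I would fix notation: let $O = {\mathbf{X}}^{\ast}$ denote an optimal feasible caching set, let $S^{(t)}$ be the partial solution held by Algorithm~\ref{alg:traditional_greedy} after $t$ insertions, and let $s^{(t)}$ be the element added in iteration $t$, with cost $b_{s^{(t)}}$. The quantity to track is the optimality gap $\Delta_t = F_1(O) - F_1(S^{(t)})$, normalized by $F_1(\emptyset)=0$ (since an empty cache routes every request to the cloud, giving zero latency reduction). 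The goal is to show $\Delta_t$ contracts geometrically as the storage budget is consumed.

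The core of the argument is a per-iteration inequality. By monotonicity and submodularity,
\begin{equation}
F_1(O) - F_1(S^{(t)}) \le \sum_{o \in O \setminus S^{(t)}} F_1\big(o \mid S^{(t)}\big).
\end{equation}
Because iteration $t+1$ greedily selects the element of maximum density $F_1(i \mid S^{(t)})/b_i$, every remaining candidate $o$ obeys $F_1(o \mid S^{(t)}) \le \big(F_1(s^{(t+1)} \mid S^{(t)})/b_{s^{(t+1)}}\big)\,b_o$. Summing over $o \in O \setminus S^{(t)}$ and bounding the aggregate cost of $O$ partition-by-partition via $\sum_{o \in O} b_o = \sum_{n \in \mathcal{N}}\sum_{o \in O \cap V_n} b_o \le \sum_{n\in\mathcal{N}} Q_n \triangleq B$ (exploiting that each ground-set element $v_i^n$ is tied to a unique server $n$), I would arrive at the recursion
\begin{equation}
\Delta_{t+1} \le \Big(1 - \tfrac{b_{s^{(t+1)}}}{B}\Big)\,\Delta_t.
\end{equation}
Telescoping and using $\prod_t \big(1 - b_{s^{(t)}}/B\big) \le \exp\!\big(-\sum_t b_{s^{(t)}}/B\big)$, together with the fact that the greedy loop consumes essentially the whole budget so that $\sum_t b_{s^{(t)}} \ge B$, yields $\Delta_{\mathrm{final}} \le e^{-1} F_1(O)$, i.e.\ $F_1(S) \ge (1 - 1/e)\,F_1({\mathbf{X}}^{\ast})$.

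The main obstacle will be the boundary behaviour introduced jointly by the non-uniform knapsack costs $b_m$ and the per-server constraint structure. Two issues must be handled carefully. First, the greedy loop halts as soon as no remaining expert fits into any server with residual room, so $\sum_t b_{s^{(t)}}$ may fall strictly short of $B$, and the high-density element that would have overflowed must be charged separately; recovering the clean $(1-1/e)$ factor here is exactly the point where a partial-enumeration step (fixing the most valuable constant-size subset before running the density greedy) classically enters, and I would flag that Algorithm~\ref{alg:traditional_greedy} may need this augmentation. Second, an optimal element $o$ whose target server is already saturated in $S^{(t)}$ is not a valid greedy candidate, so the density comparison above requires a feasibility-aware charging argument across the $N$ independent knapsacks rather than a single one. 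I expect the per-server decomposition of the cost bound to be routine once the single-knapsack recursion is in place, with the overflow/enumeration accounting being the genuinely delicate ingredient.
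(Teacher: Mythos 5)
The paper's own ``proof'' of this proposition is a one-line deferral to an external reference, so there is no detailed in-paper argument to compare against; your proposal must therefore stand on its own, and as written it does not close the claim. The per-iteration recursion $\Delta_{t+1} \le \bigl(1 - b_{s^{(t+1)}}/B\bigr)\,\Delta_t$ and the telescoping step are the standard Nemhauser--Wolsey--Fisher/Wolsey density-greedy analysis, and that part is sound for a single knapsack whenever every optimal element remains an admissible greedy candidate. But the two issues you flag at the end are not boundary technicalities to be handled later: they are exactly where the $(1-1/e)$ claim lives, and you leave both unresolved. For the plain density greedy on even a single knapsack, the premise $\sum_t b_{s^{(t)}} \ge B$ is false in general, and the standard counterexample (one item of value $1$ and cost $B$ versus one item of value $2\varepsilon$ and cost $\varepsilon$) shows the unmodified algorithm can be arbitrarily far from optimal; recovering $(1-1/e)$ classically requires partial enumeration \`a la Sviridenko, while merely comparing with the best single element yields only $\tfrac{1}{2}(1-1/e)$. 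You acknowledge that Algorithm~\ref{alg:traditional_greedy} ``may need this augmentation,'' but the proposition is about the algorithm as stated, so this is a gap in the proof rather than a remark.

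The second issue is equally substantive. The charging inequality $F_1(o \mid S^{(t)}) \le \bigl(F_1(s^{(t+1)} \mid S^{(t)})/b_{s^{(t+1)}}\bigr)\, b_o$ requires $o$ to be in the greedy's candidate pool at iteration $t+1$; once server $n$ is saturated, the optimal elements of $V_n$ are no longer valid insertions, and collapsing the $N$ disjoint per-server budgets into a single $B = \sum_{n} Q_n$ does not repair the comparison. Multiple knapsack constraints (even with disjoint supports, as here) are precisely the regime where greedy-only analyses are not known to give $(1-1/e)$; the known $(1-1/e-\epsilon)$ guarantees combine enumeration with LP or continuous-greedy machinery. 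As it stands, your argument establishes at most a bound of the form $1 - e^{-\beta}$, where $\beta$ is the fraction of the aggregate budget actually consumed before the loop halts, which can be much weaker than $1-1/e$. To make the proposition true as stated you would need either to modify Algorithm~\ref{alg:traditional_greedy} (partial enumeration plus a feasibility-aware density rule) or to weaken the claimed ratio.
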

\begin{proof}
    The proof follows the approach in \cite{DBLP:conf/esa/00010Z23}.
\end{proof}

\subsubsection{Computational complexity} 
Since Algorithm \ref{alg:traditional_greedy} is developed from greedy Algorithm, its computational complexity is given by \ref{alg:traditional_greedy} is $\mathcal{O}(N^2E^2)$.

\subsection{General Case: $K \geq 1$}
In this subsection, we extend our discussion to the general case where both $K=1$ and $K>1$ strategies are involved.
Recall that for $K >1$, the objective function of $\mathcal{P}1$ is neither submodular nor supermodular, making it challenging to solve. To address this, we first decompose the original problem $\mathcal{P}1$ into $N$ subproblems and solve them successively in the order of edge server index. Based on the analysis of each subproblem, we demonstrate why the greedy algorithm—effective in the special case—fails to provide a constant approximation ratio in the general case. Leveraging the structure of each subproblem, we then propose a dynamic programming (DP)-based algorithm and also an accelerated algorithm to obtain solutions with a constant approximation guarantee. A detailed discussion of the proposed algorithms and their performance is also provided.

\subsubsection{Problem decomposition}
The subproblem of edge server $n$ can be expressed as follows.
\begin{subequations}
	\begin{equation}
        \mathcal{P}2_n: \;		\max_{\tilde{\mathbf{X}}_n} \;  {\tilde F}_n\left(\tilde{\mathbf{X}}_n\right) 
	\end{equation}
	\begin{equation} \label{cst:user_knapsack}
		{\mathrm{s.t.}} \; \sum_{m \in \mathcal{M}}
		\sum_{\ell \in \mathcal{L}_m} \sum_{i \in \mathcal{E}_m} x_{n,i_m^{\left(\ell\right)}}b_m \leq Q_n, 
	\end{equation}
	\begin{equation} \label{cst:user_binary}
		x_{n,i_m^{\left(\ell\right)}} \in \left\{0,1\right\}, \; \forall m \in \mathcal{M},
\ell \in \mathcal{L}_m,i_m^{(\ell)} \in \mathcal{E}_m^{(\ell)},
	\end{equation}
\end{subequations}
The objective function ${\tilde F}_n\left(\tilde{\mathbf{X}}_n\right)$ is given by 
\begin{align}
    {\tilde F}_n\left(\tilde{\mathbf{X}}_n\right) =  \sum_{u \in \mathcal{U}} \sum_{m \in \mathcal{M}} p_{u,m} \sum_{\ell \in \mathcal{L}_m} \sum_{S_{m}^{(\ell)}\in {\mathcal{A} }_{m}^{(\ell)}} p_{u,S_{m}^{(\ell)}}{\tilde{r}}_{u,S_{m}^{(\ell)}}\left (\tilde{\mathbf{X}}_n  \right ),
\end{align}
where
\begin{align}
\resizebox{1\hsize}{!}{$   {\tilde{r}}_{u,S_{m}^{(\ell)}}\left (\tilde{\mathbf{X}}_n  \right )  = T_{u,S_{m}^{(\ell)}}^{\mathrm{token}}\left( \bigcup_{n'=1}^{n-1} \tilde{\mathbf{X}}_{n'} \right) - T_{u,S_{m}^{(\ell)}}^{\mathrm{token}}\left( \tilde{\mathbf{X}}_n \cup \left( \bigcup_{n'=1}^{n-1} \tilde{\mathbf{X}}_{n'} \right) \right). $}
\end{align}
That is, given the caching results of the previous edge servers from $n=1$ to $n-1$, we aim to optimize the caching strategy of edge server $n$ to maximize the average reduction in latency.
Specifically, when $n=1$, ${\tilde{r}}_{u,S_{m}^{(\ell)}}\left (\tilde{\mathbf{X}}_1  \right )  = T_{u,S_{m}^{(\ell)}}^{\max} - T_{u,S_{m}^{(\ell)}}^{\mathrm{token}}\left( \tilde{\mathbf{X}}_1 \right)$.
Therefore, the objective function of $\mathcal{P}1$ follows from 
\begin{align}\label{equ:union_solution}
    F(\tilde{\mathbf{X}}) = F\left( \bigcup_{n \in \mathcal{N}} \tilde{\mathbf{X}}_n \right) = \sum_{n \in \mathcal{N}} \tilde{F}_n\left(\tilde{\mathbf{X}}_n  \right) .
\end{align}

The second equality holds since the sum of $\tilde{F}_n$ is a telescoping sum, where all intermediate terms cancel out, leaving only the first and the last terms.
This motivates us to divide the original problem into $N$ subproblems and solve them successively.

\subsubsection{Subproblem analysis for $K\geq 1$}
Take the subproblem $\mathcal{P}2_n$ as an example, we analyze its property and propose an efficient accelerated knapsack algorithm to solve it.

\begin{proposition}\label{proposition:subproblem_modularplussuper}
$\mathcal{P}2_n$ is maximizing the sum of a non-decreasing modular function and a non-decreasing supermodular function subject to a knapsack constraint. The modular function corresponds to the term including $m$ with $K_m = 1$ and the supermodular function corresponds to the term including $m$ with $K_m > 1$. 
\end{proposition}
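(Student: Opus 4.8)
The plan is to split the objective $\tilde F_n$ of $\mathcal P2_n$ according to the routing width $K_m$ of each model, prove that the models with $K_m=1$ contribute a non-decreasing modular function of server $n$'s cached set while the models with $K_m>1$ contribute a non-decreasing supermodular function, and then invoke that the sum of a modular and a supermodular function is supermodular. The knapsack constraint \eqref{cst:user_knapsack} is literally the single-server restriction of \eqref{cst:P1_knapsack}, so nothing is needed there; monotonicity of every piece follows because the caches on servers $1,\dots,n-1$ are frozen and adding an expert at server $n$ only enlarges the set of serving options, which can never raise the optimized per-token latency and hence never decreases the reduction $\tilde r_{u,S_m^{(\ell)}}$.

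For the modular part I would fix a model $m$ with $K_m=1$. Here every layer request $S_m^{(\ell)}$ is a single expert, so distinct experts correspond to disjoint requests that never share a token. Using the closed form \eqref{equ:closedform_beta_K1}, the marginal reduction obtained when server $n$ caches expert $i$ for request $\{i\}$ equals $p_{u,\{i\}}$ times the non-negative round-trip saving of $n$ over the best among servers $1,\dots,n-1$ when that saving is positive, and zero otherwise; critically this value depends only on whether $i\in\tilde{\mathbf X}_n$ and not on which other experts server $n$ caches. Summing these indicator-weighted constants over users, layers and single-expert requests yields a set function whose marginal gains are independent of the current set, i.e. a modular function.

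For the supermodular part I would fix a model with $K_m>1$ and a request $S=S_m^{(\ell)}$ whose $K_m$ co-activated experts can now share the one-time uplink-plus-compute cost of server $n$, which is precisely the term carried by the indicator $\mathbb I(\beta_{u,n,S}>0)$ in \eqref{equ:latency_OE}. First I would write the optimized latency, as a function of the set $A$ of co-activated experts of $S$ that $n$ can serve, in the threshold form $\text{const}-\psi(h(A))$, where $h(A)$ is the modular sum of the individual round-trip savings of the experts routed to $n$ and $\psi(t)=(t-c_n)^+$ encodes that the fixed routing cost $c_n$ of server $n$ is paid only once. Since $\psi$ is non-decreasing and convex and $h$ is modular with non-negative increments, $\psi\circ h$ is supermodular: for $A\subseteq B$ the equal increment $h(A\cup\{v\})-h(A)=h(B\cup\{v\})-h(B)$ together with the increasing differences of $\psi$ gives $\psi(h(A\cup\{v\}))-\psi(h(A))\le\psi(h(B\cup\{v\}))-\psi(h(B))$. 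Summing over requests, layers and users preserves supermodularity, and this reproduces the increasing-returns computation of Fig.~\ref{fig:fig_set_K2_super} now confined to a single server.

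The hard part is justifying the threshold form under the joint, latency-minimizing routing that defines the per-token latency: adding an expert $v$ at server $n$ can re-route other experts of the same request across servers $1,\dots,n$ and the cloud, so the per-request latency is a minimum over assignments rather than a clean per-expert sum, and I must show this minimum is submodular in $A$ (equivalently that the saving is supermodular). The observation that makes this tractable is that the only new option introduced in $\mathcal P2_n$ lives on the single server $n$, so the sole coupling among the experts of a request is the one-time fixed cost at $n$ -- exactly the convex-threshold structure above. This is what rescues supermodularity here even though the full multi-server problem $\mathcal P1$ is neither submodular nor supermodular by Proposition~\ref{proposition:K2_nonsub_nonsuper}, where the sign-indefinite marginals arose from couplings between two \emph{distinct} servers serving the same request. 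Combining the two parts, $\tilde F_n$ is the sum of a non-decreasing modular and a non-decreasing supermodular function under the inherited knapsack constraint, which is the claim.
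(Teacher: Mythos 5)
The paper gives no real proof of this proposition---it only remarks that the argument is ``similar to that of Proposition~\ref{proposition:K2_nonsub_nonsuper}'', i.e.\ a case analysis of the kind illustrated in Fig.~\ref{fig:relationship_K2}, restricted to a single server. Your modular half is correct and cleanly argued: for $K_m=1$ the reduction contributed by request $\{i\}$ depends only on the indicator $\mathbb{I}(i\in\tilde{\mathbf X}_n)$ (the caches of servers $1,\dots,n-1$ being frozen), so the sum over users, layers and single-expert requests is modular, and monotonicity is immediate.

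The supermodular half, however, has a genuine gap at exactly the step you flag as ``the hard part.'' Your reduction to $\mathrm{const}-\psi(h(A))$ with $h$ modular rests on the claim that ``the sole coupling among the experts of a request is the one-time fixed cost at $n$.'' That is not true of the latency model: by \eqref{equ:latency_OE} every other server $n'$ and the cloud also carry a one-time term $\mathbb{I}(\beta_{u,n',S_m^{(\ell)}}>0)(T_{n_u,n',m}+T_{\mathrm{E},m}^{\mathrm{CP}})$, so the optimal cost of serving the experts \emph{not} routed to $n$ is itself a minimum over assignments with shared fixed costs (a facility-location-type function), not a modular sum of per-expert constants. Consequently the ``individual round-trip saving'' $s_i$ of routing expert $i$ to $n$ depends on which other experts of the request remain behind, and $h(A)=\sum_{i\in A}s_i^{+}$ is not well defined as a modular function; the convex-threshold composition argument does not go through as stated. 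The conclusion still appears to be correct---in the configurations one can write down, the inequality that would break submodularity of the optimized latency $L(A)$ always forces a negative fixed cost ($f_n<0$ or $f_{n'}<0$), which is impossible---but establishing this requires arguing directly about the assignment minimum (or a case analysis over where the displaced one-time costs land, in the spirit of Proposition~\ref{proposition:K2_nonsub_nonsuper}), not the clean $\psi\circ h$ decomposition you invoke. As written, the supermodularity claim is asserted rather than proved.
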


\begin{proof}
    The proof of Proposition \ref{proposition:subproblem_modularplussuper} is omitted here because the procedure is similar to that of Proposition \ref{proposition:K2_nonsub_nonsuper}. 
\end{proof}

First, we provide an explanation about why the traditional greedy algorithm (Algorithm \ref{alg:traditional_greedy}) cannot achieve a constant-approximation guarantee when solving $\mathcal{P}2_n$.

\begin{remark}\label{remark:greedy_limitation}
(\textit{Limitations of Greedy Methods in Supermodular Optimization Problems})
In MoE inference with $K > 1$, each token must activate multiple experts simultaneously within a layer, introducing strong interdependencies among the selected experts. The overall utility of a selection can only be accurately assessed once all required experts have been determined. Prior to this, the contribution of any individual expert is difficult to evaluate in isolation, often resulting in an underestimation of its actual utility.
While the greedy algorithm may be effective when $K = 1$, where expert activations are independent and the utility of each expert can be evaluated individually, it fails to account for the combinatorial nature of utility when $K > 1$. In these cases, early greedy selections are made with incomplete information and may restrict the effectiveness of subsequent choices, leading to suboptimal overall performance due to cascading effects.

\end{remark}

\subsubsection{The proposed DP-based algorithm}
Since the greedy algorithm is improper to solve $\mathcal{P}2_n$, we reformulate $\mathcal{P}2_n$ into a tractable modular optimization problem and provide solutions.

Let $\tilde{F}_n\left ( \tilde{\mathbf{X}}_n  \right )  \triangleq {\tilde F}_n^{\mathrm{mod} } \left ( \tilde{\mathbf{X}}_n  \right )+ {\tilde F}_n^{\mathrm{super} } \left ( \tilde{\mathbf{X}}_n  \right )$, where
${\tilde F}_n^{\mathrm{mod} }$ and
${\tilde F}_n^{\mathrm{super} }$ denote the modular and supermodular components of ${\tilde F}_n$, respectively. 
We define a modular function as ${H_n} : 2^{\mathcal{N}} \to \mathbb{R}_+$ with $H_n \left ( \tilde{\mathbf{X}}_n \right )  = \sum_{z \in \tilde{\mathbf{X}}_n} {\tilde F}_n^{\mathrm{super} }(z \mid \emptyset)$. 
Then, we reformulate $\mathcal{P}2_n$ as $\mathcal{P}3_n$, which can be expressed as follows,
\begin{subequations}
    \begin{equation}
        \mathcal{P}3_n: \mathop{\max}\limits_{\tilde{\mathbf{X}}_n} \;   {\bar F}_n  \left ( \tilde{\mathbf{X}}_n \right ) =     {\tilde F}_n^{\mathrm{mod} }  \left ( \tilde{\mathbf{X}}_n \right ) + H_n \left ( \tilde{\mathbf{X}}_n \right )
    \end{equation}
    \begin{equation}
        {\rm{s.t.} \; (\ref{cst:user_knapsack}), (\ref{cst:user_binary})}.
    \end{equation}
\end{subequations}

Since the objective function ${\bar F}_n$ is a modular function, $\mathcal{P}3_n$ is a general knapsack problem, which can be solved optimally with a DP-based algorithm in polynomial time. We outline the procedure in Algorithm \ref{alg:general_K>=1}. 
The gap between the proposed solutions and the globally optimal solutions of $\mathcal{P}1$ will be discussed in the following paragraphs.

\begin{algorithm}[t]
\caption{Proposed DP-based Algorithm When $K \geq 1$}
\label{alg:general_K>=1}
\begin{algorithmic}[1]
\Require Expert values ${\tilde F}_n^{\mathrm{mod} }(i \mid \emptyset)$ or ${\tilde F}_n^{\mathrm{super} }(i \mid \emptyset)$ for $i \in V_n$, expert data size $b_{m(i)}$ for $i \in V_n$, edge server $n$'s storage capacity $Q_n$, $V_n$'s cardinality $E$.
\Ensure Expert caching results $S$ (and the way it is cached).
\For{$n \in \mathcal{N} $}
\State Initialize ${\rm{dp}}[0 \ldots E][0 \ldots Q_n] \gets 0$
\For{$i \in [1, E] $}
    \For{$q \in [0, Q_n] $}
        \If{$b_{m(i)} > q$}
            \State ${\rm{dp}}[i][q] \gets {\rm{dp}}[i-1][q]$
        \Else
        \If{$K_{m(i)} = 1$}
            \State ${\rm{dp}}[i][q] \gets \max({\rm{dp}}[i-1][q],\ {\rm{dp}}[i-1][q - b_{m(i)}] + {\tilde F}_n^{\mathrm{mod} }(i \mid \emptyset))$
            \Else
            \State ${\rm{dp}}[i][q] \gets \max({\rm{dp}}[i-1][q],\ {\rm{dp}}[i-1][q - b_{m(i)}] + {\tilde F}_n^{\mathrm{super} }(i \mid \emptyset))$
        \EndIf
        \EndIf
    \EndFor
\EndFor
\State \Return ${\rm{dp}}[E][Q_n]$
\State // Traceback to get selected experts of edge server $n$
\State Initialize $\ddot { \mathbf{X} }_n \gets \emptyset$, $i \gets E$, $q \gets Q_n$
\While{$i > 0$}
    \If{${\rm dp}[i][q] \ne {\rm dp}[i-1][q]$}
        \State $\ddot { \mathbf{X} }_n \gets \ddot { \mathbf{X} }_n \cup \{i\}$, $q \gets q - b_{m(i)}$
    \EndIf
    \State $i \gets i - 1$
\EndWhile
\State \Return $\ddot { \mathbf{X} }_n$
\EndFor
\State \Return  $\ddot { \mathbf{X} }= \left \{ \ddot { \mathbf{X} }_1, \ldots, \ddot { \mathbf{X} }_N \right \} $
\end{algorithmic}
\end{algorithm}

\subsubsection{The proposed accelerated algorithm}
Let $T$ denote the distinct data size of experts among all the MoE models.
We notice that $T \leq M \ll E$. Here, $\leq$ lies in that numerous downstream models are developed by fine-tuning a pre-trained base model, and $\ll$ comes from the same data size of experts in each MoE model.
Based on Definition \ref{def:convolution}, we design Algorithm \ref{alg:knapsack_sameweight} with lower complexity than the traditional DP algorithm.

\begin{definition}\label{def:convolution}
Let $y^{(i)}$ represent the subsequence of $a$ formed by selecting elements at positions congruent to $i \mod k$, that is, $y^{(i)} = (a_i, a_{i + k}, a_{i + 2k}, \dots)$. Let $z$ represent a $k$-step concave sequence $z=(b_0, b_k, b_{2k}, \ldots)$.
For any $q$, we can compute the (max,+) convolution of $y^{(i)}$ and $z$ as follows:
\[
\left( y^{(i)} \oplus z \right)_q = \max_{l=0}^{\infty} \{ y^{(i)}_{q-l}+z_l   \} 
= \max_{l=0}^{\infty} \left \{ a_{i+qk - lk} + b_{lk}  \right \}.  \]
\end{definition}

\begin{algorithm}[t]
    \caption{Proposed Accelerated Algorithm When $K \geq 1$}\label{alg:knapsack_sameweight}
    \begin{algorithmic}[1]
        \State \textbf{Input:} Experts with data size in $\{\tilde{b}_1, \ldots, \tilde{b}_T\}$, expert values ${\tilde F}_n^{\mathrm{mod} }(i \mid \emptyset)$ or ${\tilde F}_n^{\mathrm{super} }(i \mid \emptyset)$ for $i \in V_n$, storage capacity $Q_n$
        \For{$n \in \mathcal{N} $}
        \State Partition items into sets $E_1, \ldots, E_T$, such that $E_i = \{j \mid b_j = {\tilde{b}}_i\}$
        \For{$i \in \{1, \ldots, T\}$ and $q \in \{1, \ldots, Q_n\}$}
            \State $s_q^{(i)} \gets$ solution for $E_i$ with knapsack size $q$
        \EndFor
        \State Initialize $\ddot { \mathbf{X} }_n$ as an empty sequence
        \For{$i \in \{1, \ldots, T\}$}
            \State $\ddot { \mathbf{X} }_n \gets \ddot { \mathbf{X} }_n \oplus s^{(i)}$ using Definition \ref{def:convolution}
        \EndFor
        \State Update $\ddot { \mathbf{X} }_n$ by only keeping its first $Q_n$ elements 
        \EndFor
        \State \Return  $\ddot { \mathbf{X} }= \left \{ \ddot { \mathbf{X} }_1, \ldots, \ddot { \mathbf{X} }_N \right \} $
    \end{algorithmic}
\end{algorithm}

To improve efficiency without sacrificing optimality, Algorithm \ref{alg:knapsack_sameweight} leverages the observation that experts can naturally be grouped by their data sizes into a small number of categories. Each subset induces a subsequence whose contributions to the overall solution can be computed independently via efficient convolution operations, as defined in Definition \ref{def:convolution}. These $T$ (satisfying $T \ll E$) subsequences preserve the structure required by the original DP formulation, allowing their partial solutions to be combined without loss of optimality.

\subsubsection{Computational complexity}
The computational complexity of Algorithm \ref{alg:general_K>=1} is $\mathcal{O}(NE\left \lceil Q_n / (b_m)_{\min} \right \rceil)$, while the computational complexity of Algorithm \ref{alg:knapsack_sameweight} $\mathcal{O}(NT\left \lceil Q_n / (b_m)_{\min} \right \rceil))$. The detailed proof procedure is referred to \cite{axiotis_et_al}. 

Compared to the computational complexity of Algorithm \ref{alg:traditional_greedy}, the computational complexity of our proposed algorithm is lower since $T$ is much lower than $E$ in MoE models and $\left \lceil Q_n / (b_m)_{\min} \right \rceil)$ is also lower than $E$, especially in the storage-constrained edge networks.

\subsubsection{Approximation guarantee}\label{subsub:approximation_guarantee}
In this part, we will provide the approximation guarantee obtained by Algorithm \ref{alg:knapsack_sameweight} when solving each subproblem $\mathcal{P}2_n$ and then give the approximation guarantee of the global solution.

First, the definition of supermodular curvature is provided, followed by the property of supermodular functions.

\begin{definition} \label{def:super_curv}
    (\textit{Supermodular Curvature} \cite{bai2018greed}) 
    The supermodular curvature of a monotone non-decreasing non-negative supermodular function $g : 2^{V} \rightarrow \mathbb{R}$ is defined as 
    $\kappa_{g} = 1 - \min_{z \in V} \frac{g(z \mid \emptyset)}{g(z \mid V \setminus  z)}$.
\end{definition}

Supermodular curvature $\kappa_g$ is computationally feasible and requires only linear time in the oracle model. Furthermore, we have the following conclusions.

\begin{lemma}\label{lemma:property_super}
(Property of the supermodular function \cite[Lemma 2]{Lu2022NonSubmodular})
If $g : 2^{V} \rightarrow \mathbb{R}$ is a non-decreasing supermodular function with curvature $\kappa_g$, then for every $A \subseteq V$, it holds that $(1 - \kappa_g)g\left ( A \mid \emptyset \right )  \leq \sum_{z \in A} g(z \mid \emptyset) \leq g\left ( A \mid \emptyset \right )$.

\end{lemma}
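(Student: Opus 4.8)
The plan is to prove the two-sided inequality
$(1-\kappa_g)\,g(A \mid \emptyset) \le \sum_{z \in A} g(z \mid \emptyset) \le g(A \mid \emptyset)$
directly from the definition of supermodular curvature (Definition~\ref{def:super_curv}) together with the increasing-returns characterization of supermodularity. The upper bound is the easier half: enumerate $A = \{z_1, \ldots, z_{|A|}\}$ and write the telescoping expansion $g(A \mid \emptyset) = \sum_{t=1}^{|A|} g\bigl(z_t \mid \{z_1, \ldots, z_{t-1}\}\bigr)$. By the increasing-returns property of supermodular functions, each marginal gain $g(z_t \mid \{z_1, \ldots, z_{t-1}\})$ is at least $g(z_t \mid \emptyset)$ since $\emptyset \subseteq \{z_1, \ldots, z_{t-1}\}$. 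Summing over $t$ gives $g(A \mid \emptyset) \ge \sum_{z \in A} g(z \mid \emptyset)$, which is the right-hand inequality.

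For the lower bound, the key is to control each marginal gain $g(z_t \mid \{z_1, \ldots, z_{t-1}\})$ from above rather than below. By supermodularity (increasing returns), enlarging the conditioning set only increases the marginal gain, so $g(z_t \mid \{z_1, \ldots, z_{t-1}\}) \le g(z_t \mid V \setminus z_t)$, the maximal possible marginal gain of $z_t$. The curvature definition gives $g(z \mid \emptyset) \ge (1-\kappa_g)\,g(z \mid V \setminus z)$ for every $z \in V$, equivalently $g(z \mid V \setminus z) \le \tfrac{1}{1-\kappa_g} g(z \mid \emptyset)$. Substituting into the telescoping sum yields
\begin{equation*}
g(A \mid \emptyset) = \sum_{t=1}^{|A|} g\bigl(z_t \mid \{z_1,\ldots,z_{t-1}\}\bigr) \le \frac{1}{1-\kappa_g} \sum_{z \in A} g(z \mid \emptyset),
\end{equation*}
and rearranging gives $(1-\kappa_g)\,g(A \mid \emptyset) \le \sum_{z \in A} g(z \mid \emptyset)$, the left-hand inequality.

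I expect the main obstacle to be handling the degenerate case $\kappa_g = 1$ cleanly. When $\kappa_g = 1$ the factor $\tfrac{1}{1-\kappa_g}$ blows up, so the division-based argument above must be phrased to avoid dividing by zero; the cleanest fix is to keep the bound in the cross-multiplied form $(1-\kappa_g)\,g(z \mid V \setminus z) \le g(z \mid \emptyset)$ throughout and sum in that form, so that the chain of inequalities reads $(1-\kappa_g)\,g(A \mid \emptyset) \le (1-\kappa_g)\sum_t g(z_t \mid V \setminus z_t) \le \sum_t g(z_t \mid \emptyset)$, which remains valid at $\kappa_g = 1$. A secondary subtlety worth checking is that the bound $g(z_t \mid \{z_1,\ldots,z_{t-1}\}) \le g(z_t \mid V \setminus z_t)$ genuinely requires $\{z_1,\ldots,z_{t-1}\} \subseteq V \setminus z_t$, which holds because the $z_t$ are distinct and all lie in $A \subseteq V$; I would note this explicitly. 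Since both halves are short telescoping arguments, the result essentially reduces to bookkeeping once the curvature inequality is applied in the correct direction, so no heavy machinery is needed.
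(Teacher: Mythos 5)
Your proof is correct. Note, however, that the paper does not actually prove this lemma: it imports the statement verbatim by citation (\cite[Lemma 2]{Lu2022NonSubmodular}), so there is no in-paper argument to compare against. Your telescoping argument is the standard, self-contained proof of this curvature bound: expand $g(A \mid \emptyset) = \sum_{t} g\bigl(z_t \mid \{z_1,\ldots,z_{t-1}\}\bigr)$, push each marginal down to $g(z_t \mid \emptyset)$ via increasing returns for the upper bound, and push each marginal up to $g(z_t \mid V \setminus z_t)$ and then apply the cross-multiplied curvature inequality $(1-\kappa_g)\,g(z \mid V \setminus z) \le g(z \mid \emptyset)$ for the lower bound. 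Your handling of the $\kappa_g = 1$ degeneracy by avoiding division, and your remark that $\{z_1,\ldots,z_{t-1}\} \subseteq V \setminus z_t$ because the enumerated elements are distinct, are both the right precautions. The one step worth making explicit is that multiplying the telescoped sum by $1-\kappa_g$ preserves the inequality, i.e.\ that $0 \le \kappa_g \le 1$: nonnegativity of the ratio $g(z \mid \emptyset)/g(z \mid V \setminus z)$ follows from monotonicity of $g$, and the ratio being at most $1$ follows from supermodularity, so $1-\kappa_g \in [0,1]$. With that line added, the argument is complete and requires nothing beyond Definition~\ref{def:super_curv} and the increasing-returns characterization already stated in the paper.
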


\begin{proposition}\label{prop:subproblem_ratio}
Let $\ddot { \mathbf{X} }_n \subseteq  V$ denote the optimal solution of $\mathcal{P}3_n$ obtained by Algorithm \ref{alg:knapsack_sameweight}. 
We have the following approximation ratio,
\begin{align}\label{equ:lower_bound_subproblem}
    {\tilde F}_n\left ( \ddot { \mathbf{X} }_n \right )  \geq \left ( 1-(\kappa_g)_n \right ) {\tilde F}_n\left ( \tilde{\mathbf{X}}_n^{\ast} \right ),
\end{align}
where $\tilde{\mathbf{X}}_n^{\ast} \in \arg\max \left\{ {\tilde F}_n\left ( \tilde{\mathbf{X}}_n \right ): \tilde{\mathbf{X}}_n \subseteq V,\ {(\ref{cst:user_knapsack}), (\ref{cst:user_binary})} \right\}$ and $(\kappa_g)_n$ is the curvature of the supermodular term ${\tilde F}_{n}^{\mathrm{super} }$ in function $\tilde{F}_n\left ( \tilde{\mathbf{X}}_n  \right )$.

\end{proposition}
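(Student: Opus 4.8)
The plan is to exploit the modular surrogate $\bar F_n = \tilde F_n^{\mathrm{mod}} + H_n$ that defines $\mathcal{P}3_n$, together with the two-sided sandwich provided by Lemma \ref{lemma:property_super}. The central observation is that $H_n$ is exactly the modular under-estimator $\sum_{z} \tilde F_n^{\mathrm{super}}(z \mid \emptyset)$ of the supermodular component, so applying Lemma \ref{lemma:property_super} with $g = \tilde F_n^{\mathrm{super}}$ and curvature $(\kappa_g)_n$ (and using $\tilde F_n^{\mathrm{super}}(\emptyset)=0$, so that $g(A\mid\emptyset)=g(A)$) gives, for every feasible $A$,
\begin{equation*}
(1-(\kappa_g)_n)\,\tilde F_n^{\mathrm{super}}(A) \;\leq\; H_n(A) \;\leq\; \tilde F_n^{\mathrm{super}}(A).
\end{equation*}

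First I would use the upper bound to pass from the true objective at the computed point to the surrogate objective: since $H_n(\ddot{\mathbf{X}}_n) \leq \tilde F_n^{\mathrm{super}}(\ddot{\mathbf{X}}_n)$ and the modular term $\tilde F_n^{\mathrm{mod}}$ is shared by both objectives, one obtains $\tilde F_n(\ddot{\mathbf{X}}_n) \geq \bar F_n(\ddot{\mathbf{X}}_n)$. Next, because $\ddot{\mathbf{X}}_n$ maximizes $\bar F_n$ over the same knapsack-feasible region in which $\tilde{\mathbf{X}}_n^{\ast}$ lives, optimality of $\ddot{\mathbf{X}}_n$ for $\mathcal{P}3_n$ yields $\bar F_n(\ddot{\mathbf{X}}_n) \geq \bar F_n(\tilde{\mathbf{X}}_n^{\ast})$. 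The decisive step is then to apply the \emph{lower} bound of Lemma \ref{lemma:property_super} at the feasible point $\tilde{\mathbf{X}}_n^{\ast}$, which gives $\bar F_n(\tilde{\mathbf{X}}_n^{\ast}) = \tilde F_n^{\mathrm{mod}}(\tilde{\mathbf{X}}_n^{\ast}) + H_n(\tilde{\mathbf{X}}_n^{\ast}) \geq \tilde F_n^{\mathrm{mod}}(\tilde{\mathbf{X}}_n^{\ast}) + (1-(\kappa_g)_n)\,\tilde F_n^{\mathrm{super}}(\tilde{\mathbf{X}}_n^{\ast})$.

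To finish, I would absorb the curvature factor uniformly across both components: since $\tilde F_n^{\mathrm{mod}}\geq 0$ (it is a nonnegative modular function) and $(1-(\kappa_g)_n)\leq 1$, the modular term can be scaled down to $(1-(\kappa_g)_n)\,\tilde F_n^{\mathrm{mod}}(\tilde{\mathbf{X}}_n^{\ast})$, which lets me factor out $(1-(\kappa_g)_n)$ and recognize the remaining sum as $\tilde F_n(\tilde{\mathbf{X}}_n^{\ast})$. Chaining the three inequalities then delivers \eqref{equ:lower_bound_subproblem}. The only point requiring care—and the main obstacle—is the bookkeeping of the direction of each Lemma \ref{lemma:property_super} inequality (upper bound evaluated at $\ddot{\mathbf{X}}_n$, lower bound evaluated at $\tilde{\mathbf{X}}_n^{\ast}$) and verifying that $\tilde F_n^{\mathrm{super}}(\emptyset)=\tilde F_n^{\mathrm{mod}}(\emptyset)=0$, so that the marginal gains from $\emptyset$ coincide with the function values used above; both vanish because the latency-reduction functions are zero on the empty caching set.
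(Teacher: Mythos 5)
Your proof is correct and follows essentially the same route as the paper's: sandwich the surrogate $\bar F_n$ between $\tilde F_n$ and its curvature-discounted version via Lemma \ref{lemma:property_super}, invoke exact DP optimality of $\ddot{\mathbf{X}}_n$ for $\mathcal{P}3_n$, and then absorb the factor $(1-(\kappa_g)_n)$ into the nonnegative modular term. The only cosmetic difference is that you normalize $\tilde F_n^{\mathrm{super}}(\emptyset)=0$ up front (and justify it), whereas the paper carries the $\tilde F_n^{\mathrm{super}}(\emptyset)$ terms explicitly and drops them at the end.
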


\begin{proof}
The result is obtained by lower-bounding the supermodular component via the generalized curvature and then comparing the resulting modularized objective with the optimal value. The detailed proof is provided in Appendix \ref{proof:prop_subproblem_ratio}.

\end{proof}

Finally, we establish the following theorem, providing a constant approximation guarantee for our proposed algorithm.

\begin{theorem}\label{theorem:approximation_ratio_general_case}
Let $ \ddot{\mathbf{X} } $ and $ \tilde{\mathbf{X}} ^{\ast }$  denote the solution obtained by the proposed algorithm for $K \geq 1$ and the globally optimal solution of $\mathcal{P}1$.
 We have $F\left ( \ddot{\mathbf{X} }  \right ) \geq 
\frac{1-\kappa_g^{\max}}{2}  F\left ( \tilde{\mathbf{X}} ^{\ast }  \right )$, where $\kappa_g^{\max}=\max_{n \in \mathcal{N} } \; (\kappa_g)_n$. 
Specifically, when $N=1$, the obtained solutions by the proposed algorithm can provide $(1-(\kappa_g)_1)$-approximation guarantee.
\end{theorem}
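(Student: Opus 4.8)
The plan is to propagate the per-server guarantee of Proposition \ref{prop:subproblem_ratio} to the global objective through the telescoping identity (\ref{equ:union_solution}), with the extra factor $1/2$ emerging from the sequential, server-by-server structure of the decomposition. Write $\ddot{\mathbf{X}}_{1:n}=\bigcup_{n'\le n}\ddot{\mathbf{X}}_{n'}$ (with $\ddot{\mathbf{X}}_{1:0}=\emptyset$), and let $O_n$ be the restriction of the global optimum $\tilde{\mathbf{X}}^{\ast}$ to edge server $n$, so that $\tilde{\mathbf{X}}^{\ast}=\bigcup_n O_n$. The case $N=1$ is immediate and should be dispatched first: the only subproblem $\mathcal{P}2_1$ then coincides with $\mathcal{P}1$, so Proposition \ref{prop:subproblem_ratio} gives $F(\ddot{\mathbf{X}})=\tilde{F}_1(\ddot{\mathbf{X}}_1)\ge(1-(\kappa_g)_1)\,F(\tilde{\mathbf{X}}^{\ast})$ with no further work.

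For $N>1$ I would first assemble two per-server facts. \emph{Feasibility}: since $\tilde{\mathbf{X}}^{\ast}$ satisfies every knapsack constraint (\ref{cst:P1_knapsack}), its slice $O_n$ is feasible for $\mathcal{P}2_n$ under (\ref{cst:user_knapsack})--(\ref{cst:user_binary}); because $\tilde{\mathbf{X}}_n^{\ast}$ is optimal for $\mathcal{P}2_n$ with the prefix $\ddot{\mathbf{X}}_{1:n-1}$ held fixed, this yields $\tilde{F}_n(\tilde{\mathbf{X}}_n^{\ast})\ge\tilde{F}_n(O_n)$, with both sides evaluated on the same prefix. \emph{Curvature}: monotonicity gives $\tilde{F}_n(\tilde{\mathbf{X}}_n^{\ast})\ge\tilde{F}_n(\emptyset)=0$, so the replacement $1-(\kappa_g)_n\ge 1-\kappa_g^{\max}$ may be substituted into Proposition \ref{prop:subproblem_ratio} to obtain $\tilde{F}_n(\ddot{\mathbf{X}}_n)\ge(1-\kappa_g^{\max})\,\tilde{F}_n(O_n)$. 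Noting that $\tilde{F}_n(O_n)=F(\ddot{\mathbf{X}}_{1:n-1}\cup O_n)-F(\ddot{\mathbf{X}}_{1:n-1})$ by the definition of the subproblem objective, I would sum over $n$ and apply (\ref{equ:union_solution}) to reach $F(\ddot{\mathbf{X}})=\sum_n\tilde{F}_n(\ddot{\mathbf{X}}_n)\ge(1-\kappa_g^{\max})\sum_n\big(F(\ddot{\mathbf{X}}_{1:n-1}\cup O_n)-F(\ddot{\mathbf{X}}_{1:n-1})\big)$.

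The decisive step is the aggregation inequality $\sum_n\big(F(\ddot{\mathbf{X}}_{1:n-1}\cup O_n)-F(\ddot{\mathbf{X}}_{1:n-1})\big)\ge F(\tilde{\mathbf{X}}^{\ast})-F(\ddot{\mathbf{X}})$. The intended route writes the right-hand side as a telescoping sum of the marginals of $O_1,\dots,O_N$ added on top of $\ddot{\mathbf{X}}$, namely $F(\ddot{\mathbf{X}}\cup\tilde{\mathbf{X}}^{\ast})-F(\ddot{\mathbf{X}})=\sum_n\big(F(\ddot{\mathbf{X}}\cup O_{1:n})-F(\ddot{\mathbf{X}}\cup O_{1:n-1})\big)$, bounds this below by $F(\tilde{\mathbf{X}}^{\ast})-F(\ddot{\mathbf{X}})$ by monotonicity, and then compares each summand with its probe counterpart taken over the smaller base $\ddot{\mathbf{X}}_{1:n-1}$. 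Combining the aggregation with the displayed bound and solving the resulting linear inequality for $F(\ddot{\mathbf{X}})$ gives $F(\ddot{\mathbf{X}})\ge\frac{1-\kappa_g^{\max}}{2-\kappa_g^{\max}}\,F(\tilde{\mathbf{X}}^{\ast})\ge\frac{1-\kappa_g^{\max}}{2}\,F(\tilde{\mathbf{X}}^{\ast})$, which is the claimed guarantee.

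The hard part is justifying the term-by-term comparison of marginals: it rests on the diminishing-returns inequality $F(v\mid\ddot{\mathbf{X}}_{1:n-1})\ge F(v\mid\ddot{\mathbf{X}}\cup O_{1:n-1})$ for $v$ placed at server $n$, yet Proposition \ref{proposition:K2_nonsub_nonsuper} shows $F$ is neither submodular nor supermodular, so this inequality cannot be invoked for $F$ directly—indeed the larger base already fills server $n$ with $\ddot{\mathbf{X}}_n$, and intra-server co-activation can make the marginal grow rather than shrink. I would resolve this by running the aggregation not on $F$ but on the monotone modular surrogate $\bar{F}_n$ that Algorithm \ref{alg:knapsack_sameweight} optimizes exactly; modularity makes the required marginal identity hold across servers, so the factor $1/2$ is obtained cleanly, while the discrepancy between $\bar{F}_n$ and the true $\tilde{F}_n$ is precisely the supermodular-curvature slack quantified by Lemma \ref{lemma:property_super}. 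The delicate bookkeeping I would watch is to charge this slack only once, so that the curvature factor entering through Proposition \ref{prop:subproblem_ratio} and the factor $1/2$ from the surrogate aggregation combine to exactly $\tfrac{1-\kappa_g^{\max}}{2}$ rather than being double-counted.
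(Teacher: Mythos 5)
Your overall architecture differs from the paper's, and the difference matters: the step you yourself flag as ``the hard part'' is a genuine gap, and the paper's proof is structured precisely so that this step is never needed. Your route is the standard local-to-global argument: bound each $\tilde F_n(\ddot{\mathbf{X}}_n)$ against $\tilde F_n(O_n)$ and then aggregate via $\sum_n\bigl(F(\ddot{\mathbf{X}}_{1:n-1}\cup O_n)-F(\ddot{\mathbf{X}}_{1:n-1})\bigr)\ge F(\tilde{\mathbf{X}}^{\ast})-F(\ddot{\mathbf{X}})$, which yields the $\frac{\alpha}{1+\alpha}$-type constant. As you concede, that aggregation rests on comparing marginals of $F$ across nested bases, which Proposition \ref{proposition:K2_nonsub_nonsuper} rules out. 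Your proposed repair---running the aggregation on the modular surrogate---does not close the gap as stated: $\bar F_n$ is a \emph{different} function for each $n$ (its marginals are taken relative to the prefix $\ddot{\mathbf{X}}_{1:n-1}$ and only over server-$n$ elements), so there is no single modular function whose telescoping over $O_1,\dots,O_N$ recovers $F(\tilde{\mathbf{X}}^{\ast})$; converting $\sum_n\bar F_n(O_n)$ back into a statement about $F(\tilde{\mathbf{X}}^{\ast})$ reintroduces exactly the cross-server, non-submodular marginals the surrogate was meant to avoid. The ``charge the slack only once'' bookkeeping is where the argument would have to live, and it is not supplied.

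The paper's proof sidesteps aggregation entirely. It introduces a comparison set $\dot{\mathbf{X}}_n$ equal to $\tilde{\mathbf{X}}_n^{\ast}$ when $\ddot{\mathbf{X}}_n\neq\tilde{\mathbf{X}}_n^{\ast}$ and $\emptyset$ otherwise, and chains the elementary inequality $(1-\kappa_g^{\max})\tilde F_n(\tilde{\mathbf{X}}_n^{\ast})\le \tilde F_n(\ddot{\mathbf{X}}_n)+(1-\kappa_g^{\max})\tilde F_n(\dot{\mathbf{X}}_n)$ with Proposition \ref{prop:subproblem_ratio} (which gives $(1-\kappa_g^{\max})\tilde F_n(\dot{\mathbf{X}}_n)\le\tilde F_n(\ddot{\mathbf{X}}_n)$) to obtain the per-server guarantee $\tilde F_n(\ddot{\mathbf{X}}_n)\ge\frac{1-\kappa_g^{\max}}{2}\tilde F_n(\tilde{\mathbf{X}}_n^{\ast})$ term by term; summing and invoking the telescoping identity (\ref{equ:union_solution}) then gives the theorem. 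The factor $1/2$ is thus per-server slack absorbed by the comparison set, not the loss from cross-server interference in a local-search argument, and no submodularity of $F$ is ever used. Your $N=1$ case and your feasibility observation (that $O_n$ is feasible for $\mathcal{P}2_n$) are fine, but to salvage your route you would need to actually prove the aggregation inequality for this non-submodular $F$---which the generic marginal comparison cannot do---or switch to the paper's termwise bookkeeping.
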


\begin{proof}
By applying Proposition \ref{prop:subproblem_ratio} to each edge server and introducing a comparison set that isolates the optimal per-edge server contribution, we establish two bounds that relate the optimal solution, the comparison set, and the solution returned by the proposed algorithm. The detailed proof is provided in Appendix \ref{proof:theorem_approximation_ratio_general_case}.

\end{proof}

\subsection{Discussion of Proposed Algorithm and Solutions}

In this subsection, we refine the previously established approximation guarantee in Theorem \ref{theorem:approximation_ratio_general_case} under some practical assumptions, discuss the applicability of the proposed algorithms in special model-storage scenarios, and further explain why the proposed method performs better over the greedy algorithm.

\begin{proposition}[Approximation Ratio Under Symmetric Links and Communication-dominant Latency]~\label{prop:approximationratio_condition}
We consider a setting where communication links among edge servers and between each edge server and the cloud are symmetric. We also assume that communication latency dominates the E2E inference latency, rendering computing latency negligible.
Under these assumptions, the curvature characterization in Theorem \ref{theorem:approximation_ratio_general_case} leads to the following guarantees.
In the single-edge-server scenario, the curvature satisfies $\left ( \kappa_g \right )_1 \approx \frac{1}{2}$, and the proposed algorithm achieves a $\frac{1}{2}$-approximation ratio.
In the general multi-edge-server scenario, the curvature lies in the interval $\kappa_g^{\max} \in (\frac{1}{2}, 1)$, and the proposed algorithm guarantees a $\frac{1}{4}$-approximation ratio.
\end{proposition}

\begin{proof}
	
Please see Appendix \ref{proof:prop_approximationratio_condition}.

\end{proof}

\begin{remark}[Applicability of Algorithm \ref{alg:general_K>=1} and \ref{alg:knapsack_sameweight} in Special Model Storage Cases]
While Algorithm \ref{alg:general_K>=1} and \ref{alg:knapsack_sameweight} are designed for $K \geq 1$, they can also be applied to the special case $K_m=1$ for any $m \in \mathcal{M}$. If we use Algorithm \ref{alg:general_K>=1} or \ref{alg:knapsack_sameweight} to successively solve $\mathcal{P}2_n$, the obtained solutions can provide $1/2$-approximation guarantee. The reason lies in the fact that only the modular part is left when solving $\mathcal{P}2_n$ for any $n \in \mathcal{N}$, and the optimal solution can be obtained using Algorithm \ref{alg:general_K>=1} or \ref{alg:knapsack_sameweight}.
\end{remark}

\begin{remark}[Why the Proposed Algorithm Outperforms Greedy Algorithm]
   The proposed method precomputes a utility score for each expert based on its isolated contribution when added to an empty set. Although the utilities are computed independently, the dynamic programming approach systematically explores the combinatorial space of expert subsets. This strategy mitigates the shortsightedness inherent in greedy methods and yields higher-quality solutions.
\end{remark}

\section{Experimental Results}\label{sec:experiment}
In this section, we conduct simulations to demonstrate the effectiveness of the proposed algorithms.

\subsection{Experimental Settings}
In our simulations, there are $N=4$ edge servers and $U=20$ edge devices distributed in a cell of size 1 km $\times$ 1 km.
The wireless parameters are set as follows: The transmit power of users and edge servers is 0.01 W~\cite{9210812} and 38 dBm~\cite{3GPP_TR_25_951}, respectively. The bandwidth between a user and its associated edge server is 5 MHz and the bandwidth between edge servers is 100 MHz. The latency between edge servers and cloud is 0.01 s~\cite{8463568}.
The path-loss coefficient is $\alpha = 4$, the antenna-related factor is 1, and the noise power spectral density is -174 dBm/Hz~\cite{qu2024trimcaching}.
The computing capabilities of edge devices, edge servers, and the cloud are 50 TFLOPs, 82.58 TFLOPs, and 312 TFLOPs, respectively. 
Besides, the storage capacity of edge servers for expert caching is 5 GB.
Our MoE model library contains 18 models with a total of nearly 4,000 experts. 
These models can be grouped into three families. The first family is the Switch Transformer series, including ST-b-8, ST-b-16, and ST-b-32, which all adopt a Top-1 expert activation strategy. The second family is the MoE-LLaVA series, consisting of MoE-LLaVA-StableLM-1.6B-4e, MoE-LLaVA-Qwen-1.8B-4e, and MoE-LLaVA-Phi2-2.7B-4e, which use a Top-2 activation policy. The third family is the LLaMA-MoE series, exemplified by LLaMA-MoE-3.5B, which utilizes a Top-4 expert activation mechanism.
Each user has a set of 3 to 5 models to request, and each time it requests one model drawn from a Zipf distribution. Given the limited storage capacity of users and the assumption that the non-expert components of the MoE models are stored locally, each user stores 200 experts from the entire set of experts for the requested models.
The MoE models are tested on the SQA~\cite{lu2022learn} and VQA-v2~\cite{goyal2017making} datasets.
All experiments were conducted in two stages. In the offline stage, we computed the expert activation statistics for all tokens across different MoE models and datasets using an NVIDIA GeForce RTX 4090 GPU. In the online stage, the proposed SlimCaching algorithm and all baseline methods were implemented and executed in MATLAB R2024b on a machine equipped with an Apple M3 processor.

We adopt the following algorithms as benchmarks:
\begin{itemize}
    \item \textbf{Greedy Algorithm}: In each step, the algorithm greedily selects the expert and its corresponding edge server that provides the maximum marginal gain in utility, which corresponds to Algorithm \ref{alg:traditional_greedy}.

    \item \textbf{Least Frequently Used (LFU) Algorithm}: Each edge server caches the most frequently requested experts until reaching the upper limit of its storage capacity. 

        \item \textbf{Random Algorithm}: Each edge server randomly selects experts to cache until reaching the upper limit of its storage capacity.

        \item \textbf{U-shaped (layer)}:
        This baseline follows a U-shaped SI design and caches parameters at the granularity of an entire MoE block rather than at the expert level. Once an MoE layer is selected for caching at an edge server, all experts within that layer are stored together until the storage capacity is reached.
\end{itemize}

Overall, the first three benchmark algorithms perform caching at the expert granularity, while the ``U-shaped (layer)" baseline follows a layer-level policy that caches entire MoE blocks as indivisible units, providing a contrasting distributed inference paradigm for comparison.

\subsection{Performance Comparison}
This subsection evaluates the latency performance of different algorithms by varying the storage capacities of edge servers $Q$, the number of experts locally stored by each user $\rho$, the distribution of user requests for MoE models $p_{u,m}$, the bandwidth between the user and its associated edge server $B_u$, and the number of edge servers $N$ and users $U$. 

Fig. \ref{fig:fig_storage_BS} presents the average per-token latency under different storage capacities at the edge servers. Across all tested configurations, the proposed algorithm consistently achieves the lowest latency.
For instance, under a constrained storage budget of 2.5 GB, the proposed method achieves an average latency of 10.34 ms, yielding a 16.7\% reduction compared with the greedy scheme and a 19.5\% reduction compared with the LFU scheme.
As analyzed in Remark \ref{remark:greedy_limitation}, the greedy scheme suffers from locally optimal caching choices, while LFU treats each edge server independently. In contrast, our successive-greedy decomposition explicitly captures the cooperative caching structure among edge servers, enabling more effective expert placement at the global level.
Both the Random and U-shaped schemes result in substantially higher latency compared with the above three expert-based caching schemes across all storage configurations. 
In particular, the U-shaped approach exhibits high latency because the hidden-state transmission pattern of an MoE model is fixed by whether the full model is cached across users and edge servers. If a user does not store all MoE layers, the hidden state of every token must be uploaded to either the edge server or the cloud for processing. Furthermore, if a user and edge servers together still do not hold the complete set of MoE layers, every token must inevitably be sent to the cloud. As a result, increasing the storage capacity of the edge server has little effect on reducing latency when the edge server’s storage capacity is limited. In contrast, SlimCaching can skip hidden-state transmissions for tokens whose activated experts are cached locally, and can further avoid routing to the cloud whenever the required experts are collectively cached at the user and edge servers. This capability leads to substantially lower communication latency as the edge server's storage capacity increases.

\begin{figure}[!t]
	\centering
\subfigure[Latency vs. storage capacity of edge servers.]{\includegraphics[width =0.24\textwidth]{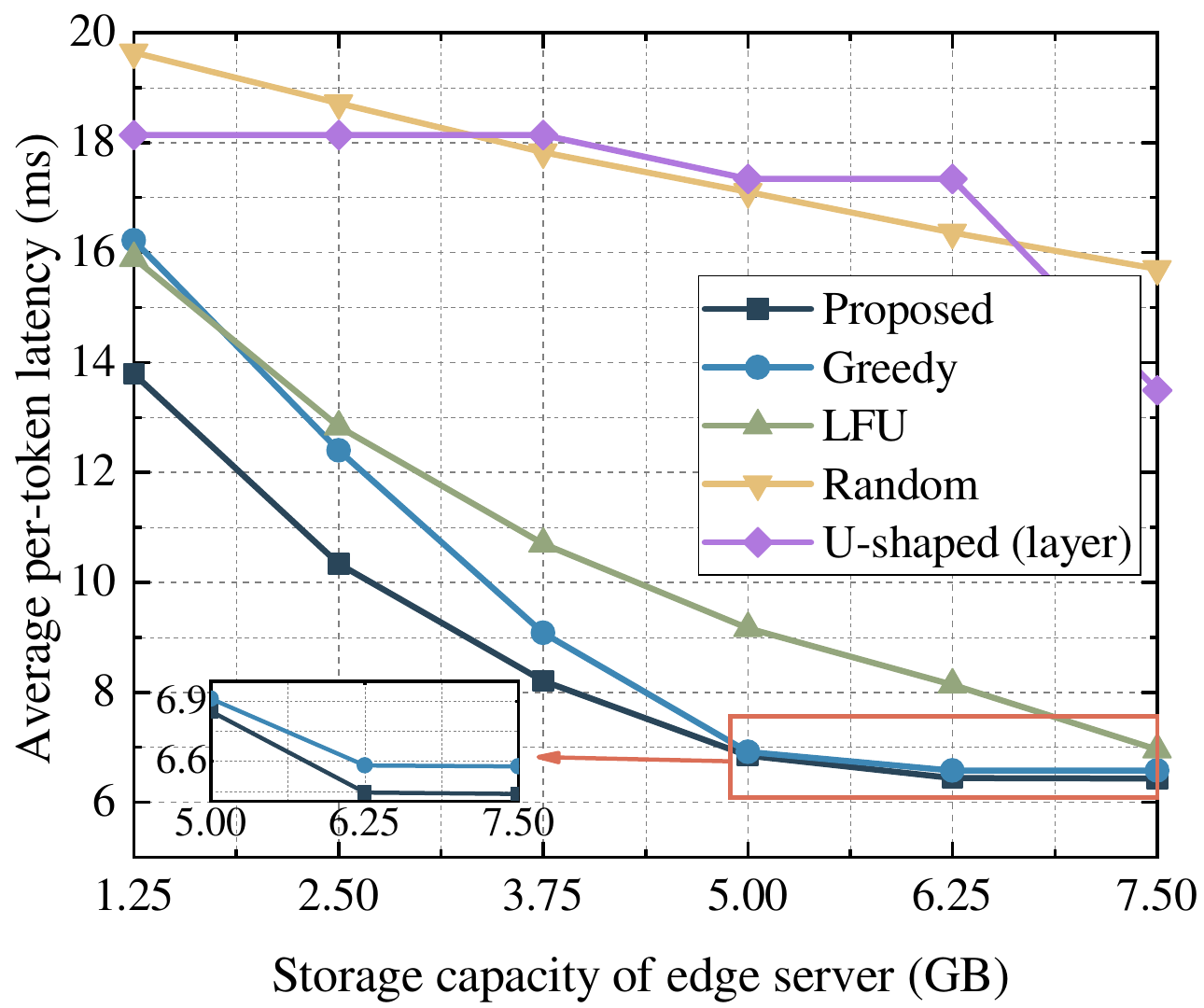}\label{fig:fig_storage_BS}}
	\subfigure[Latency vs. number of experts locally stored by each user.]{\includegraphics[width =0.24\textwidth]{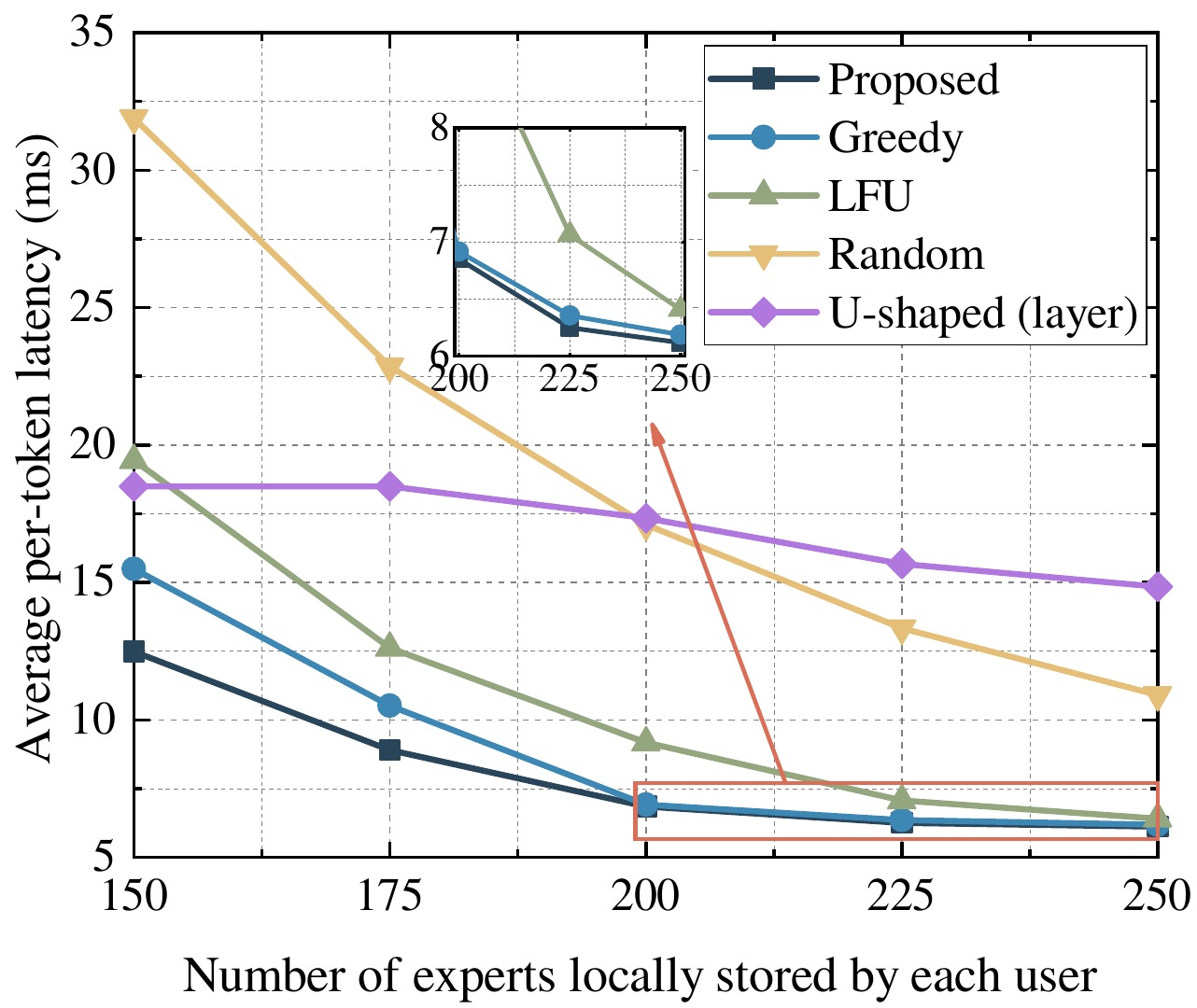}\label{fig:fig_num_local_cached_expert}}
	\caption{Latency performance comparison of algorithms under different storage capacities of edge servers and users.} 
\label{fig:combine_storage_BS_UE}  
\end{figure}

Fig. \ref{fig:fig_num_local_cached_expert} illustrates the impact of varying the number of locally cached experts at each user on the average per-token latency. As expected, increasing the user-side cache capacity reduces the latency across all algorithms, since fewer experts need to be retrieved from edge or cloud servers. Across all configurations, the proposed algorithm consistently achieves the lowest latency. The advantage of the proposed method is particularly evident when the number of locally cached experts is small, where selecting the most impactful experts at the edge servers becomes crucial for reducing inference latency. As the local caching capacity increases, the latency of all expert-based strategies gradually converges. However, the proposed method maintains a consistently lower latency across the entire range, demonstrating its ability to adapt edge-server caching decisions to the user-side caching configuration and thereby achieve more efficient expert placement.
Although the LFU and Random schemes also benefit from increased user storage, they still exhibit noticeably higher latency than the proposed method. The U-shaped SI scheme performs the worst among all algorithms, with persistently high latency and only marginal improvement as user storage grows.

Fig. \ref{fig:fig_num_request_model} shows the average per-token latency as the number of models requested by each user increases. As expected, the latency of all algorithms increases as the request load becomes heavier, since each additional model request requires retrieving a larger set of experts and intensifies competition for cached experts in the edge network.
Across all request levels, the proposed algorithm consistently achieves the lowest latency. The performance gap between the proposed method and the greedy algorithm widens as user demand increases, indicating that greedy caching becomes less effective under heavier workloads.
The LFU, Random, and U-shaped schemes show significantly higher latency across all request levels. Their performance deteriorates rapidly as the number of requested models increases.

\begin{figure}[!t]
	\centering
	\subfigure[{Latency vs. number of models requested per end user.}]{\includegraphics[width =0.235\textwidth]{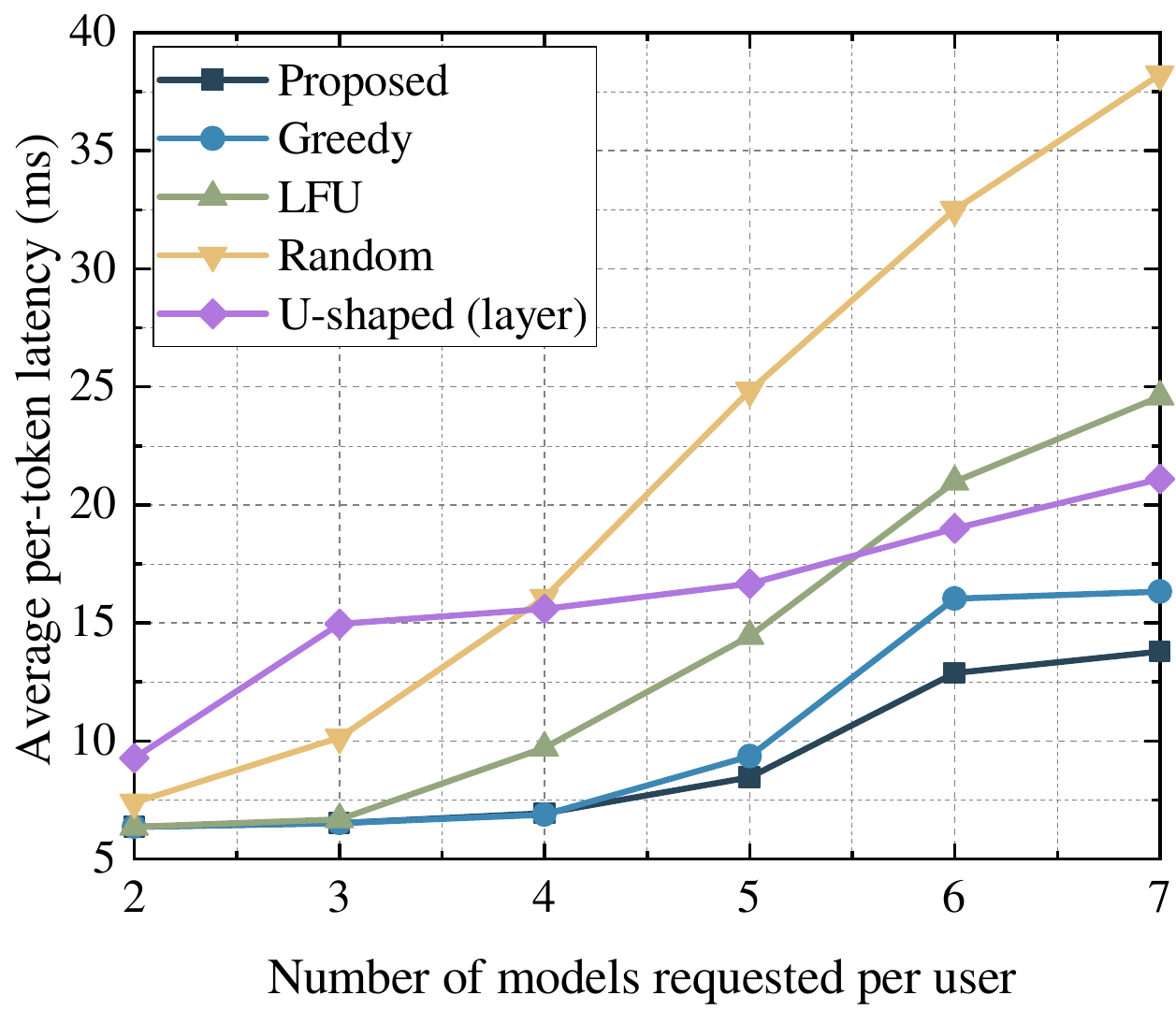}\label{fig:fig_num_request_model}}
	\subfigure[{Latency vs. bandwidth between the user and its associated edge server.}]{\includegraphics[width =0.245\textwidth]{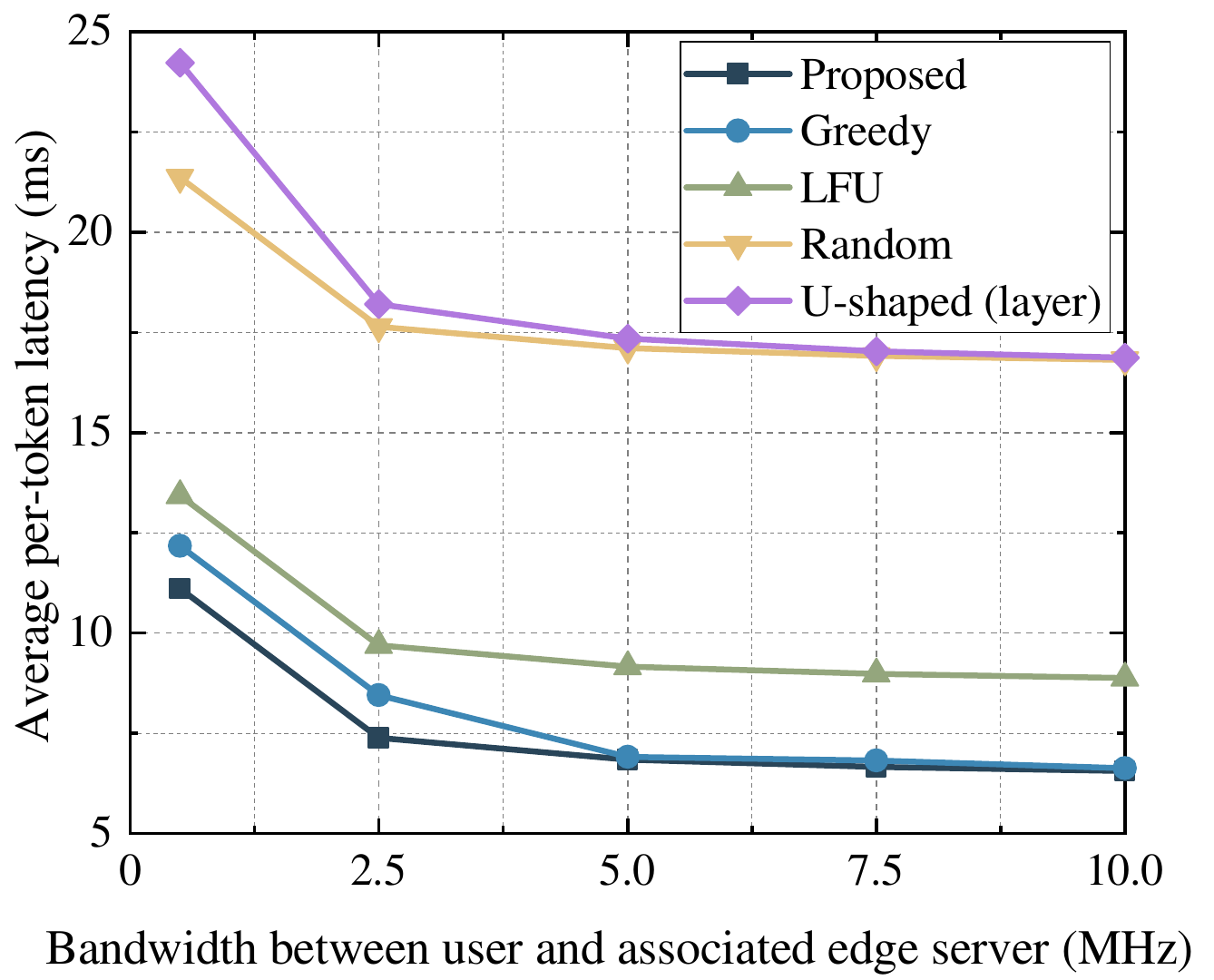}\label{fig:fig_band_U2B}}
	\caption{Latency performance comparison of algorithms under varying end-user configurations.} 
	\label{fig:latency_UEconfigure}  
\end{figure}

Fig. \ref{fig:fig_band_U2B} illustrates the average inference latency under different bandwidth values between each user and its associated edge server. As expected, the latency of all algorithms decreases when the available bandwidth increases, since higher data rates reduce the time required to upload hidden states and download expert output results.
Across all bandwidth levels, the proposed algorithm consistently achieves the lowest latency. The advantage of the proposed method is most pronounced at small bandwidths, where communication resources are scarce and inefficient expert placement can easily translate into additional transmission delay.
As the bandwidth increases, the communication bottleneck is gradually alleviated, and the curves of the expert-based schemes converge, leading to a smaller absolute gap between the proposed and greedy methods.
The LFU, Random, and U-shaped schemes exhibit significantly higher latency across all bandwidth settings. Although their latency also decreases with increasing bandwidth, they remain far above the proposed and greedy curves, indicating that their caching decisions still lead to higher overall latency even when the wireless link is less restrictive.

Fig. \ref{fig:fig_num_BS} illustrates the impact of the number of edge servers on the average inference latency. It can be observed that the latency of all algorithms decreases as the number of edge servers increases, because a larger set of distributed caches improves expert availability and reduces the inter-edge transmission distance between the serving edge server and the servers storing the required experts.
Across all configurations, the proposed algorithm consistently achieves the lowest latency. The advantage of the proposed method is most pronounced when the number of edge servers is small, where limited edge resources make coordinated expert placement especially important. In this region, greedy and LFU incur noticeably higher latency due to their less coordinated caching behavior. As more edge servers are deployed, the proposed, greedy, and LFU methods converge in performance, since all three can leverage the increased caching capacity in the edge network.
In contrast, the Random and U-shaped schemes maintain substantially higher latency across all settings, even though they also benefit from the additional edge servers.

\begin{figure}[!t]
	\centering
	\subfigure[{Latency vs. number of edge servers.}]{\includegraphics[width =0.24\textwidth]{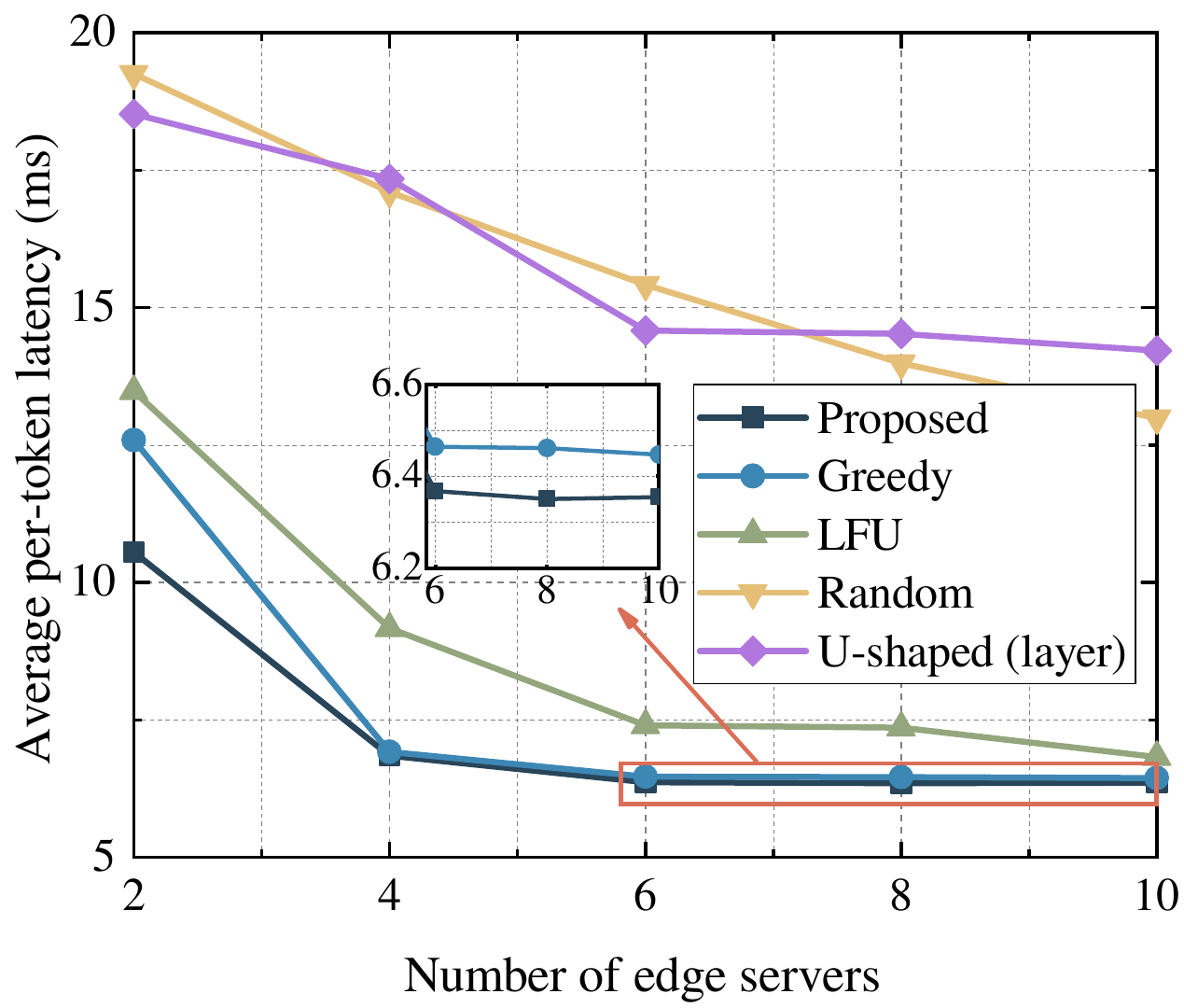}\label{fig:fig_num_BS}}
	\subfigure[{Latency vs. number of users.}]{\includegraphics[width =0.24\textwidth]{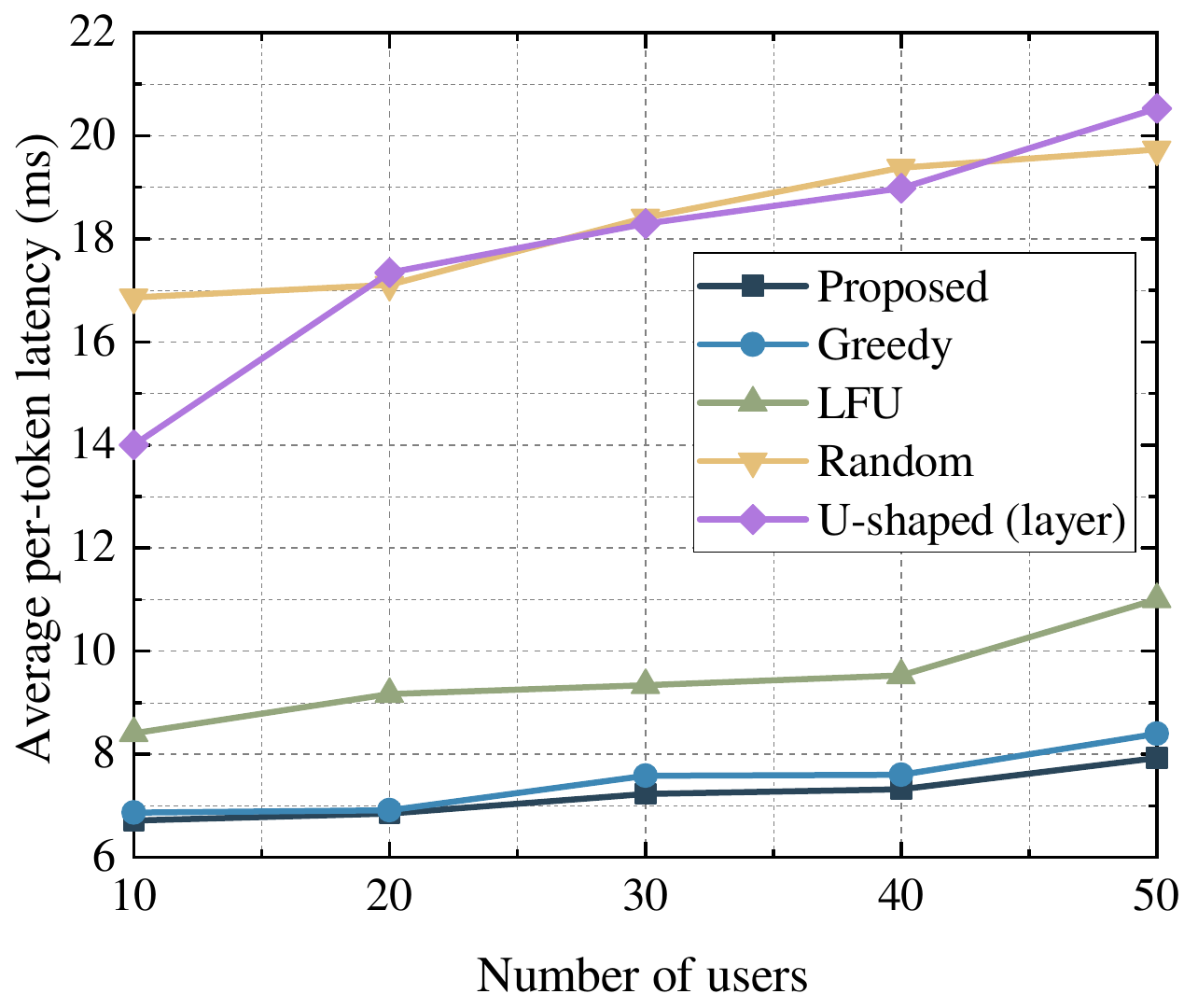}\label{fig:fig_num_UE}}
	\caption{Latency performance comparison of algorithms under different numbers of edge nodes.} 
	\label{fig}  
\end{figure}

Fig. \ref{fig:fig_num_UE} shows that the average inference latency increases for all algorithms as the number of users grows from 10 to 50. This rising trend occurs because all users share a fixed total bandwidth. When more users join the system, each user is allocated a smaller portion of the bandwidth, leading to increased communication delay. In addition, a larger user population intensifies contention for cached experts at edge servers, which further contributes to the increased overall latency.
Across all user densities, the proposed algorithm consistently achieves the lowest latency. Its performance advantage becomes more evident as the number of users increases, since coordinated expert placement becomes more important under constrained-resource scenarios. The greedy and LFU schemes also exhibit increasing latency but remain noticeably higher than the proposed method. The Random and U-shaped strategies show the highest latency across all settings, as their ineffective caching behavior becomes increasingly detrimental under congestion.

\subsection{Algorithm Running Time Comparison}

In this subsection, we discuss the impact of several key parameters on the algorithm's running time, including the storage capacity of edge server $Q$, the number of MoE models $M$ (equivalent to the number of experts $E$), and the number of edge servers $N$ and users $U$.

Fig. \ref{fig:runningtime_num_BSmodel} compares the algorithm running time of the proposed and the greedy algorithms under varying edge server's storage capacities (left) and different numbers of MoE models (right). In both cases, the proposed algorithm demonstrates significantly better computational efficiency.
Fig. \ref{fig:runtime_storage_BS} shows the algorithm running time under different edge server storage capacities. As storage increases from 1.25 GB to 7.5 GB, the running time of the greedy algorithm rises sharply, showing poor scalability. In contrast, the proposed algorithm maintains a much lower and more stable runtime over the same range.
This striking difference is as follows. Greedy explores the solution space through repeated local searches, and as storage capacity increases, the number of candidate caching combinations grows rapidly, leading to substantial computational overhead. The computational complexity of the proposed method is linear in $Q$, allowing it to scale efficiently with storage size.
Fig. \ref{fig:runtime_nummodel} illustrates the impact of the number of MoE models on the algorithm running time, exhibiting a trend similar to that observed in Fig. \ref{fig:runtime_storage_BS}. The nonlinear growth of the greedy algorithm aligns with its computational complexity of $\mathcal{O}(N^2E^2)$.

\begin{figure}[!t]
	\centering
	\subfigure[{Effect of storage capacity.}]{\includegraphics[width =0.24\textwidth]{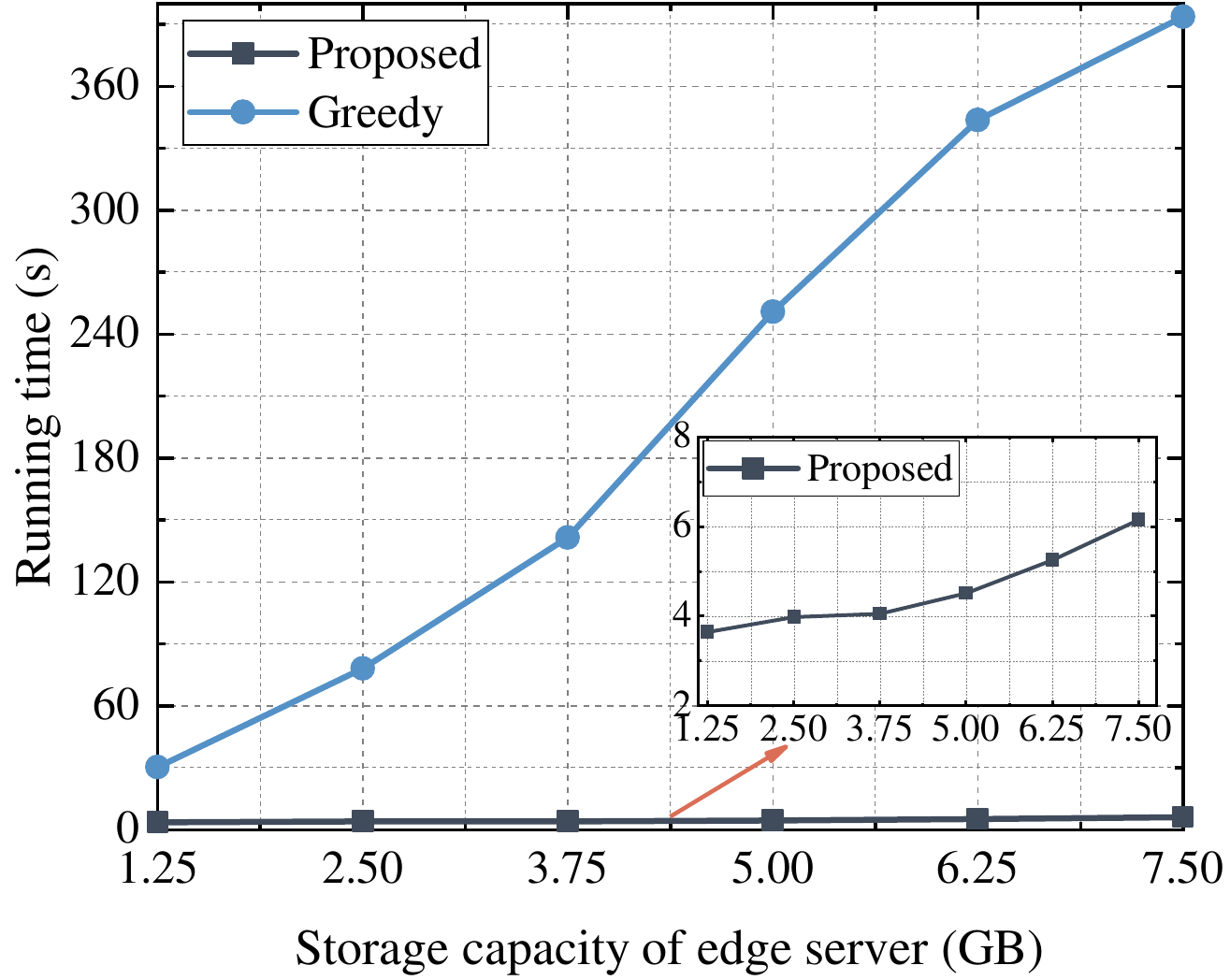}\label{fig:runtime_storage_BS}}
	\subfigure[{Effect of number of MoE models.}]{\includegraphics[width =0.24\textwidth]{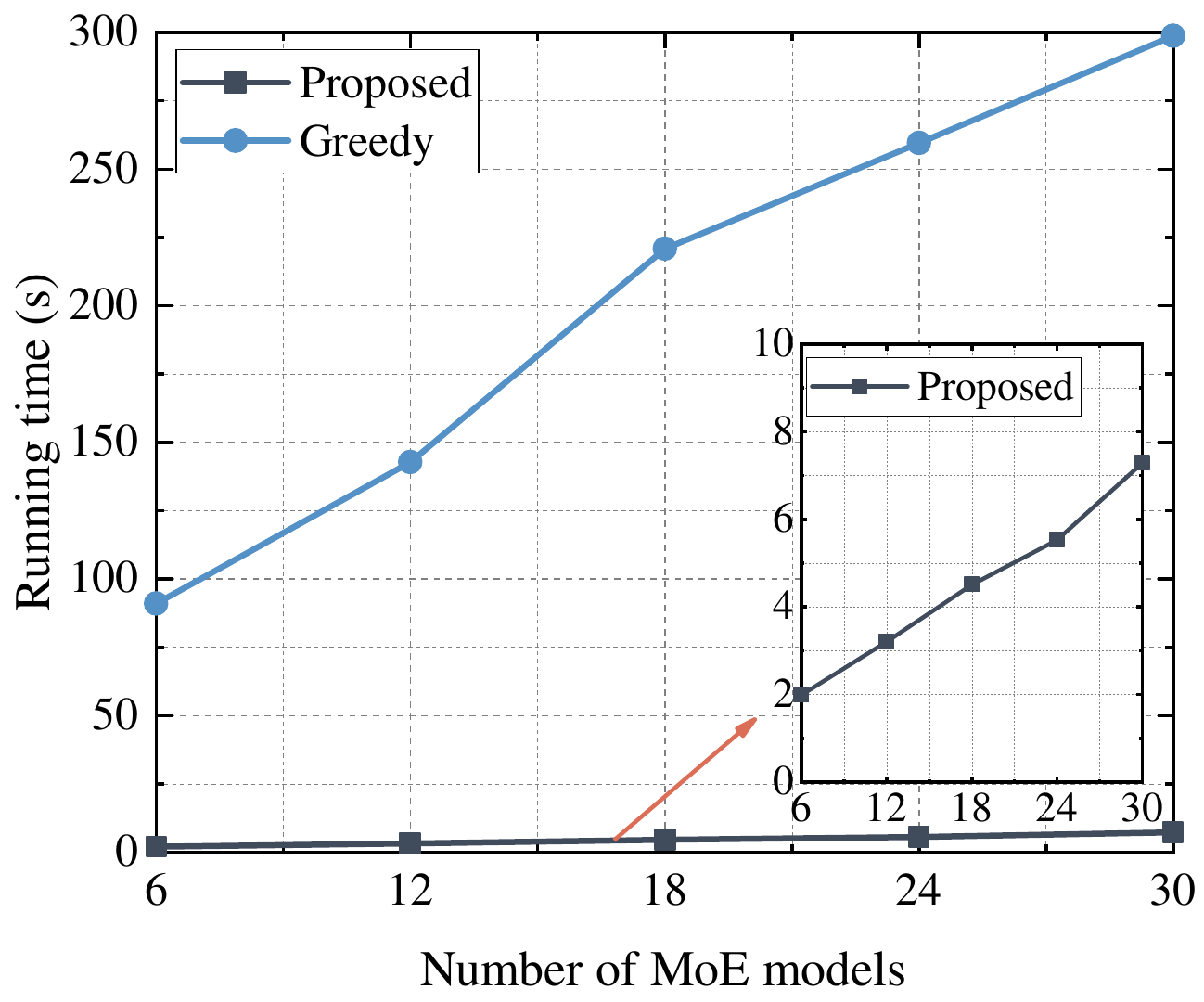}\label{fig:runtime_nummodel}}
	\caption{Algorithm running time under different storage capacities and MoE models.} 
\label{fig:runningtime_num_BSmodel}  
\end{figure}

Fig. \ref{fig:runningtime_num_BS_UE} illustrates the impact of network scale on the algorithm running time, represented by the number of edge servers (left) and users (right). Across both subfigures, the proposed algorithm consistently achieves superior scalability compared to the greedy one.
In Fig. \ref{fig:fig_runningtime_num_BS}, as the number of edge servers increases, the running time of the greedy algorithm grows rapidly, exceeding 900 s at 10 edge servers. This is due to the combinatorial increase in placement possibilities, which significantly enlarges the search space for greedy's decision-making. In contrast, the proposed method maintains a much lower execution time. 
In Fig. \ref{fig:fig_runningtime_numUE}, although users are not part of the computational complexity in both algorithms, increasing the number of users still leads to longer running time for both methods. The greedy algorithm’s running time rises sharply to over 500 s at 50 users, while the proposed algorithm remains efficient, with running time under 12 s. This is because more users introduce a larger volume of requests, which enlarges the input space and introduces overhead in utility calculation. 

\begin{figure}[!t]
	\centering
	\subfigure[{Effect of number of edge servers.}]{\includegraphics[width =0.24\textwidth]{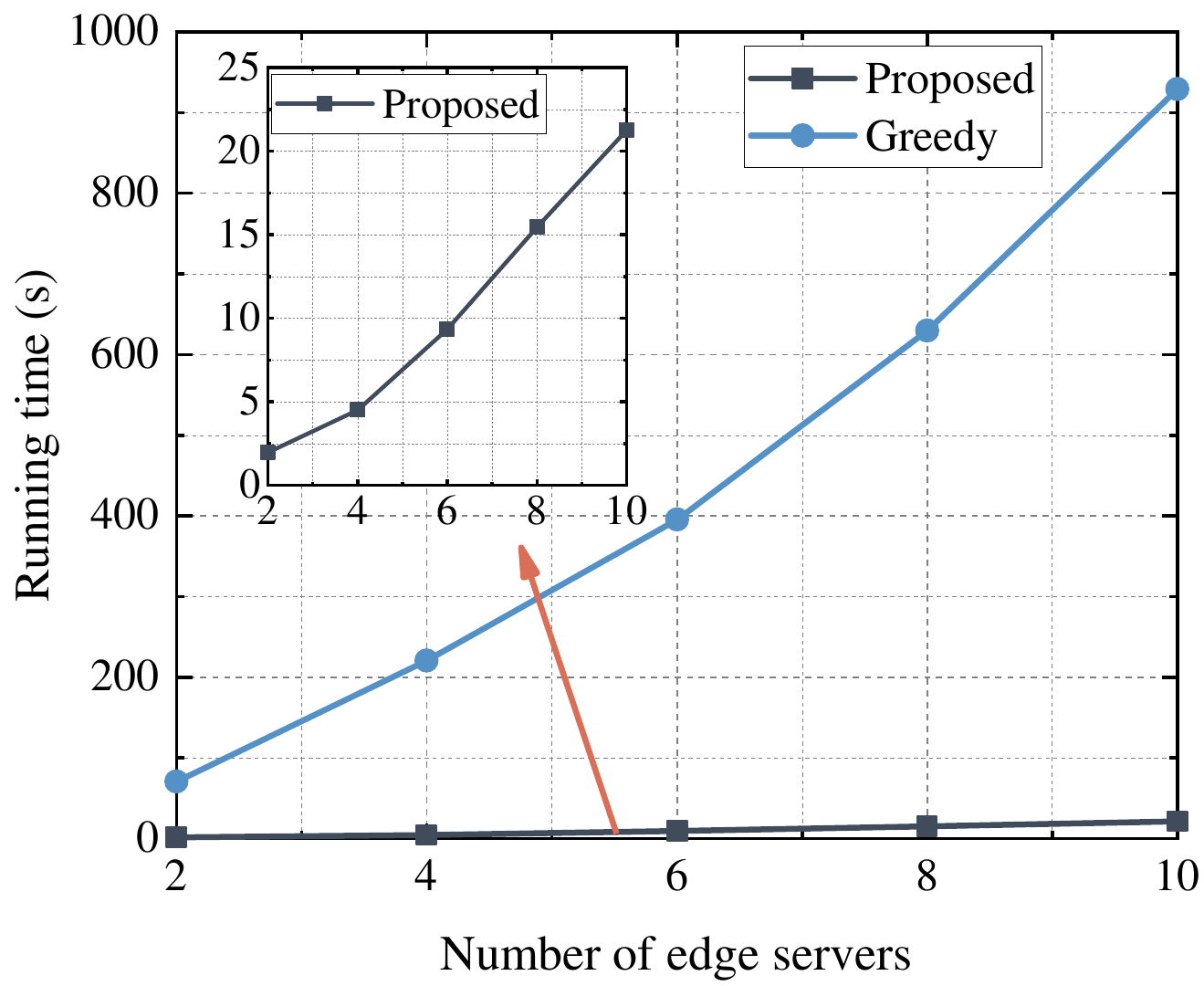}\label{fig:fig_runningtime_num_BS}}
	\subfigure[{Effect of number of users.}]{\includegraphics[width =0.24\textwidth]{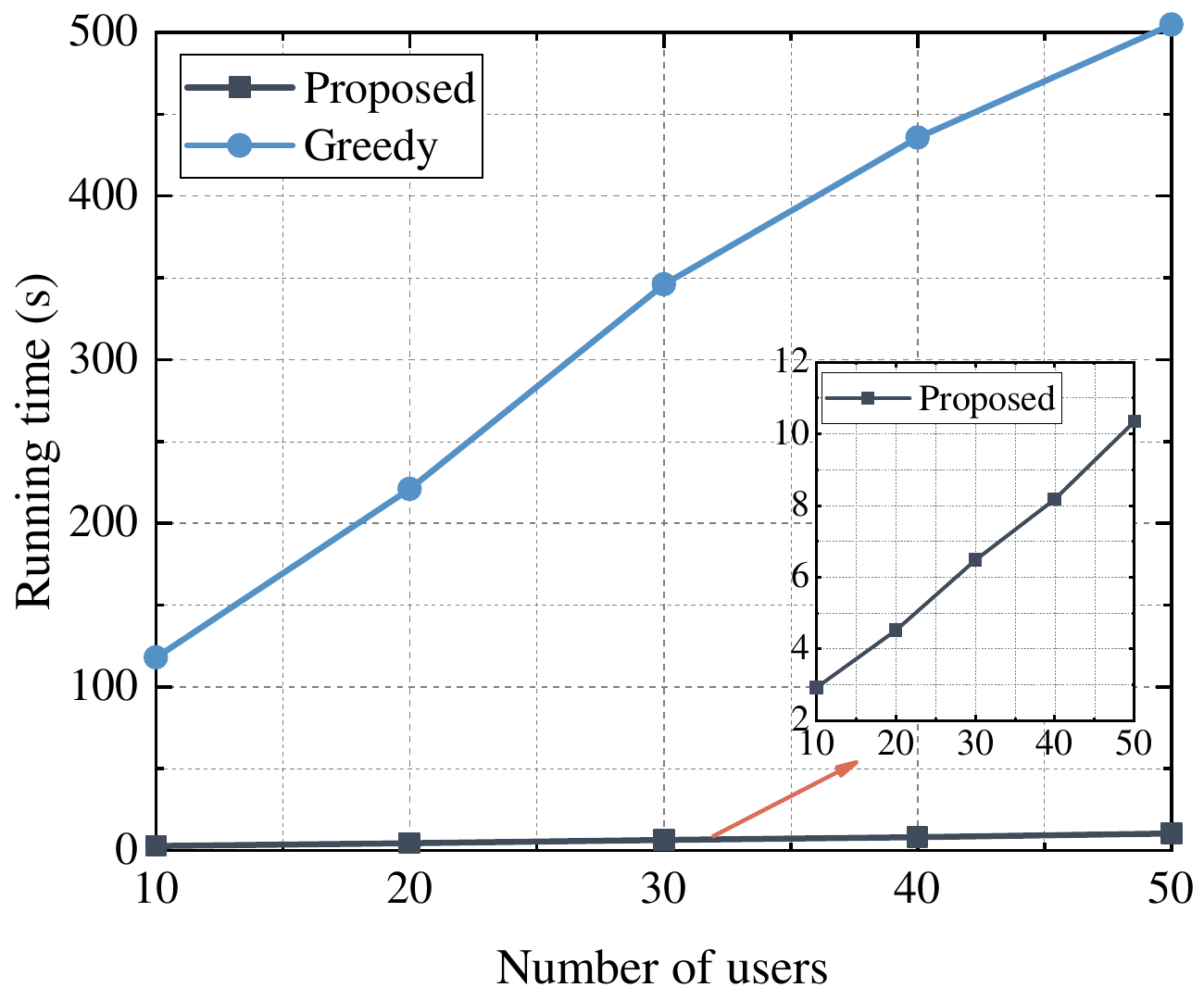}\label{fig:fig_runningtime_numUE}}
	\caption{Algorithm running time under different numbers of nodes.} 
\label{fig:runningtime_num_BS_UE}  
\end{figure}

	\section{Conclusion Remarks}\label{sec:conclusion}
In this paper, we investigate the expert caching problem to improve the efficiency of MoE inference in distributed networks. Given the caching states of users, we design a latency-minimization optimization problem to optimize the caching strategy at the edge servers. We find that the greedy-based algorithm can solve the problem when $K =1$ with a constant approximation guarantee. However, for the general case when $K \geq 1$, the non-submodularity and non-supermodularity of the problem make the greedy algorithm not applicable. To tackle this challenge, we employ a successive greedy method to decompose the original problem into multiple subproblems, which can be solved using a DP-based algorithm. Notice that the data size of expert networks in a MoE model and fine-tuned variants of a common MoE backbone is identical. We further propose an accelerated algorithm to solve the general problem with a constant approximation ratio.

The proposed expert caching scheme for efficient MoE inference introduces a novel direction for designing expert deployment strategies in storage-constrained distributed edge networks. Another promising direction is to incorporate user-level scheduling and contention-aware expert execution, so as to jointly optimize expert placement, task allocation, and GPU resource sharing under high user-load conditions. Furthermore, expert caching can be integrated with expert prefetching and token batching strategies. In this context, expert caching represents a long-term optimization problem, while expert prefetching and token batching are short-term mechanisms that can be coordinated to further enhance overall system performance.

	\ifCLASSOPTIONcaptionsoff
	\newpage
	\fi

% 	\newpage
% \clearpage
% \setcounter{page}{1}
\begin{appendices}
\section{Proof of Proposition \ref{proposition:submodular_K1}}\label{proof:prop_submodular_K1}

When $K_m=1$ for all MoE models holds, the probabilities of a user activating different experts within the same layer are independent.
Since the summation of submodular functions preserves submodularity, we only need to demonstrate that the set function $F_{u,S_{m}^{(\ell)}}\left ( \mathbf{X}  \right ) = p_{u,S_{m}^{(\ell)}} r_{u,S_{m}^{(\ell)}}\left ( \mathbf{X}  \right ) $ is submodular for a user $u$ and an expert $S_{m}^{(\ell)}$ of model $m$'s $\ell$-th layer.
Consider two caching strategy sets, $\dot{X}$ and $\hat{X}$, where $\dot{X} \subset \hat{X} \subset V$ and $V$ is the ground set which has been defined before. Under the caching strategy $\hat{X} $ and $\dot{X}$, user $u$ transmits its token to edge server $\hat{n}$ and $\dot{n}$, respectively, and $ {{T_{{n_u},\hat{n},m}} + T_{\hat{n},{n_u},m}}  \leq  {{T_{{n_u},\dot{n},m}} + T_{\dot{n},{n_u},m}}$ holds, indicating that the round-trip transmission latency between edge server $n_u$ and $\hat{n}$ is lower than that between $n_u$ and $\dot{n}$. Specifically, when $\hat{n}$ or $\dot{n}$ is identical to $n_u$, the round-trip transmission latency between edge server $n_u$ and $\hat{n}$ or between $n_u$ and $\dot{n}$ is 0.
Suppose we add an expert $S_m^{(\ell)} \in V \setminus \hat{X}$ to the edge server $n$, we discuss the following two cases:
\begin{enumerate}
    \item When $ {{T_{{n_u},n,m}} + T_{n,{n_u},m}} >{{T_{{n_u},\hat{n},m}} + T_{\hat{n},{n_u},m}}  $: 
In this case, the marginal value satisfies of caching strategy $\hat{X} $ is given by $F_{u,S_{m}^{(\ell)}}\left ( S_m^{(\ell)} \mid \hat{X} \right )= 0$.
If $ {{T_{{n_u},\dot{n},m}} + T_{\dot{n},{n_u},m}}  <  {{T_{{n_u},n,m}} + T_{n,{n_u},m}}$, the marginal value of caching strategy $\dot{X}$ remains zero. If $ {{T_{{n_u},\dot{n},m}} + T_{\dot{n},{n_u},m}}  > {{T_{{n_u},n,m}} + T_{n,{n_u},m}}$, the marginal value of $\dot{X}$ is given by $F_{u,S_{m}^{(\ell)}}\left ( S_m^{(\ell)} \mid \dot{X} \right )= 
p_{u,S_m^{\left ( \ell \right ) }}\left (  {{T_{{n_u},\dot{n},m}} + T_{\dot{n},{n_u},m}}  - {{T_{{n_u},n,m}} - T_{n,{n_u},m}} \right )  >0$.

\item When $ {{T_{{n_u},n,m}} + T_{n,{n_u},m}} < {{T_{{n_u},\hat{n},m}} + T_{\hat{n},{n_u},m}}  $: In this case, the marginal value of strategy $\hat{X}$ is given by $ F_{u,S_m^{(\ell)}}\left ( S_m^{(\ell)} \mid \hat{X} \right )= 
p_{u,S_m^{\left ( \ell \right ) }} \left ( {{T_{{n_u},\hat{n},m}} + T_{\hat{n},{n_u},m}} - {{T_{{n_u},n,m}} - T_{n,{n_u},m}}  \right )  $. 
The marginal value of strategy $\dot{X}$ is given by $ F_{u,S_m^{(\ell)}}\left ( S_m^{(\ell)} \mid \dot{X} \right )= 
p_{u,S_m^{\left ( \ell \right ) }} \left ( {{T_{{n_u},\dot{n},m}} + T_{\dot{n},{n_u},m}} - {{T_{{n_u},n,m}} - T_{n,{n_u},m}}  \right )  $. Thus, the difference in marginal gains between $\hat{X}$ and $\dot{X}$ can be expressed as
\begin{align*}
& F_{u,S_m^{\left ( \ell \right ) }}\left ( S_m^{(\ell)} \mid \hat{X} \right ) -  F_{u,S_m^{\left ( \ell \right ) }}\left ( S_m^{(\ell)} \mid \dot{X} \right )   \\
&= p_{u,S_m^{\left ( \ell \right ) }} \left ( {{T_{{n_u},\hat{n},m}} + T_{\hat{n},{n_u},m}}  -{{T_{{n_u},\dot{n},m}} - T_{\dot{n},{n_u},m}}  \right )  \leq 0,
\end{align*}
\end{enumerate}
which completes the proof.

\section{Proof of Proposition \ref{proposition:K2_nonsub_nonsuper}}\label{proof:prop_K2_nonsub_nonsuper}

We consider two caching strategy sets, $\dot{X}$ and $\hat{X}$, where $\dot{X} \subset \hat{X} \subset V$, and introduce an expert $j_m^{(\ell)}$ to the set $V \setminus \hat{X}$. 
Different from $K_m=1$, the activated experts appear in pairs when $K_m > 1$, and the introduction of any expert will affect the latency when requesting the expert pairs involving this expert.

Fig. \ref{fig:relationship_K2} shows the caching strategy after introducing a new expert $j_m^{(\ell)}$ to the sets $\dot{X}$ and $\hat{X}$. Without loss of generality, we assume that the round-trip transmission latency between edge server $A$ (i.e., $n_u$) and edge servers $B$, $C$, and $D$ are in an increasing order.
For simplicity, we assume that for model $m$'s $\ell$-th MoE layer, 
the experts except $i_{m}^{(\ell)}$ and $j_{m}^{(\ell)}$ can only be found at the cloud. Also, only the users covered by the edge server $A$ require $S_{m}^{(\ell)} = \left \{ i_{m}^{(\ell)},j_{m}^{(\ell)} \right \}$ for inference.
In this case, introducing $j_m^{(\ell)}$ will only impact the latency when users covered by $A$ require $S_{m}^{(\ell)}$. Therefore, we only need to discuss the property of $F_{u,S_{m}^{(\ell)}}\left ( \mathbf{X}  \right ) = p_{u,S_{m}^{(\ell)}} r_{u,S_{m}^{(\ell)}}\left ( \mathbf{X}  \right ) $ and then obtain the property of $F(\mathbf{X})$.

\begin{figure}[!t]
	\centering
\subfigure[$F_{u,S_{m}^{(\ell)}}\left ( j_m^{(\ell)} \mid \hat{\mathbf{X}} \right ) - F_{u,S_{m}^{(\ell)}}\left ( j_m^{(\ell)} \mid \dot{\mathbf{X}} \right ) > 0 $]{\includegraphics[width =0.47\textwidth]{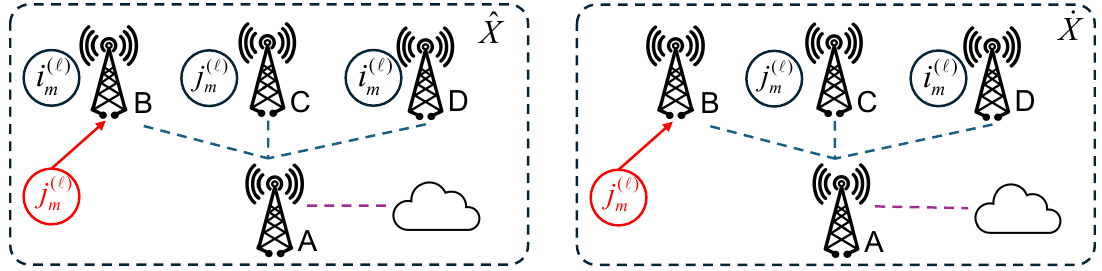}\label{fig:fig_set_K2_super}}
\subfigure[$F_{u,S_{m}^{(\ell)}}\left ( j_m^{(\ell)} \mid \hat{\mathbf{X}} \right ) - F_{u,S_{m}^{(\ell)}}\left ( j_m^{(\ell)} \mid \dot{\mathbf{X}} \right ) < 0 $]{\includegraphics[width =0.47\textwidth]{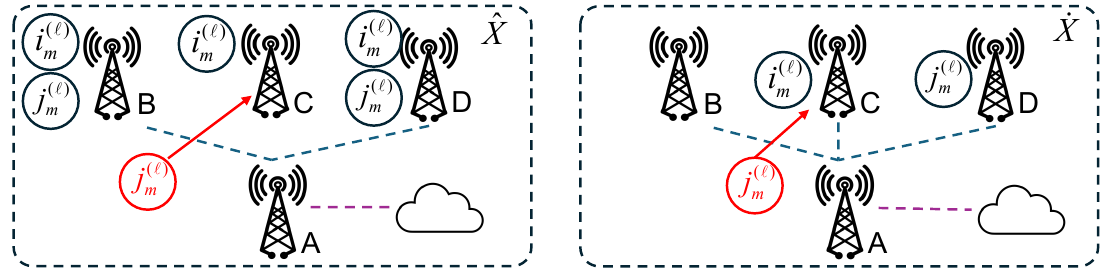}\label{fig:fig_set_K2_sub}}
	\caption{Different relationships between the marginal values of $\hat{\mathbf{X}}$ and $\dot{\mathbf{X}}$.} 
\label{fig:relationship_K2}  
\end{figure}

First, we discuss the case in Fig. \ref{fig:fig_set_K2_super}.
In order to minimize the E2E latency, for the set $\hat{X}$, the token will be transmitted to edge server $\{B, C\}$ and $B$, before and after adding $j_m^{(\ell)}$ to edge server $B$, respectively.
For the set $\dot{X}$, the token will be transmitted to edge server $\{C, D\}$ and $\{B, D\}$, before and after adding $j_m^{(\ell)}$ to edge server $B$, respectively.
The marginal values of $\hat{X}$ and $\dot{X}$ are given by $F_{u,S_m^{(\ell)}}\left ( j_m^{(\ell)} \mid \hat{X} \right ) =  p_{u,S_m^{(\ell)}}\left( T_{A,C,m} +  T_{C,A,m} - T_{B,A,m} + T_{{\rm{E}},m}^{\rm{CP}}\right) $
and $F_{u,S_m^{(\ell)}}\left ( j_m^{(\ell)} \mid \dot{X} \right ) = p_{u,S_m^{(\ell)}} ( T_{A,C,m} + T_{C,A,m} -T_{A,B,m}  -T_{B,A,m} )$, respectively.
In this case, we have $F_{u,S_m^{(\ell)}}\left ( j_m^{(\ell)} \mid \hat{X} \right ) - F_{u,S_m^{(\ell)}}\left ( j_m^{(\ell)} \mid \dot{X} \right ) > 0 $, and then $F\left ( j_m^{(\ell)} \mid \hat{X} \right ) - F\left ( j_m^{(\ell)} \mid \dot{X} \right ) > 0$ holds.

% . Since the sum of supermodular function is supermodular, $F(\mathbf{X})$ is supermodular in this case.

Then, we discuss the scenario in Fig. \ref{fig:fig_set_K2_sub}.
For the set $\hat{X}$, before and after adding $j_m^{(\ell)}$ to edge server $C$, the token will be processed at edge server $B$ to minimize the E2E latency. Thus, the marginal value of $\hat{X}$ is given by
$F_{u,S_m^{(\ell)}}\left ( j_m^{(\ell)} \mid \hat{X} \right ) = 0$.
For the set $\dot{X}$, the token will be transmitted to edge server $\{ C,D\}$ and $C$, before and after adding $j_m^{(\ell)}$ to edge server $C$, respectively.
The marginal values of $\dot{X}$ is given by $F_{u,S_m^{(\ell)}}\left ( j_m^{(\ell)} \mid \dot{X} \right )  = p_{u,S_m^{(\ell)}}(T_{A,D,m} + T_{D,A,m} -T_{C,A,m}+T_{{\rm{E}},m}^{\rm{CP}})$. Thus, we have $F_{u,S_m^{(\ell)}}\left ( j_m^{(\ell)} \mid \hat{X} \right ) - F_{u,S_m^{(\ell)}}\left ( j_m^{(\ell)} \mid \dot{X} \right ) <0$, and then $F\left ( j_m^{(\ell)} \mid \hat{X} \right ) - F\left ( j_m^{(\ell)} \mid \dot{X} \right ) < 0$ holds. At this point, when $K_m =2$, we verify that $F\left ( j_m^{(\ell)} \mid \hat{X} \right ) - F\left ( j_m^{(\ell)} \mid \dot{X} \right ) $ is either positive or negative, leading to the property of non-submodularity and non-supermodularity.

    Next, we extend this property to a more general case for $K_m>1$. The sketch proof is as follows: the expert $i_m^{(\ell)}$ in Fig. \ref{fig:relationship_K2}  is replaced by all the required experts excluding $j_m^{(\ell)}$, i.e., the expert group including experts $S_m^{(\ell)} \setminus j_m^{(\ell)}$. With a similar method to the case $K_m=2$, we can prove that $F\left ( j_m^{(\ell)} \mid \hat{X} \right ) - F\left ( j_m^{(\ell)} \mid \dot{X} \right ) $ may also be either positive or negative. 

    Therefore, when $K_m >1$, the non-submodularity and non-supermodularity always hold for $\mathcal{P}1$.

\section{Proof of Proposition \ref{prop:subproblem_ratio}}\label{proof:prop_subproblem_ratio}

 Since the optimal solution of $\mathcal{P}3_n$ can be achieved with DP algorithm,  ${\bar F}_n\left ( \ddot { \mathbf{X} }_n \right ) \geq {\bar F}_n\left ( \tilde{\mathbf{X}}_n^{\ast} \right )$ holds. According to Lemma \ref{lemma:property_super}, we have 
\begin{equation}
    \begin{split}
        & {\tilde F}_n^{\mathrm{mod} }\left(\tilde{\mathbf{X}}_n\right) + {\tilde F}_n^{\mathrm{super} }\left(\tilde{\mathbf{X}}_n \mid \emptyset \right)  \geq {\bar F}_n  \left ( \tilde{\mathbf{X}}_n \right ) \\
        & \geq {\tilde F}_n^{\mathrm{mod} }\left(\tilde{\mathbf{X}}_n\right) + \left ( 1-\kappa_g \right ) {\tilde F}_n^{\mathrm{super} }\left(\tilde{\mathbf{X}}_n \mid \emptyset \right).
    \end{split}
\end{equation}
Then, the following inequalities hold, 
\begin{align*}
&  {\tilde F}_n\left ( \ddot { \mathbf{X} }_n \right )  \geq {\bar F}_n\left ( \ddot { \mathbf{X} }_n \right )  +{\tilde F}_n^{\mathrm{super} }\left ( \emptyset  \right ) \\
& \geq {\bar F}_n\left ( \tilde{\mathbf{X}}_n^{\ast} \right )  +{\tilde F}_n^{\mathrm{super} }\left ( \emptyset  \right ) \\
& \geq {\tilde F}_n^{\mathrm{mod} }\left(\tilde{\mathbf{X}}_n^{\ast}\right) + \left ( 1-(\kappa_g)_n \right ) {\tilde F}_n^{\mathrm{super} }\left(\tilde{\mathbf{X}}_n^{\ast} \mid \emptyset \right) +{\tilde F}_n^{\mathrm{super} }\left ( \emptyset  \right ) \\
& = {\tilde F}_n^{\mathrm{mod} }\left ( \tilde{\mathbf{X}}_n^{\ast} \right ) + \left ( 1-(\kappa_g)_n \right ) \left ( {\tilde F}_n^{\mathrm{super} }\left ( \tilde{\mathbf{X}}_n^{\ast}  \right ) - {\tilde F}_n^{\mathrm{super} }\left ( \emptyset  \right ) \right ) \\
& + {\tilde F}_n^{\mathrm{super} }\left ( \emptyset  \right ) \\
& \geq  {\tilde F}_n^{\mathrm{mod} }\left ( \tilde{\mathbf{X}}_n^{\ast} \right ) + \left ( 1-(\kappa_g)_n \right ) {\tilde F}_n^{\mathrm{super} }\left ( \tilde{\mathbf{X}}_n^{\ast}  \right ) \\
& \geq \left ( 1-(\kappa_g)_n \right ){\tilde F}_n\left ( \tilde{\mathbf{X}}_n^{\ast} \right ).
\end{align*}

\section{Proof of Theorem \ref{theorem:approximation_ratio_general_case}}\label{proof:theorem_approximation_ratio_general_case}

Let $ \ddot{\mathbf{X} }_n $ and $ \tilde{\mathbf{X}} ^{\ast }_n$  denote the solution obtained by the proposed expert caching algorithm for $K \geq 1$ and the globally optimal solution when solving $\mathcal{P}2_n$. 
	We introduce a comparison set $\dot {\mathbf{X} }_n$, which satisfies
\begin{equation}
	\dot{\mathbf{X} }_n = \left\{ \begin{array}{l}
		\tilde{\mathbf{X}} ^{\ast }_n , \quad {\rm{if}} \; \ddot{\mathbf{X} }_n   \ne  \tilde{\mathbf{X}} ^{\ast }_n, \\
		\emptyset , \quad {\rm{otherwise}}.
	\end{array} \right.
\end{equation}
With (\ref{equ:lower_bound_subproblem}), we have $\left ( 1-\kappa_g^{\max} \right ) {\tilde F}_n\left ( \dot{\mathbf X}_n \right ) \leq \left ( 1-(\kappa_g)_n \right ) {\tilde F}_n\left ( \dot{\mathbf X}_n \right ) \leq {\tilde F}_n\left ( \ddot { \mathbf{X} }_n \right )$ for $\forall n \in \mathcal{N}$. Also, 
$\left ( 1-\kappa_g^{\max} \right ) {\tilde F}_n\left ( \tilde{\mathbf X}_n^{\ast} \right ) \leq {\tilde F}_n\left ( \ddot { \mathbf{X} }_n \right ) +\left ( 1-\kappa_g^{\max} \right ) {\tilde F}_n\left ( \dot{\mathbf X}_n \right )$ holds. Therefore, the following inequality holds,
\begin{equation}
    \begin{split}
        & \sum_{n \in \mathcal{N}}\left ( 1-\kappa_g^{\max} \right ) \tilde{F}_n({\tilde{\mathbf X}}_n^{\ast}) \\
        & \leq \sum_{n \in \mathcal{N} } {\tilde F}_n\left ( \ddot { \mathbf{X} }_n \right ) + \left ( 1-\kappa_g^{\max} \right ) {\tilde F}_n\left ( \dot{\mathbf X}_n \right ) \leq 2 \sum_{n \in \mathcal{N} }{\tilde F}_n\left ( \ddot { \mathbf{X} }_n \right ).
    \end{split}
\end{equation}
Then, we have $\sum_{n \in \mathcal{N} }{\tilde F}_n\left ( \ddot { \mathbf{X} }_n \right ) \geq \frac{ 1-\kappa_g^{\max} }{2}\sum_{n \in \mathcal{N}} \tilde{F}_n({\tilde{\mathbf X}}_n^{\ast}) $. Since (\ref{equ:union_solution}) holds, we have $F\left ( \ddot{\mathbf{X} }  \right ) \geq 
\frac{1-\kappa_g^{\max}}{2}  F\left ( \tilde{\mathbf{X}} ^{\ast }  \right )$.

In the special case $N=1$, the approximation ratio can be obtained directly according to Proposition \ref{prop:subproblem_ratio}.

\section{Proof of Proposition \ref{prop:approximationratio_condition}}\label{proof:prop_approximationratio_condition}

Based on the assumptions stated in Proposition~\ref{prop:approximationratio_condition}, we analyze the single-edge-server and multi-edge-server scenarios separately.
	
(1) \textit{Single-edge-server scenario: }
Consider the case $N=1$. 
The marginal gain of placing expert $i_m^{(\ell)}$ at the edge server with respect to the empty set is ${\tilde F}_n^{\mathrm{super} }(i_m^{(\ell)} \mid \emptyset) =\sum\limits_{u \in \mathcal{U}} \sum\limits_{m \in \mathcal{M}} p_{u,m}p_{u,i_{m}^{(\ell)}} \left ( T_{{\rm{C}},n,m}-T_{\mathrm{E},m}^{\mathrm{CP} } \right ) $. The marginal gain evaluated at the complement set $V \setminus  i_m^{(\ell)}$ is ${\tilde F}_n^{\mathrm{super} }(i_m^{(\ell)} \mid V \setminus  i_m^{(\ell)} ) =\sum\limits_{u \in \mathcal{U}} \sum\limits_{m \in \mathcal{M}} p_{u,m}p_{u,i_{m}^{(\ell)}} \left ( T_{{\rm{C}},n,m}+T_{n,{\rm{C}},m}+T_{\mathrm{C},m}^{\mathrm{CP} } \right ) $. Since both $T_{\mathrm{E},m}^{\mathrm{CP} }$ and $T_{\mathrm{C},m}^{\mathrm{CP} }$ are much lower than $T_{{\rm{C}},n,m}$, their ratio satisfies $\frac{{\tilde F}_n^{\mathrm{super} }(i_m^{(\ell)} \mid \emptyset)}{{\tilde F}_n^{\mathrm{super} }(i_m^{(\ell)} \mid V \setminus  i_m^{(\ell)} )}\approx \frac{1}{2}$ and $\left ( \kappa _g \right )_1 \approx \frac{1}{2}$. Thus, for $N=1$, the curvature satisfies $\left ( \kappa _g \right )_1 \approx \frac{1}{2}$, and therefore the proposed algorithm guarantees a $ \frac{1}{2}$-approximation.

(2) \textit{Multi-edge-server scenario:} 
We now extend the analysis of the approximation ratio to the general case with multiple edge servers. Let $\mathcal{U}_n$ denote the set of users associated with edge server $n$. We analyze the curvature $\left( \kappa_g \right)_n$ when solving the subproblem $\mathcal{P}2_n$ at edge server $n$.

 First, we consider the subproblem $\mathcal{P}2_1$ for the first edge server.
The marginal gain when adding expert $i_m^{(\ell)}$ to the empty set is decomposed into
\[ {\tilde F}_n^{\mathrm{super} }(i_m^{(\ell)} \mid \emptyset) = {\tilde F}_{\mathcal{U}_n }^{\mathrm{super} }(i_m^{(\ell)} \mid \emptyset)+\sum_{n' \in \mathcal{N}\setminus n } {\tilde F}_{\mathcal{U}_{n'} }^{\mathrm{super} }(i_m^{(\ell)} \mid \emptyset),\]
where ${\tilde F}_{\mathcal{U}_n }^{\mathrm{super} }(i_m^{(\ell)} \mid \emptyset) = \sum\limits_{u \in \mathcal{U}_n} \sum\limits_{m \in \mathcal{M}}
p_{u,m} p_{u,i_{m}^{(\ell)}} \left ( T_{{\rm{C}},n,m}-T_{\mathrm{E},m}^{\mathrm{CP} } \right )$ represents the marginal gain contributed by the users in the set $\mathcal{U}_n$ and $ {\tilde F}_{\mathcal{U}_{n'} }^{\mathrm{super} }(i_m^{(\ell)} \mid \emptyset) = \sum\limits_{u \in \mathcal{U}_n'} \sum\limits_{m \in \mathcal{M}} p_{u,m}p_{u,i_{m}^{(\ell)}}\left ( T_{{\rm{C}},n,m}-T_{n',n,m}-T_{n,n',m} - T_{\mathrm{E},m}^{\mathrm{CP} }\right )$ denotes the marginal gain contributed by the users covered by other edge server $n' \neq n$.

The marginal gain evaluated at the complement set $V \setminus  i_m^{(\ell)}$ is decomposed into
\begin{equation*}
	\begin{split}
		&    {\tilde F}_n^{\mathrm{super} }(i_m^{(\ell)} \mid V \setminus  i_m^{(\ell)} ) \\
		& = {\tilde F}_{\mathcal{U}_n }^{\mathrm{super} }(i_m^{(\ell)} \mid V \setminus  i_m^{(\ell)} )+\sum_{n' \in \mathcal{N}\setminus n } {\tilde F}_{\mathcal{U}_{n'} }^{\mathrm{super} }(i_m^{(\ell)} \mid V \setminus  i_m^{(\ell)} ),
	\end{split}
\end{equation*}
where $ {\tilde F}_{\mathcal{U}_n }^{\mathrm{super} }(i_m^{(\ell)} \mid V \setminus  i_m^{(\ell)} ) = \sum\limits_{u \in \mathcal{U}_n} \sum\limits_{m \in \mathcal{M}} p_{u,m}p_{u,i_{m}^{(\ell)}} \left ( T_{{\rm{C}},n,m}+T_{n,{\rm{C}},m}+T_{\mathrm{C},m}^{\mathrm{CP} } \right ) $ and ${\tilde F}_{\mathcal{U}_{n'} }^{\mathrm{super} }(i_m^{(\ell)} \mid V \setminus  i_m^{(\ell)} ) =  \sum\limits_{u \in \mathcal{U}_{n'}} \sum\limits_{m \in \mathcal{M}} p_{u,m}p_{u,i_{m}^{(\ell)}} \left( T_{{\rm{C}},n,m}+T_{n,{\rm{C}},m} - T_{n,n',m} +T_{\mathrm{C},m}^{\mathrm{CP} } \right )$. 
When the cloud-edge latency is much larger than the inter-edge latency, we obtain $\frac{{\tilde F}_n^{\mathrm{super} }(i_m^{(\ell)} \mid \emptyset)}{{\tilde F}_n^{\mathrm{super} }(i_m^{(\ell)} \mid V \setminus  i_m^{(\ell)} )}\approx \frac{1}{2}$.

Next, conditioned on the caching decisions already made for edge servers $1, \ldots, n-1$, we analyze the subproblem $\mathcal{P}2_n$ for edge server $n$.
The marginal gain is decomposed into contributions from users within $\mathcal{U}_n$ and those outside $\mathcal{U}_n$. We focus on the case where the expert $i_m^{(\ell)}$ considered for placement at edge server $n$ has already been cached at some earlier edge server; otherwise, the marginal-gain analysis reduces to that for the first edge server.

For the users in $\mathcal{U}_n$, the marginal gains can be expressed as $ {\tilde F}_{\mathcal{U}_n}^{\mathrm{super} }(i_m^{(\ell)} \mid \emptyset)=    \sum\limits_{u \in \mathcal{U}_n} \sum\limits_{m \in \mathcal{M}} p_{u,m}p_{u,i_{m}^{(\ell)}} \left ( T_{n',n,m}-T_{\mathrm{E},m}^{\mathrm{CP} } \right )$ and ${\tilde F}_{\mathcal{U}_n}^{\mathrm{super} }(i_m^{(\ell)} \mid V \setminus  i_m^{(\ell)} )  = \sum\limits_{u \in \mathcal{U}_n} \sum\limits_{m \in \mathcal{M}} p_{u,m}p_{u,i_{m}^{(\ell)}} \left ( T_{n,n',m}+T_{n',n,m}+T_{\mathrm{E},m}^{\mathrm{CP} } \right )$, respectively, where $n'$ is the edge server that already stores expert $i_m^{(\ell)}$  before placement at edge server $n$ and achieves the shortest round-trip latency to $n$. Then, we have $\frac{{\tilde F}_{\mathcal{U}_n}^{\mathrm{super} }(i_m^{(\ell)} \mid \emptyset)}{{\tilde F}_{\mathcal{U}_n}^{\mathrm{super} }(i_m^{(\ell)} \mid V \setminus  i_m^{(\ell)} )} \approx \frac{1}{2}  $.

For the users covered by edge server $n'$ (i.e., $u \in \mathcal{U}_{n'}$), the marginal gain when adding expert $i_m^{(\ell)}$ to the empty set is zero unless placing $i_m^{(\ell)}$ at edge server $n$ causes requests originally served by other edge servers to be redirected to edge server $n$ and such redirection strictly reduces latency. Let $\tilde{n}$ be the edge server that possesses expert $i_{m}^{(\ell)}$ before it is added to edge server $n$ and achieves the shortest round-trip E2E latency to $n'$ among all such servers. If redirection occurs for users in $\mathcal{U}_{n'}$, we have $ {\tilde F}_{\mathcal{U}_{n' }}^{\mathrm{super} }(i_m^{(\ell)} \mid \emptyset) = \sum\limits_{u \in \mathcal{U}_{n'}} \sum\limits_{m \in \mathcal{M}} p_{u,m}p_{u,i_{m}^{(\ell)}}\left( T_{\tilde{n},n',m}-T_{n',n,m} - T_{n,n',m} - T_{\mathrm{E},m}^{\mathrm{CP} } \right)$ and ${\tilde F}_{\mathcal{U}_{n'} }^{\mathrm{super} }(i_m^{(\ell)} \mid V \setminus  i_m^{(\ell)} ) = 	\sum\limits_{u \in \mathcal{U}_{n'}} \sum\limits_{m \in \mathcal{M}} p_{u,m}p_{u,i_{m}^{(\ell)}}\left ( T_{n',\tilde{n}, m}+T_{\tilde{n}, n',m}-T_{n,n',m}+ T_{\mathrm{E},m}^{\mathrm{CP} } \right )$.

Because rediction occurs only when it reduces latency, the resulting ratio is strictly smaller than $\frac{1}{2}$, i.e., $\frac{  {\tilde F}_{\mathcal{U}_{n' }}^{\mathrm{super} }(i_m^{(\ell)} \mid \emptyset) }{{\tilde F}_{\mathcal{U}_{n'} }^{\mathrm{super} }(i_m^{(\ell)} \mid V \setminus  i_m^{(\ell)} )}  < \frac{1}{2}  $.

Combining $\frac{{\tilde F}_{\mathcal{U}_n}^{\mathrm{super} }(i_m^{(\ell)} \mid \emptyset)}{{\tilde F}_{\mathcal{U}_n}^{\mathrm{super} }(i_m^{(\ell)} \mid V \setminus  i_m^{(\ell)} )} \approx \frac{1}{2}  $ and $\frac{  {\tilde F}_{\mathcal{U}_{n' }}^{\mathrm{super} }(i_m^{(\ell)} \mid \emptyset) }{{\tilde F}_{\mathcal{U}_{n'} }^{\mathrm{super} }(i_m^{(\ell)} \mid V \setminus  i_m^{(\ell)} )}  < \frac{1}{2}  $, the overall ratio for edge server $n$ satisfies $\frac{{\tilde F}_n^{\mathrm{super} }(i_m^{(\ell)} \mid \emptyset)}{{\tilde F}_n^{\mathrm{super} }(i_m^{(\ell)} \mid V \setminus  i_m^{(\ell)} )}  \in \left ( 0, \frac{1}{2} \right ) $, which implies $\kappa_{g}^{\max} \in \left ( \frac{1}{2} , 1 \right ) $. Therefore, the relationship reflecting the approximation ratio in Theorem \ref{theorem:approximation_ratio_general_case} satisfies $F\left ( \ddot{\mathbf{X} }  \right ) \geq 
\frac{1-\kappa_g^{\max}}{2}  F\left ( \tilde{\mathbf{X}} ^{\ast }  \right ) > \frac{1}{4}F\left ( \tilde{\mathbf{X}} ^{\ast }  \right )$. That is, the proposed algorithm can guarantee a $\frac{1}{4}$-approximation ratio in the multi-server scenario.

\end{appendices}

	% \balance
	\bibliographystyle{IEEEtran}
	\bibliography{IEEEabrv,reference}

% Generated by IEEEtran.bst, version: 1.14 (2015/08/26)
\begin{thebibliography}{10}
\providecommand{\url}[1]{#1}
\csname url@samestyle\endcsname
\providecommand{\newblock}{\relax}
\providecommand{\bibinfo}[2]{#2}
\providecommand{\BIBentrySTDinterwordspacing}{\spaceskip=0pt\relax}
\providecommand{\BIBentryALTinterwordstretchfactor}{4}
\providecommand{\BIBentryALTinterwordspacing}{\spaceskip=\fontdimen2\font plus
\BIBentryALTinterwordstretchfactor\fontdimen3\font minus \fontdimen4\font\relax}
\providecommand{\BIBforeignlanguage}[2]{{%
\expandafter\ifx\csname l@#1\endcsname\relax
\typeout{** WARNING: IEEEtran.bst: No hyphenation pattern has been}%
\typeout{** loaded for the language `#1'. Using the pattern for}%
\typeout{** the default language instead.}%
\else
\language=\csname l@#1\endcsname
\fi
#2}}
\providecommand{\BIBdecl}{\relax}
\BIBdecl

\bibitem{openai2022chatgpt}
\BIBentryALTinterwordspacing
OpenAI. (2022) Introducing {ChatGPT}. [Online]. Available: \url{https://www.openai.com/blog/chatgpt/}
\BIBentrySTDinterwordspacing

\bibitem{touvron2023llama}
H.~Touvron, T.~Lavril, G.~Izacard, X.~Martinet, M.-A. Lachaux, T.~Lacroix, B.~Rozi{\`e}re, N.~Goyal, E.~Hambro, F.~Azhar \emph{et~al.}, ``{Llama}: Open and efficient foundation language models,'' \emph{arXiv preprint arXiv:2302.13971}, 2023.

\bibitem{10976336}
S.~Hu, Z.~Fang, Z.~Fang, Y.~Deng, X.~Chen, Y.~Fang, and S.~T.~W. Kwong, ``{AgentsCoMerge}: Large language model empowered collaborative decision making for ramp merging,'' \emph{{IEEE} Trans. Mobile Comput.}, vol.~24, no.~10, pp. 9791--9805, Oct. 2025.

\bibitem{10764961}
C.~Wu, J.~Chen, Z.~Wang, R.~Liang, and R.~Du, ``Semantic sleuth: Identifying ponzi contracts via large language models,'' in \emph{Proc. IEEE/ACM Int. Conf. Autom. Softw. Eng. (ASE)}, Sacramento, USA, Oct.-Nov. 2024, pp. 582--593.

\bibitem{10845862}
Z.~Fang, S.~Hu, J.~Wang, Y.~Deng, X.~Chen, and Y.~Fang, ``Prioritized information bottleneck theoretic framework with distributed online learning for edge video analytics,'' \emph{{IEEE/ACM} Trans. Netw.}, vol.~33, no.~3, pp. 1203--1219, Jun. 2025.

\bibitem{xu2024device}
J.~Xu, Z.~Li, W.~Chen, Q.~Wang, X.~Gao, Q.~Cai, and Z.~Ling, ``On-device language models: A comprehensive review,'' \emph{arXiv preprint arXiv:2409.00088}, 2024.

\bibitem{lin2024splitlora}
Z.~Lin, X.~Hu, Y.~Zhang, Z.~Chen, Z.~Fang, X.~Chen, A.~Li, P.~Vepakomma, and Y.~Gao, ``{Splitlora}: A split parameter-efficient fine-tuning framework for large language models,'' \emph{arXiv preprint arXiv:2407.00952}, 2024.

\bibitem{10835069}
G.~Qu, Q.~Chen, W.~Wei, Z.~Lin, X.~Chen, and K.~Huang, ``Mobile edge intelligence for large language models: A contemporary survey,'' \emph{{IEEE} Commun. Surveys Tuts.}, vol.~27, no.~6, pp. 3820--3860, Dec. 2025.

\bibitem{11152695}
Z.~Lin, G.~Qu, Q.~Chen, X.~Chen, Z.~Chen, and K.~Huang, ``Pushing large language models to the {6G} edge: Vision, challenges, and opportunities,'' \emph{{IEEE} Commun. Mag.}, vol.~63, no.~9, pp. 52--59, Sep. 2025.

\bibitem{10.1145/3719664}
Y.~Zheng, Y.~Chen, B.~Qian, X.~Shi, Y.~Shu, and J.~Chen, ``A review on edge large language models: Design, execution, and applications,'' \emph{ACM Comput. Surv.}, vol.~57, no.~8, Mar. 2025.

\bibitem{lin2025hsplitlora}
Z.~Lin, Y.~Zhang, Z.~Chen, Z.~Fang, X.~Chen, P.~Vepakomma, W.~Ni, J.~Luo, and Y.~Gao, ``{HSplitLoRA}: A heterogeneous split parameter-efficient fine-tuning framework for large language models,'' \emph{arXiv preprint arXiv:2505.02795}, 2025.

\bibitem{kaplan2020scaling}
J.~Kaplan, S.~McCandlish, T.~Henighan, T.~B. Brown, B.~Chess, R.~Child, S.~Gray, A.~Radford, J.~Wu, and D.~Amodei, ``Scaling laws for neural language models,'' \emph{arXiv preprint arXiv:2001.08361}, 2020.

\bibitem{jiang2024mixtral}
A.~Q. Jiang, A.~Sablayrolles, A.~Roux, A.~Mensch, B.~Savary, C.~Bamford, D.~S. Chaplot, D.~d.~l. Casas, E.~B. Hanna, F.~Bressand \emph{et~al.}, ``Mixtral of experts,'' \emph{arXiv preprint arXiv:2401.04088}, 2024.

\bibitem{10937907}
W.~Cai, J.~Jiang, F.~Wang, J.~Tang, S.~Kim, and J.~Huang, ``A survey on mixture of experts in large language models,'' \emph{{IEEE} Trans. Knowl. Data Eng.}, vol.~37, no.~7, pp. 3896--3915, Jul. 2025.

\bibitem{fedus2022switch}
W.~Fedus, B.~Zoph, and N.~Shazeer, ``Switch transformers: Scaling to trillion parameter models with simple and efficient sparsity,'' \emph{J. Mach. Learn. Res.}, vol.~23, no. 120, pp. 1--39, Apr. 2022.

\bibitem{10630945}
G.~Qu, Z.~Lin, F.~Liu, X.~Chen, and K.~Huang, ``{TrimCaching}: Parameter-sharing {AI} model caching in wireless edge networks,'' in \emph{Proc. IEEE Int. Conf. Distrib. Comput. Syst. (ICDCS)}, Jersey City, USA, Jul. 2024, pp. 36--46.

\bibitem{10746555}
F.~Wang, G.~Geraci, L.~Li, P.~Wang, and T.~Q.~S. Quek, ``Wireless edge content broadcast via integrated terrestrial and non-terrestrial networks,'' \emph{{IEEE} Trans. Commun.}, vol.~73, no.~5, pp. 3136--3149, May 2025.

\bibitem{10579544}
C.-C. Sa, L.-C. Cheng, H.-H. Chung, T.-C. Chiu, C.-Y. Wang, A.-C. Pang, and S.-T. Chen, ``Ensuring bidirectional privacy on wireless split inference systems,'' \emph{{IEEE} Wireless Commun.}, vol.~31, no.~5, pp. 134--141, Oct. 2024.

\bibitem{10107635}
J.~Lee, H.~Lee, and W.~Choi, ``Wireless channel adaptive {DNN} split inference for resource-constrained edge devices,'' \emph{{IEEE} Commun. Lett.}, vol.~27, no.~6, pp. 1520--1524, Jun. 2023.

\bibitem{11353922}
B.~Lin, Z.~Tang, Y.~Ye, J.~Huang, J.~Zhang, Y.~Pang, P.~Jin, M.~Ning, J.~Luo, and L.~Yuan, ``{MoE-LLaVA}: Mixture of experts for large vision-language models,'' \emph{to appear in IEEE Transactions on Multimedia}, 2026.

\bibitem{lu2022learn}
P.~Lu, S.~Mishra, T.~Xia, L.~Qiu, K.-W. Chang, S.-C. Zhu, O.~Tafjord, P.~Clark, and A.~Kalyan, ``Learn to explain: Multimodal reasoning via thought chains for science question answering,'' in \emph{Proc. Neural Inf. Process. Syst. (NeurIPS)}, vol.~35, 2022, pp. 2507--2521.

\bibitem{10433726}
Y.~Zhao, Z.~Yu, and D.~Yuan, ``Caching with personalized and incumbent-aware recommendation: Modeling and optimization,'' \emph{{IEEE} Trans. Mobile Comput.}, vol.~23, no.~10, pp. 9595--9613, Oct. 2024.

\bibitem{10402004}
X.~Zhu, C.~Jiang, Z.~Yang, and H.~Wang, ``Delay-optimized edge caching in integrated satellite–terrestrial networks with diverse content popularity distribution and user access modes,'' \emph{{IEEE} Internet Things J.}, vol.~11, no.~16, pp. 26\,580--26\,594, Aug. 2024.

\bibitem{10219010}
J.~Han, K.~Xue, J.~Li, J.~Zhang, Z.~Huang, and D.~S. Wei, ``Enabling in-network caching in traditional {IP} networks: Selective router upgrades and cooperative cache strategies,'' \emph{{IEEE} Trans. Netw. Sci. Eng.}, vol.~11, no.~1, pp. 696--709, Jan.-Feb. 2024.

\bibitem{6600983}
K.~Shanmugam, N.~Golrezaei, A.~G. Dimakis, A.~F. Molisch, and G.~Caire, ``{FemtoCaching}: Wireless content delivery through distributed caching helpers,'' \emph{{IEEE} Trans. Inf. Theory}, vol.~59, no.~12, pp. 8402--8413, Dec. 2013.

\bibitem{out_sparse_moe}
N.~Shazeer, A.~Mirhoseini, K.~Maziarz, A.~Davis, Q.~V. Le, G.~E. Hinton, and J.~Dean, ``Outrageously large neural networks: The sparsely-gated mixture-of-experts layer,'' in \emph{Proc. Int. Conf. Learn. Represent. (ICLR)}, Toulon, France, Apr. 2017.

\bibitem{gshard_iclr}
D.~Lepikhin, H.~Lee, Y.~Xu, D.~Chen, O.~Firat, Y.~Huang, M.~Krikun, N.~Shazeer, and Z.~Chen, ``{GShard}: Scaling giant models with conditional computation and automatic sharding,'' in \emph{Proc. Int. Conf. Learn. Represent. (ICLR)}, May 2021.

\bibitem{huang-etal-2024-harder}
Q.~Huang, Z.~An, N.~Zhuang, M.~Tao, C.~Zhang, Y.~Jin, K.~Xu, K.~Xu, L.~Chen, S.~Huang, and Y.~Feng, ``Harder task needs more experts: Dynamic routing in {M}o{E} models,'' in \emph{Proc. Annu. Meet. Assoc. Comput. Linguist. (ACL)}, Bangkok, Thailand, Aug. 2024, pp. 12\,883--12\,895.

\bibitem{adapmoe_iccad}
S.~Zhong, L.~Liang, Y.~Wang, R.~Wang, R.~Huang, and M.~Li, ``{AdapMoE}: Adaptive sensitivity-based expert gating and management for efficient {MoE} inference,'' in \emph{Proc. IEEE/ACM Int. Conf. Comput.-Aided Des. (ICCAD)}, New York, USA, Oct. 2024.

\bibitem{10906629}
R.~Yi, L.~Guo, S.~Wei, A.~Zhou, S.~Wang, and M.~Xu, ``{EdgeMoE}: Empowering sparse large language models on mobile devices,'' \emph{{IEEE} Trans. Mobile Comput.}, vol.~24, no.~8, pp. 7059--7073, Aug. 2025.

\bibitem{he2024expertflow}
X.~He, S.~Zhang, Y.~Wang, H.~Yin, Z.~Zeng, S.~Shi, Z.~Tang, X.~Chu, I.~Tsang, and O.~Y. Soon, ``{ExpertFlow}: Optimized expert activation and token allocation for efficient mixture-of-experts inference,'' \emph{arXiv preprint arXiv:2410.17954}, 2024.

\bibitem{cai2024textitreadme}
R.~Cai, Y.~Ro, G.-W. Kim, P.~Wang, B.~E. Bejnordi, A.~Akella, and Z.~Wang, ``{Read-ME}: Refactorizing {LLM}s as router-decoupled mixture of experts with system co-design,'' in \emph{Proc. Neural Inf. Process. Syst. (NeurIPS)}, Vancouver, Canada, Dec. 2024.

\bibitem{qin2025optimal}
S.~Qin, H.~Wu, H.~Du, and K.~Huang, ``Optimal expert selection for distributed mixture-of-experts at the wireless edge,'' \emph{arXiv preprint arXiv:2503.13421}, 2025.

\bibitem{10077798}
C.~Zhao, Y.~Shi, J.~Liu, M.~Sheng, Y.~Su, and J.~Li, ``Congestion avoidance social-aware caching strategy in cache-enabled heterogeneous networks,'' \emph{{IEEE} Trans. Veh. Technol.}, vol.~72, no.~7, pp. 9293--9307, Jul. 2023.

\bibitem{10007839}
S.~Liu, Y.~Yu, X.~Lian, Y.~Feng, C.~She, P.~L. Yeoh, L.~Guo, B.~Vucetic, and Y.~Li, ``Dependent task scheduling and offloading for minimizing deadline violation ratio in mobile edge computing networks,'' \emph{{IEEE} J. Sel. Areas Commun.}, vol.~41, no.~2, pp. 538--554, Feb. 2023.

\bibitem{10496897}
H.~Liu, W.~Huang, D.~I. Kim, S.~Sun, Y.~Zeng, and S.~Feng, ``Towards efficient task offloading with dependency guarantees in vehicular edge networks through distributed deep reinforcement learning,'' \emph{{IEEE} Trans. Veh. Technol.}, vol.~73, no.~9, pp. 13\,665--13\,681, Sep. 2024.

\bibitem{9511632}
H.~Liao, X.~Li, D.~Guo, W.~Kang, and J.~Li, ``Dependency-aware application assigning and scheduling in edge computing,'' \emph{{IEEE} Internet Things J.}, vol.~9, no.~6, pp. 4451--4463, Mar. 2022.

\bibitem{10227271}
M.~Guo, X.~Hu, Y.~Chen, Y.~Yang, L.~Zhang, and L.~Chen, ``Joint scheduling and offloading schemes for multiple interdependent computation tasks in mobile edge computing,'' \emph{{IEEE} Internet Things J.}, vol.~11, no.~4, pp. 5718--5730, Feb. 2024.

\bibitem{qu2024trimcaching}
G.~Qu, Z.~Lin, Q.~Chen, J.~Li, F.~Liu, X.~Chen, and K.~Huang, ``Trimcaching: Parameter-sharing edge caching for {AI} model downloading,'' \emph{arXiv preprint arXiv:2404.14204}, 2024.

\bibitem{abnar2025parameters}
S.~Abnar, H.~Shah, D.~Busbridge, A.~M.~E. Ali, J.~Susskind, and V.~Thilak, ``Parameters vs {FLOPs}: Scaling laws for optimal sparsity for mixture-of-experts language models,'' in \emph{Proc. Int. Conf. on Mach. Learn. (ICML)}, Vancouver, Canada, Jul. 2025, pp. 204--230.

\bibitem{10024766}
W.~Xu, Z.~Yang, D.~W.~K. Ng, M.~Levorato, Y.~C. Eldar, and M.~Debbah, ``Edge learning for {B5G} networks with distributed signal processing: Semantic communication, edge computing, and wireless sensing,'' \emph{{IEEE} J. Sel. Topics Signal Process.}, vol.~17, no.~1, pp. 9--39, Jan. 2023.

\bibitem{10579139}
J.~Yao, Q.~Anthony, A.~Shafi, H.~Subramoni, and D.~K. DK~Panda, ``Exploiting inter-layer expert affinity for accelerating mixture-of-experts model inference,'' in \emph{Proc. IEEE Int. Parallel Distrib. Process. Symp. (IPDPS)}, San Francisco, USA, May 2024, pp. 915--925.

\bibitem{roller2021hash}
S.~Roller, S.~Sukhbaatar, J.~Weston \emph{et~al.}, ``Hash layers for large sparse models,'' \emph{Adv. Neural Inf. Process. Syst.}, vol.~34, pp. 17\,555--17\,566, 2021.

\bibitem{llama-moe}
T.~Zhu, X.~Qu, D.~Dong, J.~Ruan, J.~Tong, C.~He, and Y.~Cheng, ``{LLaMA-MoE}: Building mixture-of-experts from {LLaMA} with continual pre-training,'' in \emph{Proc. Conf. on Empir. Methods in Nat. Lang. Process. (EMNLP)}, Miami, USA, Nov. 2024, pp. 15\,913--15\,923.

\bibitem{DBLP:conf/esa/00010Z23}
X.~Sun, J.~Zhang, and Z.~Zhang, ``Simple deterministic approximation for submodular multiple knapsack problem,'' in \emph{Proc. Annu. Eur. Symp. Algorithms (ESA)}, vol. 274, Amsterdam, Netherlands, Dec. 2023, pp. 98:1--98:15.

\bibitem{axiotis_et_al}
K.~Axiotis and C.~Tzamos, ``{Capacitated Dynamic Programming: Faster Knapsack and Graph Algorithms},'' in \emph{Proc. Int. Colloq. Automata Lang. Program. (ICALP)}, vol. 132, Dagstuhl, Germany, Jul. 2019, pp. 19:1--19:13.

\bibitem{bai2018greed}
W.~Bai and J.~Bilmes, ``Greed is still good: maximizing monotone submodular+ supermodular {(BP)} functions,'' in \emph{Proc. Int. Conf. on Mach. Learn. (ICML)}, Vienna, Austria, Jul. 2018, pp. 304--313.

\bibitem{Lu2022NonSubmodular}
C.~Lu, W.~Yang, R.~Yang, and S.~Gao, ``Maximizing a non-decreasing non-submodular function subject to various types of constraints,'' \emph{J. Global Optim.}, vol.~83, pp. 727--751, 2022.

\bibitem{9210812}
M.~Chen, Z.~Yang, W.~Saad, C.~Yin, H.~V. Poor, and S.~Cui, ``A joint learning and communications framework for federated learning over wireless networks,'' \emph{{IEEE} Trans. Wireless Commun.}, vol.~20, no.~1, pp. 269--283, Jan. 2021.

\bibitem{3GPP_TR_25_951}
\BIBentryALTinterwordspacing
{3GPP}, ``{FDD Base Station (BS) classification},'' {3rd Generation Partnership Project (3GPP)}, Technical Report TR 25.951, Mar. 2024, release 18. [Online]. Available: \url{https://www.3gpp.org/ftp/Specs/archive/25_series/25.951/}
\BIBentrySTDinterwordspacing

\bibitem{8463568}
S.~He, C.~Qi, Y.~Huang, Q.~Hou, and A.~Nallanathan, ``Two-level transmission scheme for cache-enabled fog radio access networks,'' \emph{{IEEE} Trans. Commun.}, vol.~67, no.~1, pp. 445--456, Jan. 2019.

\bibitem{goyal2017making}
Y.~Goyal, T.~Khot, D.~Summers-Stay, D.~Batra, and D.~Parikh, ``Making the v in vqa matter: Elevating the role of image understanding in visual question answering,'' in \emph{Proc. IEEE Conf. Comput. Vis. Pattern Recognit. (CVPR)}, Honolulu, US, Jul. 2017, pp. 6904--6913.

\end{thebibliography}
	
	% that's all folks
\end{document}